\def\eqref#1{equation~\ref{#1}}
\def\1{\bm{1}}
\def\ra{{\textnormal{a}}}
\def\rx{{\textnormal{x}}}
\def\rva{{\mathbf{a}}}
\def\erva{{\textnormal{a}}}
\def\ervx{{\textnormal{x}}}
\def\rmA{{\mathbf{A}}}
\def\vmu{{\bm{\mu}}}
\def\vtheta{{\bm{\theta}}}
\def\va{{\bm{a}}}
\def\ve{{\bm{e}}}
\def\vx{{\bm{x}}}
\def\eva{{a}}
\def\mA{{\bm{A}}}
\def\mH{{\bm{H}}}
\def\mI{{\bm{I}}}
\def\mJ{{\bm{J}}}
\def\mX{{\bm{X}}}
\def\mSigma{{\bm{\Sigma}}}
\DeclareMathAlphabet{\mathsfit}{\encodingdefault}{\sfdefault}{m}{sl}
\SetMathAlphabet{\mathsfit}{bold}{\encodingdefault}{\sfdefault}{bx}{n}
\newcommand{\tens}[1]{\bm{\mathsfit{#1}}}
\def\tA{{\tens{A}}}
\def\tX{{\tens{X}}}
\def\gG{{\mathcal{G}}}
\def\sA{{\mathbb{A}}}
\def\sB{{\mathbb{B}}}
\def\sS{{\mathbb{S}}}
\def\emA{{A}}
\newcommand{\etens}[1]{\mathsfit{#1}}
\def\etA{{\etens{A}}}
\newcommand{\E}{\mathbb{E}}
\newcommand{\R}{\mathbb{R}}
\newcommand{\KL}{D_{\mathrm{KL}}}
\newcommand{\Var}{\mathrm{Var}}
\newcommand{\Cov}{\mathrm{Cov}}
\newcommand{\normltwo}{L^2}
\newcommand{\normlp}{L^p}
\newcommand{\parents}{Pa} 
\DeclareMathOperator*{\argmax}{arg\,max}
\newcommand{\red}{\textcolor{red}}
\definecolor{lucky}{RGB}{120, 130, 150}
\newcommand\fanchen[1]{\textcolor{lucky}{[Fanchen: #1]}}
\definecolor{newgreen}{rgb}{0.0, 0.5, 0.0}
\definecolor{newred}{rgb}{0.81,0.1,0.26}
\definecolor{babyblue}{rgb}{0.54, 0.81, 0.94}
\definecolor{negcol}{rgb}{0.96, 0.76, 0.76}
\definecolor{poscol}{rgb}{0.63, 0.79, 0.95}
\definecolor{sunwoogreen}{rgb}{0.66, 0.89, 0.63}
\definecolor{sunwoored}{rgb}{1.0, 1.0, 0.0}
\definecolor{sunwoogreentwo}{rgb}{0.53, 0.81, 0.98}
\definecolor{dodgerblue}{rgb}{0.12, 0.56, 1.0}
\definecolor{crimson}{rgb}{0.86, 0.08, 0.24}
\definecolor{limegreen}{rgb}{0.2, 0.8, 0.2}
\definecolor{backredcolor}{rgb}{0.91, 0.45, 0.32}
\newcommand{\best}{\cellcolor{sunwoogreen}}  
\newcommand{\secb}{\cellcolor{sunwoored}}  
\newcommand{\yes}{\textcolor{newgreen}{\textbf{\ding{52}}}}
\newcommand{\no}{\textcolor{newred}{\textbf{\ding{55}}}}
\newcommand{\std}{\scriptsize}
\newtheorem{definition}{\textbf{Definition}}
\newtheorem{theorem}{\textbf{Theorem}}
\newtheorem{lemma}{\textbf{Lemma}}
\newtheorem{assumption}{\textbf{Assumption}}
\newtheorem{proposition}{\textbf{Proposition}}
\newcommand{\vecc}{\boldsymbol{c}}
\newcommand{\vech}{\boldsymbol{h}}
\newcommand{\vecm}{\boldsymbol{m}}
\newcommand{\vecq}{\boldsymbol{q}}
\newcommand{\vecx}{\boldsymbol{x}}
\newcommand{\vecz}{\boldsymbol{z}}
\newcommand{\vecmu}{\boldsymbol{\mu}}
\newcommand{\matI}{\mathbf{I}}
\newcommand{\calE}{\mathcal{E}}
\newcommand{\calF}{\mathcal{F}}
\newcommand{\calG}{\mathcal{G}}
\newcommand{\calL}{\mathcal{L}}
\newcommand{\calN}{\mathcal{N}}
\newcommand{\calO}{\mathcal{O}}
\newcommand{\calQ}{\mathcal{Q}}
\newcommand{\calV}{\mathcal{V}}
\newcommand\mathcircled[1]{%
  \mathpalette\@mathcircled{#1}%
}
\newcommand\@mathcircled[2]{%
  \tikz[baseline=(math.base)] \node[draw,circle,inner sep=1pt] (math) {$\m@th#1#2$};%
}
\title{HypeBoy: Generative Self-Supervised Representation Learning on Hypergraphs}
\author{Sunwoo Kim\footnotemark[2]~~~Shinhwan Kang\footnotemark[2]~~~Fanchen Bu\footnotemark[3]~~~Soo Yong Lee\footnotemark[2]~~~Jaemin Yoo\footnotemark[3]~~~Kijung Shin\footnotemark[2]~~\footnotemark[3]\\
\footnotemark[2]~~Kim Jaechul Graduate School of AI, \footnotemark[3]~~School of Electrical Engineering\\
Korea Advanced Institute of Science and Technology (KAIST)\\
\{kswoo97, shinhwan.kang, boqvezen97, syleetolow, jaemin, kijungs\}@kaist.ac.kr
}
\definecolor{darkyellow}{rgb}{1.0, 0.8, 0.0} 
\definecolor{darkgreen}{rgb}{0.0, 0.7, 0.0} 
\newtheorem*{remark}{Remark}
\newcommand{\method}{\textsc{HypeBoy}\xspace}
\newcommand{\task}{{hyperedge filling}\xspace}
  \providecommand\BibTeX{{%
    \normalfont B\kern-0.5em{\scshape i\kern-0.25em b}\kern-0.8em\TeX}}}
\newcommand{\vspaceOverSectionName}[0]{\vspace{-1.5mm plus 1.5mm minus 1.5mm}}
\newcommand{\vspaceUnderSectionName}[0]{\vspace{-1.5mm plus 1.5mm minus 1.5mm}}
\begin{document}

\maketitle

\begin{abstract}

Hypergraphs are marked by complex topology, expressing higher-order interactions among multiple nodes with hyperedges, and
better capturing the topology is essential for effective representation learning.
Recent advances in generative self-supervised learning (SSL) suggest that hypergraph neural networks learned from generative self-supervision have the potential to effectively encode the complex hypergraph topology.
Designing a generative SSL strategy for hypergraphs, however, is not straightforward.
Questions remain with regard to its generative SSL task, connection to downstream tasks, and empirical properties of learned representations.
In light of the promises and challenges, we propose a novel generative SSL strategy for hypergraphs. 
We first formulate a generative SSL task on hypergraphs, \textit{hyperedge filling}, and highlight its theoretical connection to node classification.
{Based on} the generative SSL task, we propose a hypergraph SSL method, \method.
\method~learns {effective} general-purpose hypergraph representations, outperforming 16 baseline methods across 11 benchmark datasets.
Code and datasets are available at \url{https://github.com/kswoo97/hypeboy}.
\end{abstract}

\vspaceOverSectionName
\section{Introduction}\label{sec:intro}
{
    Many real-world interactions occur among multiple entities, such as online group discussions on social media, academic collaboration of researchers, and joint item purchases~\citep{benson2018simplicial,kim2023transitive,lee2024survey}.
Representing such higher-order interactions with an ordinary graph can cause topological information loss~{\citep{dong2020hnhn,yoon2020much}.}
Thus, hypergraphs have emerged as an effective tool for representing high-order interactions in various domains, including recommender systems~\citep{xia2022self, yu2021self}, financial analysis~\citep{sawhney2021stock, sawhney2020spatiotemporal}, and drug analysis~\citep{ruan2021exploring, saifuddin2023hygnn}. 

For representation learning on such a complex topology, hypergraph neural networks (HNNs) have been developed. 
Training HNNs has primarily relied on task-related label supervision, such as external node labels.
However, simply learning from external supervision may limit HNNs {in} capturing more complex patterns in hypergraph topology.
Incorporating self-supervision related to topology, hence, can substantially improve HNNs' representation learning.

Particularly, \textit{generative self-supervised learning} (SSL) holds promise for effective hypergraph representation learning.
Generative SSL has recently shown remarkable success in encoding complex patterns in multiple domains.
GPT~\citep{openai2023gpt4} in natural language processing and
Masked Autoencoder~\citep{he2022masked} in computer vision are notable examples.
With generative self-supervision, {HNNs may encode complex topology more effectively, leading to improved representation learning.}

However, designing a generative SSL strategy for hypergraphs is not straightforward.
First, questions remain unanswered for generative \textit{SSL tasks}:
\textbf{(1.a)} What should be the target generative SSL task for HNNs?
\textbf{(1.b)} How does the generative SSL task relate to downstream tasks (e.g., node classification with external labels)?
Second, even after determining the task, details of the \textit{SSL method} (based on the task) remain unspecified.
\textbf{(2.a)} What empirical properties should the method aim to satisfy?
\textbf{(2.b)} How can the method achieve effective general-purpose hypergraph representations?
Moreover, if not carefully designed, the generative SSL strategy can suffer from severe computational burden, 
as the number of potential hyperedges grows exponentially with the number of nodes.

In light of {these} promises and challenges, we systematically investigate and propose a hypergraph generative SSL strategy.
Our contributions and the rest of the paper are organized as follows:  

\begin{itemize}[leftmargin=*]
\item \textbf{SSL Task}:
We formulate a generative SSL task, \textit{\task}, for hypergraph representation learning. 
Notably, we {establish} its theoretical connections to node classification (Section~\ref{sec:task}).
\item \textbf{SSL Method}:
Based on the hyperedge filling task, we propose \method, a novel hypergraph SSL method.
\method~is designed to satisfy desirable properties of hypergraph SSL, 
mitigating (1) over-emphasized proximity, (2) dimensional collapse, and (3) non-uniformity/-alignment of learned representations
(Section~\ref{sec:method}).
\item \textbf{Experiments}:
We demonstrate that \method~learns {effective} general-purpose hypergraph representations.
It significantly outperforms SSL-based HNNs in both node classification and hyperedge prediction across 11 benchmark hypergraph datasets (Section~\ref{sec:exp}; code and datasets are available at \url{https://github.com/kswoo97/hypeboy}). 
\end{itemize}

}
\vspaceUnderSectionName

\vspaceOverSectionName
\section{Related Work}\label{sec:relatedwork}
{

In this section, we review the literature on hypergraph neural networks and self-supervised learning.

\textbf{Hypergraph neural networks (HNNs).}
HNNs learn hypergraph representations.
Converting hyperedges into cliques (fully connected subgraphs) allows graph neural networks to be applied to hypergraphs ~\citep{feng2019hypergraph, yadati2019hypergcn}.
Such conversion, however, may result in topological information loss, since high-order interactions (hyperedges) are reduced to pair-wise interactions 
(edges).
As such, most HNNs pass messages through hyperedges to encode hypergraphs. Some notable examples include 
HNHN~\citep{dong2020hnhn} with hyperedge encoders,
UniGNN~\citep{huang2021unignn} with generalized message passing functions for graphs and hypergraphs,
AllSet~\citep{chien2021you} with set encoders, 
ED-HNN~\citep{wang2022equivariant} with permutation-equivariant diffusion operators, 
and PhenomNN~\citep{wang2023hypergraph} with hypergraph-regularized energy functions.

\textbf{Self-supervised learning (SSL).}
SSL strategies aim to learn representation from the input data itself, without relying on external labels.
They can largely be categorized into contrastive or generative types.
Contrastive SSL aims to maximize the agreement between data obtained from diverse views~\citep{chen2020simple, grill2020bootstrap,you2020graph}.
Generative SSL, on the other hand, predicts or reconstructs parts of the input data. 
The success of generative SSL demonstrates its strengths in learning complex input data, in domains including natural language processing~\citep{devlin2018bert, openai2023gpt4} and computer vision~\citep{he2022masked, tong2022videomae}.
Recently, generative SSL for graphs has gained significant attention, with their main focuses on reconstructing edges~\citep{tan2023s2gae, li2023maskgae} or node features~\citep{hou2022graphmae, hou2023graphmae2}.

\textbf{Self-supervised learning on hypergraphs.}
The interest in SSL for hypergraphs is on the rise.
Early hypergraph SSL strategies mainly targeted specific downstream tasks, such as group \citep{zhang2021double} and session-based recommendation~\citep{xia2022self}.
Recent ones aim to obtain general-purpose representation.
TriCL~\citep{lee2023m} utilizes a tri-directional contrastive loss, which consists of node-, hyperedge-, and membership-level contrast.
\citet{kim2023datasets} enhances the scalability of TriCL with a partitioning technique.
HyperGCL~\citep{wei2022augmentations} utilizes neural networks to generate views for contrast and empirically demonstrates its superiority in node classification, outperforming rule-based view generation methods.
HyperGRL~\citep{du2022self} is a generative SSL method sharing similar spirits with our approach; however, the underlying motivations and methodological designs are markedly distinct.
We provide a detailed comparison in Appendix~\ref{sec:hypergrl}. 

}
\vspaceUnderSectionName

\vspaceUnderSectionName
\section{Proposed Task and Theoretical Analysis}\label{sec:task}
{
    \begin{figure*}[t]
    \centering
    \subfigure[Hyperedge filling task]{\includegraphics[width=0.324\textwidth]{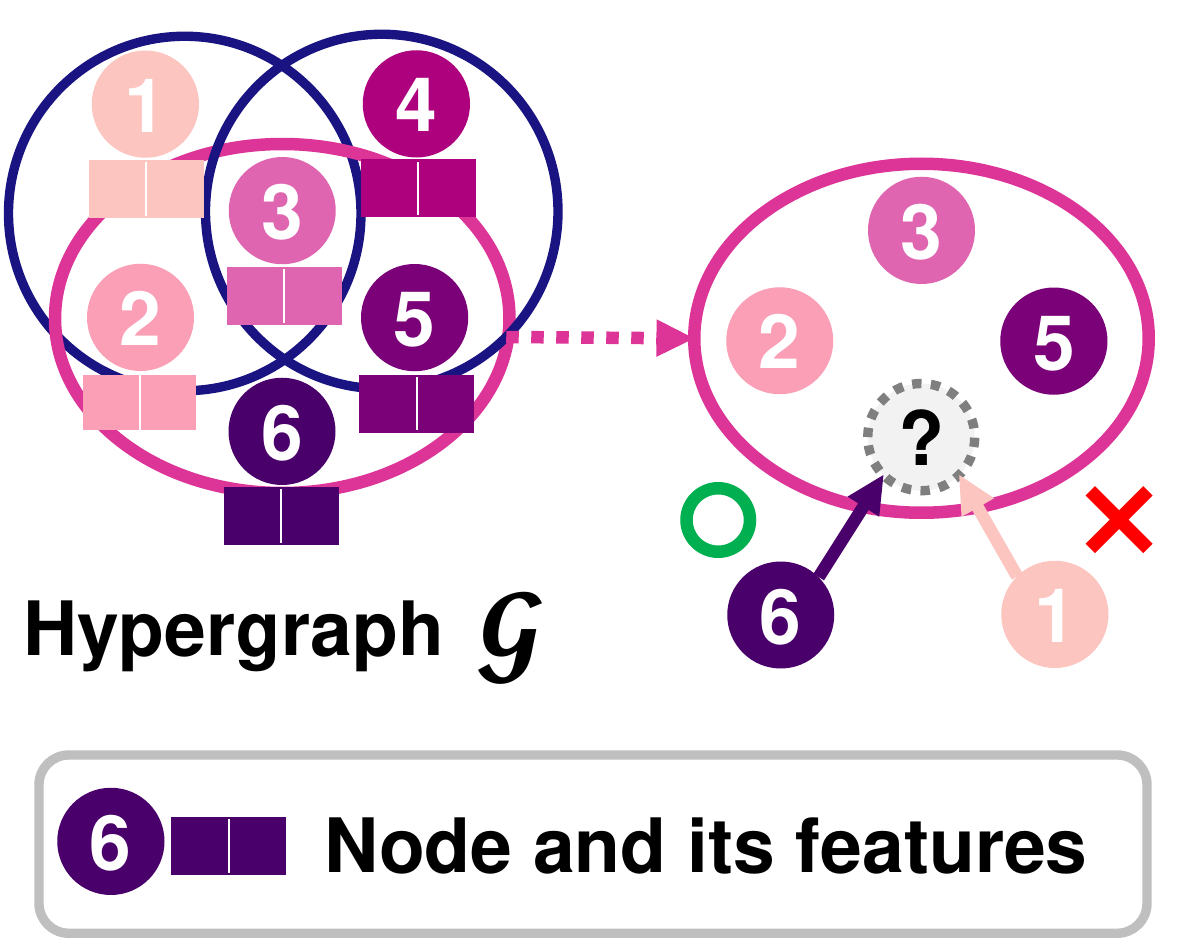}}
    \hspace{5mm}
    \subfigure[Visual illustration of \method]{\includegraphics[width=0.558\textwidth]{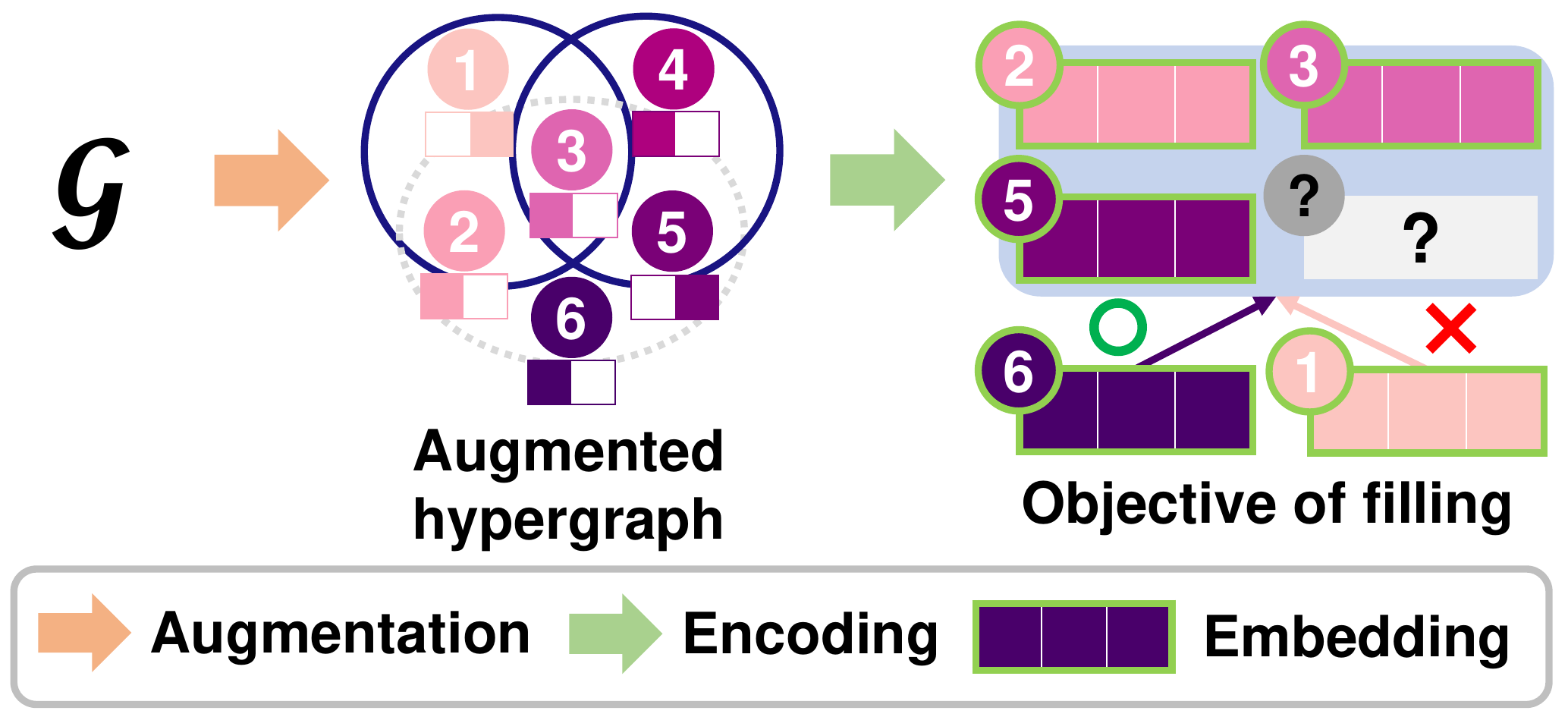}}
    \caption{\label{fig:method} Overview of (a) the \task task and (b) \method, our proposed SSL method based on the task. 
    The goal of the task is to {find} the missing node for a given query subset (i.e., the other nodes in a hyperedge). 
    \method~trains HNNs aiming to correctly predict the missing node.}
    \vspace{1mm} 
\end{figure*}

In this section, after providing some preliminaries, we formulate {\textit{hyperedge filling}, our} generative SSL task on hypergraphs.
Then, we establish a theoretical connection between hyperedge filling and node classification, which is a {widely}-considered important downstream task.

\textbf{Preliminaries.}
A hypergraph $\calG = (\calV, \calE)$ is defined by a node set $\calV$ and a hyperedge set $\calE$.
Each hyperedge $e_{j} \in \calE$ is a non-empty set of nodes (i.e., $\emptyset \neq e_{j} \subseteq \calV
, \forall e_{j} \in \calE$).
Each node $v_{i} \in \calV $ is equipped with a feature vector $\vecx_{i} \in \mathbb{R}^{d}$, and $\mathbf{X} \in \mathbb{R}^{\vert \calV\vert \times d}$ denotes the node feature matrix where the $i$-th row $\mathbf{X}_{i}$ corresponds to $\vecx_{i}$.

A hypergraph neural network (HNN) $f_{\theta}$ is a function that receives a node feature matrix $\mathbf{X}$ and a set of hyperedges $\calE$ as inputs to return node embeddings $\mathbf{Z} \in \mathbb{R}^{\vert \calV\vert \times k}$ (i.e., $\mathbf{Z} = f_{\theta}(\mathbf{X}, \calE)$, where $\theta$ is a set of learnable parameters)
\footnote{In this paper, we assume an HNN returns only vector representations of nodes unless otherwise stated, while we acknowledge that some HNNs return embeddings of hyperedges as well.}.

\subsection{Proposed task: Hyperedge filling}\label{sec:taskdef}

We formulate \textit{\task}, a generative SSL task for hypergraph representation learning.
We first define the task and discuss the superiority of the hyperedge filling task over alternatives.
An illustration of the hyperedge filling task is provided in Figure~\ref{fig:method}(a).

\textbf{Task definition.}
Given a set of nodes, \task aims to predict a node that is likely to form a hyperedge together. 
Specifically, for each hyperedge $e_{j} \in \calE$, we divide it into a (missing) node $v_{i} \in e_{j}$ and a (query) subset $q_{ij} = e_{j} \setminus \{v_{i}\}$.
The target of the task is to correctly fill the missing node $v_{i}$ for each given subset $q_{ij}$.
This can be formalized by maximizing the probability of $v_{i}$ {correctly completing} $q_{ij}$, which is denoted as $p_{(\mathbf{X}, \calE, \Theta)}\left(v_{i} \;\middle|\;  q_{ij}\right)$, where $\Theta$ is a set of parameters we aim to optimize in this task. 
We will further elaborate on our design of $p_{(\mathbf{X}, \calE, \Theta)}(\cdot)$ in Section~\ref{sec:loss}.

\textbf{Advantage over alternatives.}
Potential alternatives include naive extensions of generative SSL tasks for ordinary graphs: (a) generating hyperedges from scratch and (b) classifying given sets of nodes into {hyperedges and non-hyperedges.} 
Compared to (a), by shifting the focus of prediction from the set level (hyperedge itself) to the node level, the \task task reduces the prediction space from computationally prohibitive $O(2^{\vert \calV \vert})$ to affordable $O(\vert \calV \vert)$.
Compared to (b), the \task task provides richer and more diversified generative SSL signals. Specifically, {for each hyperedge} $e_{j}$, our task offers $\vert e_{j}\vert$ distinct node-subset combinations that can serve as SSL signals. 
In contrast, {classifying} the mere existence of $e_{j}$ yields a singular and, thus, limited signal.

\vspace{-2mm}

\subsection{Theoretical results on hyperedge filling}\label{sec:theorymethod}

To demonstrate the {effectiveness} of hyperedge filling as a general SSL task, we present its theoretical connection to node classification.
In essence, we demonstrate that node representations optimized for the \task task can improve node classification accuracy.

\subsubsection{Basic setting}
First, we assume a data model of a hypergraph $\calG = (\calV, \calE)$ where (1) each node belongs to a single class, (2) the features of each node are generated from a Gaussian distribution, and (3) each hyperedge is generated according to a given hyperedge affinity parameter $\mathscr{P}\in [0, 1]$.

\begin{assumption}[Node classes and features]\label{assump:featurelabel}
    Assume that there are $2N$ nodes and node classes $C_{1}$ and $C_{0}$ such that $C_{1} \cup C_{0} = \calV,  C_{1} \cap C_{0} = \emptyset$, and $\vert C_{1}\vert = \vert C_{0}\vert = N$.
    Each node feature vector $\vecx_{i}$ is independently generated from $\calN(\vecx ; \vecmu_{1} , \mSigma)$ if $v_{i} \in C_{1}$, and $\calN(\vecx ;\vecmu_{0} , \mSigma)$ if $v_{i} \in C_{0}$.
    For simplicity, we assume $\vecmu_{1} = (0.5)^{d}_{i=1}$, $\vecmu_{0} = (-0.5)^{d}_{i=1}$, and $\mSigma = \matI$, where $\matI$ is the $d$-by-$d$ identity matrix.
\end{assumption}
\vspace{2mm}

\begin{assumption}[Hypergraph topology]\label{assump:topology}
    Assume that the number of hyperedges and the size of each hyperedge are given, and
    there is no {singleton hyperedge} (i.e., $\vert e_{j}\vert \geq 2, \forall e_{j} \in \calE$).
    Let $B$ denote the binomial distribution.
    Each hyperedge $e_{j} \in \calE$ has a {class} membership $c_{j} \in \{0, 1\}$, where 
    $c_{j} \sim B(1, 0.5)$.
    {Given the number $|e_{j}|$ of nodes and the class $c_{j} \in \{0, 1\}$ of a hyperedge, the number of 
    its members belonging to $C_{1}$ {(i.e., $\vert e_{j} \cap C_{1}\vert$)} $\sim B(\vert e_{j}\vert, \mathscr{P}^{c_{j}}(1-\mathscr{P})^{1-c_{j}})$.}
\end{assumption}

Note that the tendency of each hyperedge containing nodes of the same class is symmetric about $\mathscr{P} = 0.5$ under the binary class setting.
In addition, when $\mathscr{P}$ approaches $1$, each hyperedge $e_j$ is more likely to contain nodes of the same class (spec., node class $c_j$); 
as $\mathscr{P}$ approaches $0$, each hyperedge $e_j$ is {again} more likely to contain nodes of the same class (spec., node class $1 - c_j$).

Second, we describe how node representations are updated via the \task task.
In this theoretical analysis, we define the updating process of node representations as follows: 
\begin{enumerate}[leftmargin=*, label=(\textbf{F\arabic*})]
    \item Filling probability $p_{(\mathbf{X}, \calE, \Theta)}\left(\cdot\right)$ is defined {on each node-subset pair} as follows:
    \begin{equation}\label{eq:theoryprob}
        p_{(\mathbf{X}, \calE, \Theta)}\left(v_{i} \;\middle|\;  q_{ij}\right) := {\frac{\exp(\vecx_{i}^{T}(\sum_{v_{k} \in q_{ij}}\vecx_{k}))}{\sum_{v_{t} \in \calV}\exp(\vecx_{t}^{T}(\sum_{v_{k} \in q_{ij}}\vecx_{k}))}}.
    \end{equation}\label{item:objective}
    \item Node representation $\vecz_{i}$ is obtained via gradient descent with respect to $\vecx_{i}$ from $\mathcal{L}$, which is the negative log-likelihood of Eq.~(\ref{eq:theoryprob}), ({i.e., $\mathcal{L} = - \log p_{(\mathbf{X}, \calE, \Theta)}\left(v_{i} \;\middle|\;  q_{ij}\right)$).} {For ease of analysis, we assume} $\vecz_{i} = \vecx_{i} - \gamma \nabla_{\vecx_{i}}\mathcal{L}$, where \smash{$\gamma \in \mathbb{R}^{+}$} is a fixed  constant.\label{item:update}
\end{enumerate}
At last, we assume a Gaussian naive Bayes classifier $\calF$~\citep{bishop:2006:PRML}, which is defined as:
\begin{equation}\label{def:classifier}
    \calF(\vecx_{i}) = \argmax_{k \in \{0, 1\}}f(\vecx_{i} ; \vecmu_{k}, \mathbf{I}), \text{ where $f$ is the {probability density function} of $\mathcal{N}(\vecx ; \vecmu_{k}, \mathbf{I})$.}
\end{equation}

\subsubsection{Hyperedge filling helps node classification}

Our goal is to show that for accurate classification of $v_{i}$, the representation $\vecz_{i}$, which is obtained for hyperedge filling as described in \ref{item:objective} and \ref{item:update}, is more effective than the original feature $\vecx_{i}$.
First, we assume a node $v_{i}$ belonging to the class $C_{1}$ (i.e., $v_{i} \in C_{1}$), and we later generalize the result to $C_{0}$.
The effectiveness of an original feature is defined as the expected accuracy of a classifier $\calF$ with $\vecx_{i}$ (i.e., $\mathbb{E}_{\vecx}[\1_{{\calF(\vecx_{i}) = 1}}] := P_{\vecx}\left(f(\vecx_{i} ; \vecmu_{1}, \mathbf{I}) > f(\vecx_{i} ; \vecmu_{0}, \mathbf{I})\right)$).
Similarly, that with a derived representation is defined as $\mathbb{E}_{\vecz}[\1_{{\calF(\vecz_{i}) = 1}}] = P_{\vecz}\left(f(\vecz_{i} ; \vecmu_{1}, \mathbf{I}) > f(\vecz_{i} ; \vecmu_{0}, \mathbf{I})\right)$.
Below, we show that the effectiveness of a derived representation $\vecz_{i}$ is greater than that of an original feature $\vecx_{i}$ under a certain condition
(Theorem~\ref{thm:mainthm}).

\begin{theorem}[Improvement in effectiveness]\label{thm:mainthm}
Assume a hyperedge $e_{j}$ s.t. $e_{j} \cap C_{1} \neq \emptyset$ and node features $\mathbf{X}$ that are generated under Assumption~\ref{assump:featurelabel}. 
For any node $v_{i} \in e_{j} \cap C_{1}$, the following holds:
\begin{equation}\label{eq:thm1}
\left(\vec{\mathbf{1}}^{T}\sum\nolimits_{v_{k} \in q_{ij}}\vecx_{k} > 0\right) \Rightarrow \mathbb{E}_{\vecz}[\1_{\calF(\vecz_{i}) = 1}] > \mathbb{E}_{\vecx}[\1_{\calF(\vecx_{i}) = 1}], \text{ where $\vec{\mathbf{1}}$ denotes $(1)^{d}_{k=1}$}.
\end{equation}
\end{theorem}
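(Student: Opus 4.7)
The plan is to reduce the Gaussian naive Bayes rule to a linear test, to express $\vecz_{i}$ as $\vecx_{i}$ plus a positive scalar multiple of the query-subset sum $\vecs := \sum_{v_{k}\in q_{ij}}\vecx_{k}$, and then to lift a pointwise event-containment to a strict probability inequality. First, since $\vecmu_{1}=-\vecmu_{0}=(0.5)^{d}_{k=1}$, we have $\|\vecmu_{1}\|^{2}=\|\vecmu_{0}\|^{2}$ and $\vecmu_{1}-\vecmu_{0}=\vec{\mathbf{1}}$. Substituting into $f(\vecx;\vecmu_{k},\matI)\propto\exp(-\|\vecx-\vecmu_{k}\|^{2}/2)$, the log-ratio $\log (f(\vecx;\vecmu_{1},\matI)/f(\vecx;\vecmu_{0},\matI))$ collapses to $\vec{\mathbf{1}}^{T}\vecx$, giving
\[
\calF(\vecx)=1 \iff \vec{\mathbf{1}}^{T}\vecx>0.
\]
Both sides of~(\ref{eq:thm1}) therefore measure the probability that the relevant vector has a positive coordinate sum.

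Next, I differentiate the loss $\mathcal{L}=-\log p_{(\mathbf{X},\calE,\Theta)}(v_{i}\mid q_{ij})$. From Eq.~(\ref{eq:theoryprob}), $\mathcal{L}=-\vecx_{i}^{T}\vecs+\log\sum_{v_{t}\in\calV}\exp(\vecx_{t}^{T}\vecs)$, so $\nabla_{\vecx_{i}}\mathcal{L}=-\bigl(1-p_{(\mathbf{X},\calE,\Theta)}(v_{i}\mid q_{ij})\bigr)\,\vecs$. Hence
\[
\vecz_{i}=\vecx_{i}+\alpha_{i}\vecs, \qquad \alpha_{i}:=\gamma\,\bigl(1-p_{(\mathbf{X},\calE,\Theta)}(v_{i}\mid q_{ij})\bigr),
\]
and since the softmax probability lies strictly in $(0,1)$ whenever $|\calV|\geq 2$, we have $\alpha_{i}>0$. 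Taking the inner product with $\vec{\mathbf{1}}$ yields the key identity $\vec{\mathbf{1}}^{T}\vecz_{i}=\vec{\mathbf{1}}^{T}\vecx_{i}+\alpha_{i}\,\vec{\mathbf{1}}^{T}\vecs$.

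Under the hypothesis $\vec{\mathbf{1}}^{T}\vecs>0$, the additive term $\alpha_{i}\,\vec{\mathbf{1}}^{T}\vecs$ is strictly positive on every realization, so pointwise $\{\calF(\vecx_{i})=1\}\subseteq\{\calF(\vecz_{i})=1\}$, which immediately yields the weak inequality $\mathbb{E}_{\vecz}[\1_{\calF(\vecz_{i})=1}]\geq \mathbb{E}_{\vecx}[\1_{\calF(\vecx_{i})=1}]$. To upgrade this to strict inequality, I would exhibit a positive-probability set on which $\calF(\vecx_{i})=0$ but $\calF(\vecz_{i})=1$, namely realizations with $-\alpha_{i}\,\vec{\mathbf{1}}^{T}\vecs<\vec{\mathbf{1}}^{T}\vecx_{i}\leq 0$. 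The main obstacle is that $\alpha_{i}$ itself depends on $\vecx_{i}$ and on all other node features, so it cannot be treated as a constant; my plan is to condition on a bounded, high-probability event on which $\alpha_{i}$ admits a uniform lower bound $\alpha_{*}>0$, and then exploit the continuity of the Gaussian density of $\vecx_{i}$ to give the slab $\{-\alpha_{*}\,\vec{\mathbf{1}}^{T}\vecs<\vec{\mathbf{1}}^{T}\vecx_{i}\leq 0\}$ strictly positive measure, which is exactly what is needed to turn the pointwise containment into a strict inequality in probability.
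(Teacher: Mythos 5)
Your proposal follows essentially the same route as the paper's proof: reduce the naive Bayes classifier to the linear test $\vec{\mathbf{1}}^{T}\vecx>0$, compute the softmax gradient to get $\vecz_{i}=\vecx_{i}+\gamma(1-f(\vecx))\vecs$ with a strictly positive coefficient, and identify the gain as the probability of the slab $-\gamma(1-f(\vecx))\vec{\mathbf{1}}^{T}\vecs<\vec{\mathbf{1}}^{T}\vecx_{i}\le 0$. The only difference is that you are more careful than the paper about why that slab has strictly positive probability despite $f(\vecx)$ depending on $\vecx_{i}$ (the paper simply asserts the gain term is positive), and your compactness/continuity argument for a uniform lower bound on the coefficient closes that step correctly.
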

\vspace{-3mm}
\begin{proof}
    {A} full proof is provided in Appendix~\ref{sec:beneficial}.
\end{proof}
Theorem~\ref{thm:mainthm} states that when a certain condition ({specified in the parentheses} in Eq. (\ref{eq:thm1})) is met, the effectiveness of $\vecz_{i}$ is greater than {that of} $\vecx_{i}$.
This result implies that node representations, when refined using the objective function associated with the hyperedge filling task, are more proficient in performing accurate node classification compared to the original node features. 

While the finding in Theorem~\ref{thm:mainthm} demonstrates the usefulness of the hyperedge filling task in node classification, its validity relies on the condition ({in the parentheses} in Eq. (\ref{eq:thm1})). 
We further analyze the probability that the condition is met by 
stochastic $G$ under Assumptions~\ref{assump:featurelabel} and \ref{assump:topology} for a given $\mathscr{P}$.

\begin{theorem}[Realization of condition]\label{thm:mainthm2}
Assume node features $\mathbf{X}$ and a hyperedge $e_{j}$ s.t. (i) generated under Assumption~\ref{assump:featurelabel} and \ref{assump:topology} respectively, and (ii) $e_{j} \cap C_{1} \neq \emptyset$.
For any $q_{ij}$ where $v_{i} \in e_{j} \cap C_{1}$, the following holds: 
\begin{enumerate}
    \item $P_{\vecx, e}\left(\mathbf{1}^{T}\sum_{v_{k} \in q_{ij}}\vecx_{k} > 0 \;\middle|\; \mathscr{P} \right) \geq 0.5, \forall \mathscr{P} \in [0, 1]$. 
    \item $P_{\vecx, e}\left(\mathbf{1}^{T}\sum_{v_{k} \in q_{ij}}\vecx_{k} > 0 \;\middle|\; \mathscr{P} \right)$ is a strictly decreasing function w.r.t. $\mathscr{P} \in [0, 0.5]$ and strictly increasing function w.r.t. $\mathscr{P} \in [0.5, 1]$.
\end{enumerate}
\end{theorem}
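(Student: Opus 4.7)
The plan is to express $h(\mathscr{P}) := P_{\vecx, e}\bigl(\vec{\mathbf{1}}^{T}\textstyle\sum_{v_{k} \in q_{ij}}\vecx_{k} > 0 \,\bigm|\, \mathscr{P}\bigr)$ in a closed form that makes both claims transparent, by conditioning on node classes, applying two symmetry identities, and then reading off monotonicity from the resulting formula. The most delicate step is the very first one: Assumption~\ref{assump:topology} only pins down the count-level law of $\vert e_{j} \cap C_{1}\vert$ and not the per-node inclusion mechanism, so I will need an auxiliary exchangeability step to turn the count-level assumption into a per-node posterior on $c_{j}$. Once that posterior is in hand the rest is essentially bookkeeping.

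First, I would derive the conditional law of $M := \vert q_{ij} \cap C_{1}\vert$ under the event $\{v_{i} \in e_{j} \cap C_{1}\}$. Exchangeability of the $N$ nodes inside $C_{1}$ in the generative model gives $P(v_{i} \in e_{j} \mid v_{i} \in C_{1}, c_{j} = c) = \vert e_{j}\vert\, p_{c}/N$ with $p_{c} := \mathscr{P}^{c}(1-\mathscr{P})^{1-c}$, so Bayes' rule against the prior $c_{j} \sim B(1, 0.5)$ yields the \emph{asymmetric} posterior $P(c_{j} = 1 \mid v_{i} \in e_{j} \cap C_{1}) = \mathscr{P}$. Conditional on $c_{j}$, the remaining $n := \vert e_{j}\vert - 1$ members of $q_{ij}$ are i.i.d.\ in $C_{1}$ with probability $p_{c_{j}}$, so $M \mid c_{j} \sim B(n, p_{c_{j}})$. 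Writing $T(m, p) := \binom{n}{m} p^{m}(1-p)^{n-m}$, marginalization over $c_{j}$ gives
\begin{equation*}
P(M = m \mid \mathscr{P}) \;=\; \mathscr{P}\cdot T(m, \mathscr{P}) + (1-\mathscr{P})\cdot T(m, 1-\mathscr{P}).
\end{equation*}

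Next, conditioning on $M$: by Assumption~\ref{assump:featurelabel} each $\vec{\mathbf{1}}^{T}\vecx_{k}$ is $\mathcal{N}(\pm d/2,\, d)$, so $S := \vec{\mathbf{1}}^{T}\sum_{v_{k} \in q_{ij}}\vecx_{k}$ given $M = m$ is $\mathcal{N}(d(2m-n)/2,\, dn)$ and $P(S > 0 \mid M = m) = \phi(m) := \Phi\bigl(\sqrt{d}\,(2m-n)/(2\sqrt{n})\bigr)$. Two identities do the heavy lifting: $\phi(m) + \phi(n-m) = 1$ (from $\Phi(x) + \Phi(-x) = 1$) and $T(m, 1-\mathscr{P}) = T(n-m, \mathscr{P})$. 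Substituting $m \mapsto n - m$ in the second summand of the mixed sum and applying both identities collapses $h$ to
\begin{equation*}
h(\mathscr{P}) \;=\; \tfrac{1}{2} + (2\mathscr{P} - 1)\bigl(g(\mathscr{P}) - \tfrac{1}{2}\bigr), \qquad g(\mathscr{P}) := \mathbb{E}_{M \sim B(n, \mathscr{P})}\bigl[\phi(M)\bigr].
\end{equation*}

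Finally, both claims are read off from properties of $g$. Because $\phi$ is strictly increasing and $B(n, \cdot)$ is strictly stochastically increasing in its parameter, $g$ is strictly increasing on $[0,1]$; the same re-indexing also yields $g(1 - \mathscr{P}) = 1 - g(\mathscr{P})$ and hence $g(0.5) = 0.5$. Therefore $(2\mathscr{P} - 1)$ and $(g(\mathscr{P}) - 1/2)$ share sign on $[0,1]$, proving Part~1 with equality only at $\mathscr{P} = 0.5$. For Part~2, differentiating gives $h'(\mathscr{P}) = 2(g(\mathscr{P}) - 1/2) + (2\mathscr{P} - 1)\, g'(\mathscr{P})$, which is strictly positive on $(0.5, 1]$ (both summands $\geq 0$, the second strictly so because $g' > 0$) and strictly negative on $[0, 0.5)$; combined with continuity at $\mathscr{P} = 0.5$ this gives the two monotonicity claims on the closed half-intervals. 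Notably, if one mistakenly retained the uniform prior on $c_{j}$ in the first step the same identities would force $h \equiv 1/2$, which is why the asymmetric posterior is the crux of the argument.
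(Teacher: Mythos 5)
Your proof is correct, and while it starts from the same conditional decomposition as the paper, it reaches the conclusion by a genuinely different and considerably cleaner route. Both arguments condition on the number of $C_{1}$-members of the hyperedge and arrive at the same weighted sum of Gaussian CDFs: after the reindexing $m=s-1$, $n=S-1$, the paper's Eq.~(\ref{eq:objective}) is exactly your $h(\mathscr{P})=\sum_{m}\left[\mathscr{P}\,T(m,\mathscr{P})+(1-\mathscr{P})\,T(m,1-\mathscr{P})\right]\phi(m)$; the paper obtains the size-biased weight $\tfrac{s}{S}\binom{S}{s}(\cdots)$ via the factor $1-\binom{N-1}{s}/\binom{N}{s}=s/N$, which is the same exchangeability you invoke, and your posterior $P(c_{j}=1\mid v_{i}\in e_{j}\cap C_{1})=\mathscr{P}$ together with $M\mid c_{j}\sim B(n,p_{c_{j}})$ is precisely the size-biasing identity $s\,\binom{S}{s}p^{s}(1-p)^{S-s}/(Sp)=\binom{S-1}{s-1}p^{s-1}(1-p)^{S-s}$ in disguise. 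Where the two proofs diverge is in how they extract monotonicity and the bound from this sum. The paper works on the sum directly: it pairs the terms $s$ and $S-s$, shows the bracketed quantity is increasing in $s$ on the upper half-range, locates a single crossing point $s^{*}$ in $\partial w(p,s)/\partial p$ to argue that mass shifts toward large $s$ as $p$ grows, treats even and odd $S$ separately, and runs a separate pairing argument at $p=0.5$ for the lower bound. You instead symmetrize once, using $\phi(m)+\phi(n-m)=1$ and $T(m,1-\mathscr{P})=T(n-m,\mathscr{P})$, to collapse everything to the closed form $h(\mathscr{P})=\tfrac12+(2\mathscr{P}-1)\bigl(g(\mathscr{P})-\tfrac12\bigr)$ with $g$ a plain binomial expectation of a strictly increasing function; both statements then fall out of $g$ being strictly increasing with $g(0.5)=0.5$, with no case analysis and no stochastic-dominance bookkeeping. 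Your version is more rigorous at the one place the paper is loose (the ``weights shift to larger $s$'' step), and it yields the stronger statement that equality in Part~1 holds only at $\mathscr{P}=0.5$; the only caveat is that your step ``the remaining $n$ members of $q_{ij}$ are i.i.d.\ Bernoulli($p_{c_{j}}$) given $v_{i}\in e_{j}$'' reads the count-level Assumption~\ref{assump:topology} as an i.i.d.-slots model, but this is exactly the reading the paper itself adopts, so nothing is lost.
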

\begin{proof}
    {A} full proof is provided in Appendix~\ref{sec:homophilic}.
\end{proof}
Theorem~\ref{thm:mainthm2} states that the probability of the condition being satisfied is at least 0.5 for any affinity parameter $\mathscr{P} \in [0,1]$.
Moreover, the probability strictly increases {w.r.t.} $\mathscr{P} \in [0.5,1]$ and that strictly decreases {w.r.t.} $\mathscr{P} \in [0,0.5]$.
Thus, it can be inferred that as the probability of each hyperedge including nodes of the same class increases, so does the likelihood of the condition being met.
Notably, many real-world group interactions exhibit homophilic traits~\citep{laakasuo2020homophily, khanam2023homophily}.
{Therefore}, the hyperedge filling task can improve node classification in many real-world scenarios, as evidenced by our theoretical findings and real-world characteristics.

\textbf{Generalization to the class $C_0$.} {The above results can be easily generalized to the class $C_0$ due to the symmetry.
Specifically, for each node $v_i\in C_0$, the effectiveness (spec., the expected accuracy of the classifier $\calF$) of a derived representation $\vecz_{i}$ is greater than that of an original feature $\vecx_{i}$ under a certain condition. 
The probability for such a condition to hold strictly increases {w.r.t.} $\mathscr{P}\in [0.5,1]$ and strictly decreases {w.r.t.} $\mathscr{P}\in [0,0.5]$.}
We theoretically show this in Appendix~\ref{sec:beneficial} and \ref{sec:homophilic}.

}
\vspaceUnderSectionName

\vspaceUnderSectionName
\section{Proposed Method for Hyperedge Filling}\label{sec:method}
{
    In this section, we present \textbf{\method} (\textbf{\underline{\smash{Hype}}}rgraphs, \underline{\textbf{\smash{b}}}uild \underline{\textbf{\smash{o}}}wn h\underline{\textbf{\smash{y}}}peredges), a hypergraph generative SSL method based on the hyperedge filling task (Section~\ref{sec:task}).
\method exhibits the following desirable properties of hypergraph generative SSL, {as empirically demonstrated later}:

{\textbf{Property 1.} Does not over-emphasize proximity information~\citep{velivckovic2018deep}.}

{\textbf{Property 2.} Avoids dimensional collapse~\citep{jing2021understanding} in node representations.}

{\textbf{Property 3.} Learns node representation to be aligned and uniform~\citep{wang2020understanding}.}

Our proposed method, \method, is illustrated in Figure~\ref{fig:method}(b). 
After presenting each of its steps, we discuss its role in satisfying the above properties. 
Lastly, we introduce a two-stage training scheme for further enhancing \method.

\begin{figure*}[t]
    \vspace{-2.5mm}
    \centering
    \subfigure[Representation spectrum analysis. 
    {A sudden singular-value drop} implies \textbf{dimensional collapse}.]{\includegraphics[width=0.33\textwidth]{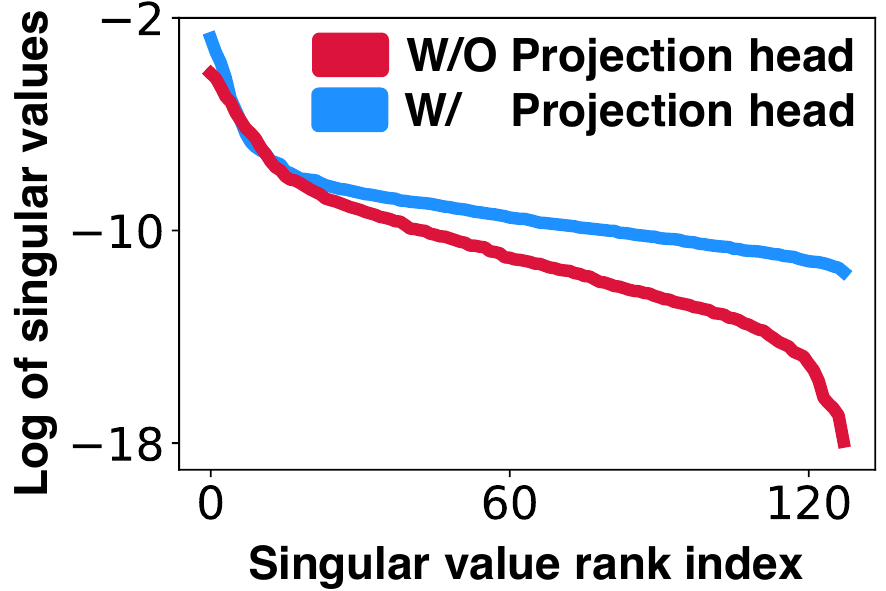}}
    \subfigure[{Representations on the unit hypersphere. 
    Uniformly distributed representations achieve \textbf{uniformity}, and representations of nodes of the same class located close to each other achieve \textbf{alignment}.}
    ]{\includegraphics[width=0.63\textwidth]{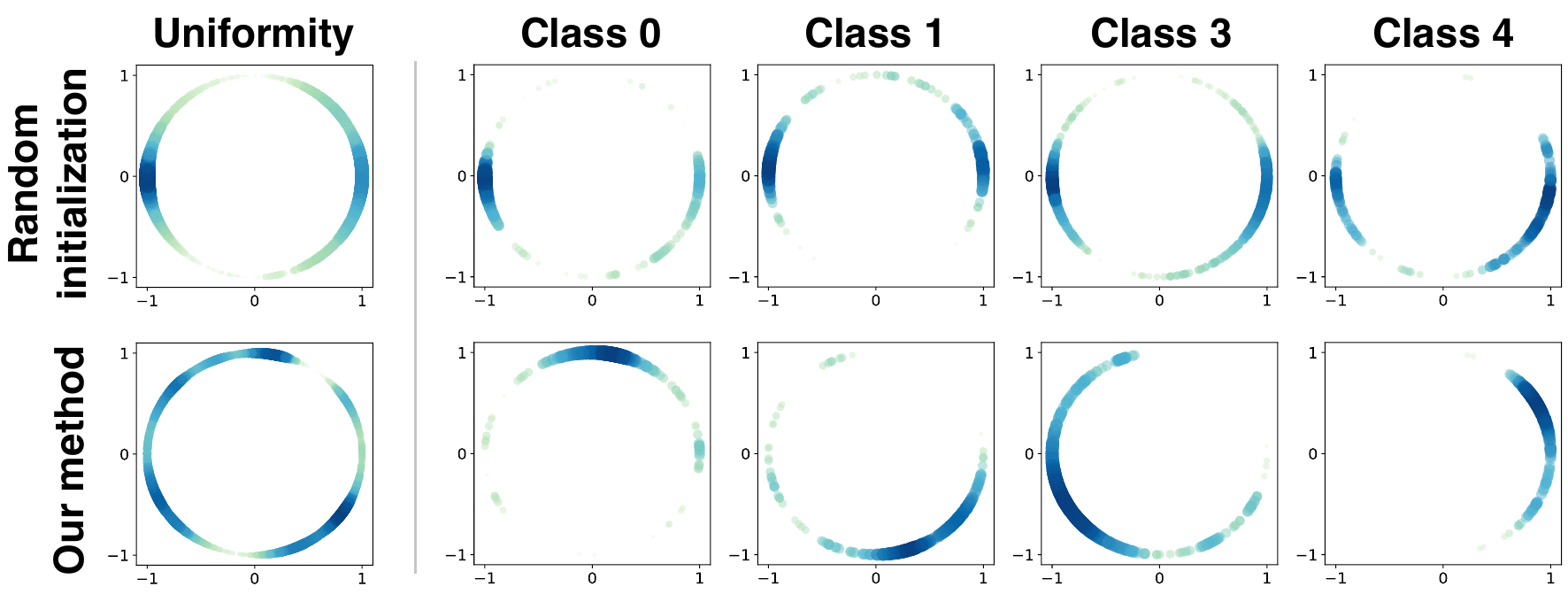}}
    \caption{\label{fig:desirableproperties} 
    Analysis regarding {Property 2} ({prevention of} dimensional collapse) and {Property 3} (representation uniformity and alignment) of \method.
    As shown in (a), while \method~without projection heads (\textcolor{crimson}{red}) suffers from dimensional collapse, \method~(\textcolor{dodgerblue}{blue}) does not, demonstrating the necessity of the projection head.
    Furthermore, as shown in (b), representations from an HNN trained by \method~meet both uniformity and alignment, justifying our design choice of the loss function.
    Experiments are conducted on the Cora dataset.}
    \vspace{2mm}
\end{figure*}

\subsection{Step 1: Hypergraph augmentation}\label{sec:aug}

\method~first obtains augmented feature matrix $\mathbf{X}'$ and hyperedge set $\calE'$ by using augmentation functions $\boldsymbol{\tau}_{\vecx}$ and $\boldsymbol{\tau}_{\calE}$, respectively.
The feature augmentation function $\boldsymbol{\tau}_{\vecx} : (\mathbf{X}, p_{v}) \mapsto \mathbf{X}'$ masks certain entries of $\mathbf{X}$ {based on Bernoulli sampling} (spec., $\mathbf{X}'  = \mathbf{X} \odot \mathbf{M}$, where $\odot$ is an element-wise product and $\mathbf{M}_{ij} \sim {B}(1, 1-p_{v}), \forall i \in [\vert \calV\vert], j \in [d]$). 
The topology augmentation function $\boldsymbol{\tau}_{\calE}:(\calE, p_{e}) \mapsto \calE'$ samples $\lceil \vert \calE \vert (1-p_{e})\rceil$ hyperedges {uniformly at random} from $\calE$.
Note that the magnitudes of feature and topology augmentations are proportional to $p_{v},p_{e} \in [0, 1]$, respectively.

\textbf{Role.} 
For hypergraph SSL, augmentations are crucial for mitigating overly-emphasized proximity information.
That is, using all hyperedges and/or features for both message passing and objective-function construction may risk HNNs to heavily rely on direct neighborhood information~\citep{tan2023s2gae}.
Many graph SSL strategies mask certain input edges and/or node features, preventing their encoder models from overfitting to the neighbor distribution and features~\citep{hou2022graphmae, li2023maskgae,tan2023s2gae}.
Motivated by their findings, we use the augmentation step.
In Appendix~\ref{sec:noaugexp}, we demonstrate that augmentation enhances node classification performance of \method.

\subsection{Step 2: Hypergraph encoding}\label{sec:emb}

After augmentation, \method~obtains node and query subset representations. 
First, \method~obtains node embeddings $\mathbf{Z}\in \mathbb{R}^{\vert \calV\vert \times d'}$ by feeding the augmented hypergraph into an encoder HNN: $\mathbf{Z} = f_{\theta}(\mathbf{X}', \calE')$.
Then, \method~obtains projected representations of query subsets (i.e., $q_{ij}, v_{i}\in e_{j}, e_{j} \in \calE$) and nodes.
To this end, we utilize a node projection head $f'_{\phi}$ and a set projection head $f''_{\psi}$.
Specifically, projected embeddings of node $v_{i}$ and query subset $q_{ij}$ are $\vech_{i} = f'_{\phi}(\vecz_{i})\in \mathbb{R}^{k}\ \text{and}\ \vecq_{ij} = f''_{\rho} (\sum_{v_{t} \in q_{ij}} \vecz_{t})\in \mathbb{R}^{k}$, respectively.
Here, the design choice of the set projection head is motivated by Deep Sets~\citep{zaheer2017deep}.

\textbf{Role.} 
We investigate the role of the projection heads, which are non-trivial components, in the context of the dimensional collapse of embeddings.
Dimensional collapse is a phenomenon in which embedding vectors occupy only the lower dimensional sub-space of their full dimension~\citep{jing2021understanding}. 
This is identified by observing whether or not certain singular values of the embedding covariance matrix drop to zero. 
To prevent dimensional collapse, we employ projection heads in~\method, and this is in line with the prior findings of~\citet{jing2021understanding} and~\citet{song2023towards}.
Figure~\ref{fig:desirableproperties}(a) illustrates that an HNN trained using \method~avoids dimensional collapse, whereas an HNN trained with {a} \method variant (without projection heads) does not. 
Results on more datasets are in Appendix~\ref{sec:alldimcolanalysis}.
Furthermore, we provide a theoretical analysis of why \method~without projection heads may suffer from dimensional collapse in Appendix~\ref{sec:dimcol}.
Note that this distinction leads to a performance discrepancy in node classification (Section~\ref{sec:abl}).

\subsection{Step 3: Hyperedge filling loss}\label{sec:loss}

The last step is to compute the SSL loss based on the hyperedge filling probability.
We design $p_{(\mathbf{X}, \calE, \Theta)}\left(v_{k} \;\middle|\;  q_{ij}\right)$ to be normalized over $\calV$ (i.e., $\sum_{v_{k}\in\calV}p_{(\mathbf{X}, \calE, \Theta)}\left(v_{k} \;\middle|\;  q_{ij}\right) = 1$).
To this end, we utilize a Softmax function to model probabilities.
In sum, with projected embeddings $\vech_{i}$ and $\vecq_{ij}$ (Section~\ref{sec:emb}), the probability of {a node $v_{i}$ completing a query subset $q_{ij}$} is defined as follows:
\begin{equation}\label{eq:mainloss}
    p_{(\mathbf{X}, \calE, \Theta)}\left(v_{i} \;\middle|\; q_{ij}\right) := \frac{\exp(\texttt{sim}(\vech_{i}, \vecq_{ij}))}{\sum_{v_{k} \in \calV}\exp(\texttt{sim}(\vech_{k}, \vecq_{ij}))},
\end{equation}
where $\texttt{sim}$ is {the} cosine similarity function (other similarity functions are also applicable). 
Lastly, \method is optimized for all possible hyperedge filling cases (i.e., $\Pi_{e_{j} \in \calE }\Pi_{v_{i} \in e_{j}} p_{(\mathbf{X}, \calE, \Theta)}\left(v_{i} \;\middle|\; q_{ij}\right)$).
To sum up, \method~minimizes the negative log-likelihood of all possible cases as follows:
\begin{equation}\label{eq:finalloss}
    \mathcal{L} := -\sum_{e_{j} \in \calE}\sum_{v_{i} \in e_{j}}\log{\frac{\exp(\texttt{sim}(\vech_{i}, \vecq_{ij}))}{\sum_{v_{k} \in \calV}\exp(\texttt{sim}(\vech_{k}, \vecq_{ij}))}}.
\end{equation}
Note that the set $\Theta$ of all parameters consists of the parameters of the encoder HNN $f_{\theta}$, the node projection head $f'_{\phi}$, and the set projection head $f''_{\rho}$ (i.e., $\Theta = (\theta, \phi, \rho)$).
They are updated by gradient descent, aiming to minimize the loss $\mathcal{L}$ defined in Eq.~(\ref{eq:finalloss}).

\textbf{Role.}
Our design choice of $p_{(\mathbf{X}, \calE, \Theta)}(\cdot)$ ensures that representations learned by \method~achieve both alignment and uniformity~\citep{wang2020understanding}.
In our case, \textit{alignment} indicates that node embeddings belonging to the same class are closely located to each other, and \textit{uniformity} indicates that embeddings are uniformly distributed over the embedding space.
Through the numerator of Eq.~(\ref{eq:finalloss}), \method pulls representations of a missing node and a query subset, promoting the alignment as discussed in our theoretical analysis (Section~\ref{sec:theorymethod}).
At the same time, the denominator of Eq.~(\ref{eq:finalloss}) pushes away every node representation from each query subset representation, encouraging that representations are uniformly distributed.
{This intuition is supported by the findings of~\citet{wang2020understanding}: the denominators of their contrastive loss, which push away representations from each other, encourage uniformity.}
As shown in Figure~\ref{fig:desirableproperties}(b), we verify that node representations from \method achieve both alignment and uniformity.
Results on more datasets are in Appendix~\ref{sec:alignunif}.

\subsection{Two-stage training scheme for further enhancement}\label{sec:traindetail}

In our preliminary study, we observed that with a randomly initialized encoder, \method {tends to} rely too heavily on the projection heads rather than the encoder parameters, leading to sub-optimal training.
Thus, we propose a \textbf{two-stage training scheme} to enhance the effectiveness of \method. 
In a nutshell, we first train an HNN $f_{\theta}$ to reconstruct masked node features (inspired by \citet{hou2022graphmae}). 
Then, we use the trained $f_{\theta}$ parameters {for} the initialization of \method's encoder to learn hyperedge filling (Sections~\ref{sec:aug}-\ref{sec:loss}).
This feature warm-up for the HNN encoder {repeats} the following {three steps} for a fixed number of epochs, which we set to 300 in all our experiments.

\textbf{Step 1: Encoding.} For each {feature-reconstruction} training epoch, we sample a certain number of nodes uniformly at random from $\mathcal{V}$, which we denote as $\mathcal{V}' \subseteq \mathcal{V}$.
Then, we mask the features of sampled nodes by using a learnable input token $\vecm^{(I)} \in \mathbb{R}^{d}$ and use {the resulting} masked node feature matrix $\mathbf{X}'' \in \mathbb{R}^{\vert \mathcal{V}\vert \times d}$, as {the} input node feature matrix (i.e., $\mathbf{X}''_{i} = \vecm^{(I)},\forall v_{i} \in \mathcal{V}'$ and $\mathbf{X}''_{j} = \mathbf{X}_{j},\forall v_{j} \in \mathcal{V} \setminus \mathcal{V}'$).
We also obtain augmented hyperedges $\mathcal{E}'$ by using the strategy described in Section~\ref{sec:aug}. {Then, we obtain} node embeddings $\mathbf{Z}' \in \mathbb{R}^{\vert \mathcal{V} \vert \times d'}$ as follows: $\mathbf{Z}' = f_{\theta} (\mathbf{X}'' , \mathcal{E}')$.

\textbf{Step 2: Decoding.}
We again mask the embeddings of the nodes that we sampled earlier with a learnable embedding token $\vecm^{(E)} \in \mathbb{R}^{d'}$, obtaining the masked node embedding matrix $\mathbf{Z}''$ (i.e., $\mathbf{Z}_{i}'' = \vecm^{(E)}, \forall v_{i} \in \mathcal{V'}$ and $\mathbf{Z}_{j}'' = \mathbf{Z}'_{j}, \forall v_{j} \in \mathcal{V} \setminus \mathcal{V'}$). 
Then, we acquire reconstructed node features $\mathbf{\hat{X}} \in \mathbb{R}^{\vert \mathcal{V}\vert \times d}$ by using a decoder HNN $g_{\psi}$ as follows: $\mathbf{\hat{X}} = g_{\psi}(\mathbf{Z'', \mathcal{E'}})$.

\textbf{Step 3: Updating.}
After gaining reconstructed node features $\mathbf{\hat{X}}$, we maximize the similarity between the original and reconstructed features of the masked nodes. 
To this end, we minimize the following loss function, which is proposed by~\citet{hou2022graphmae}: $\mathcal{L}' = (1/{\vert \mathcal{V'}\vert})\sum_{v_{i} \in \mathcal{V'}} (1-(\mathbf{\hat{X}}^{T}_{i}\mathbf{{X}}_{i}/\lVert \mathbf{\hat{X}}_{i}\rVert \lVert \mathbf{{X}}_{i}\rVert))$.
Using gradient descent to minimize $\mathcal{L'}$, {we update} the parameters of the encoder HNN $f_{\theta}$, the decoder HNN $g_{\psi}$, the input token $\vecm^{(I)}$, and the embedding token $\vecm^{(E)}$.

}
\vspaceUnderSectionName

\section{Experimental Results}\label{sec:exp}
{
    
We now evaluate the efficacy of \method~as techniques for (1) pre-training hypergraph neural networks (HNNs) for node classification (Section~\ref{sec:exp1})
and (2) learning general-purpose representations (Section~\ref{sec:exp2}).
Then, we justify each of its components through an ablation study (Section~\ref{sec:abl}).

\textbf{Datasets.} 
For experiments, we use 11 benchmark hypergraph datasets. 
The hypergraph datasets are from diverse domains, expressing co-citation, co-authorship, computer graphics, movie-actor, news, and political membership relations.
In Appendix~\ref{sec:datasets}, we detail their statistics and descriptions.

\textbf{Baselines methods.} 
We utilize 16 baseline methods.
They include 
(a) 10 \textit{(semi-)supervised HNNs}, including 2 state-of-the-art HNNs (ED-HNN~\citep{wang2022equivariant} and PhenomNN~\citep{wang2023hypergraph}), 
(b) 2 \textit{generative SSL} strategies for ordinary graphs (GraphMAE2~\citep{hou2023graphmae2} and MaskGAE~\citep{li2023maskgae}), 
and (c) 4 \textit{SSL} strategies for hypergraph (TriCL~\citep{lee2023m}, HyperGCL~\citep{wei2022augmentations}, HyperGRL~\citep{du2022self}, and  H-GD, which is a direct extension of an ordinary graph SSL method \citep{zheng2022rethinking} to hypergraphs).
We use UniGCNII~\citep{huang2021unignn}~\footnote{HyperGRL utilizes a GCN as its backbone encoder since its input is an ordinary graph.} and GCN~\citep{kipf2016semi} as the backbone encoders for hypergraph SSL methods and ordinary graph SSL methods, respectively~\footnote{Results for alternative encoders can be found in Appendix~\ref{sec:encoderanalysis}.}.
In Appendix~\ref{sec:expdetails}, we provide their details, including their implementations, training, and hyperparameters.

\textbf{\method.}
We utilize UniGCNII as an encoder of~\method, which is the same as that of other hypergraph SSL methods. 
For both node- and set projection heads, we use an MLP.
Further details of~\method, including its implementations and hyperparameters, are provided in Appendix~\ref{sec:impdetail}.

\begin{table}[t!]
\vspace{-3mm}
\caption{Efficacy as pre-training techniques: AVG and STD of accuracy values in node classification under the \textbf{fine-tuning protocol.}
The \textcolor{darkgreen}{best} and \textcolor{yellow}{second-best} performances are colored \textcolor{darkgreen}{green} and \textcolor{yellow}{yellow}, respectively.
R.G. and A.R. denote the random guess and average ranking among all methods, respectively. 
O.O.T. means that training is not completed within 24 hours.
\method~outperforms all the baseline methods in 8 datasets, and overall, it obtains the best average ranking.
} \label{tab:mainexp}
    \small
    \centering
    \setlength{\tabcolsep}{1.5pt}
    \scalebox{0.8}{
    \renewcommand{\arraystretch}{1.1}
        \centering
        \begin{tabular}{c | c | c  c c c  c c c  c c c  c | c}
            \toprule
            {} & {Method} & {Citeseer} & {Cora} & {Pubmed} & {Cora-CA} & {DBLP-P} & {DBLP-A} & {AMiner}
            & {IMDB} & {MN-40} & {20News} & {House} & {A.R.}\\
            \midrule
            \midrule
            \multirow{11}{*}{\rotatebox[origin=c]{90}{\textbf{(Semi-)Supervised}}}
            
            & R.G. & 18.1 ({\std 0.9}) & 17.4 ({\std 1.0}) & 36.0 ({\std 0.7}) & 17.8 ({\std 0.7}) & 18.9 ({\std 0.2}) & {25.6 ({\std 1.0})} & 10.2 ({\std 0.2}) & 33.9 ({\std 0.7}) & 3.7 ({\std 0.2}) & 50.1 ({\std 1.3}) & {26.7 ({\std 0.3})} & 17.8 \\
            
            & MLP & 32.5 ({\std 7.0}) & 27.9 ({\std 7.0}) & 62.1 ({\std 3.7}) & 34.8 ({\std 5.1}) & 73.5 ({\std 1.0}) & {56.0 ({\std 4.9})} & 22.3 ({\std 1.7}) & 39.1 ({\std 2.4}) & 89.4 ({\std 1.5}) & 73.1 ({\std 1.4}) & \best {72.2 ({\std 3.9})} & 14.2 \\
            
            & HGNN & 41.9 ({\std 7.8}) & 50.0 ({\std 7.2}) & 72.9 ({\std 5.0}) & 50.2 ({\std 5.7}) & 85.3 ({\std 0.8}) & 67.1 ({\std 6.0}) & 30.3 ({\std 2.5}) & 42.2 ({\std 2.9}) & 88.0 ({\std 1.4}) & 76.4 ({\std 1.9}) & 52.7 ({\std 3.8}) & 10.7 \\
            
            & HyperGCN & 31.4 ({\std 9.5}) & 33.1 ({\std 10.2}) & 63.5 ({\std 14.4}) & 37.1 ({\std 9.1}) & {53.5} ({\std 11.6}) & {68.2 ({\std 14.4})} & 26.4 ({\std 3.6}) & 37.9 ({\std 4.5}) & 55.1 ({\std 7.8}) & 67.0 ({\std 9.4}) & {49.8 ({\std 3.5})} & 15.6 \\
            
            & HNHN & 43.1 ({\std 8.7}) & 50.0 ({\std 7.9}) & 72.1 ({\std 5.4}) & 48.3 ({\std 6.2}) & 84.6 ({\std 0.9}) & 62.6 ({\std 4.8}) & 30.0 ({\std 2.4}) & 42.3 ({\std 3.4}) & {86.1 ({\std 1.6})} & {74.2 ({\std 1.5})} & {49.7 ({\std 2.2})} & 12.6 \\
            
            & UniGCN & 44.2 ({\std 8.1}) & 49.1 ({\std 8.4}) & 74.4 ({\std 3.9}) & 51.3 ({\std 6.3}) & 86.9 ({\std 0.6}) & 65.1 ({\std 4.7}) & 32.7 ({\std 1.8}) & 41.6 ({\std 3.5}) & 89.1 ({\std 1.0}) & 77.2 ({\std 1.2}) & 51.1 ({\std 2.4}) & 9.6 \\
            
            & UniGIN & 40.4 ({\std 9.1}) & 47.8 ({\std 7.7}) & 69.8 ({\std 5.6}) & 48.3 ({\std 6.1}) & 83.4 ({\std 0.8}) & 63.4 ({\std 5.1}) & 30.2 ({\std 1.4}) & 41.4 ({\std 2.7}) & 88.2 ({\std 1.8}) & 70.6 ({\std 1.8}) & 51.1 ({\std 3.0}) & 13.9 \\
            
            & UniGCNII & 44.2 ({\std 9.0}) & 48.5 ({\std 7.4}) & 74.1 ({\std 3.9}) & 54.8 ({\std 7.5}) & 87.4 ({\std 0.6}) & 65.8 ({\std 3.9}) & 32.5 ({\std 1.7}) & 42.5 ({\std 3.9}) & 90.8 ({\std 1.1}) & 70.9 ({\std 1.0}) & 50.8 ({\std 4.3}) & 9.4 \\
            
            & AllSet & 43.5 ({\std 8.0}) & 47.6 ({\std 4.2}) & 72.4 ({\std 4.5}) & 57.5 ({\std 5.7}) & 85.9 ({\std 0.6}) & 65.3 ({\std 3.9}) & 29.3 ({\std 1.2}) & 42.3 ({\std 2.4}) & \secb 92.1 ({\std 0.6}) & 71.9 ({\std 2.5}) & 54.1 ({\std 3.4}) & 10.4 \\
            
            & ED-HNN & 40.3 ({\std 8.0}) & 47.6 ({\std 7.7}) & 72.7 ({\std 4.7}) & 54.8 ({\std 5.4}) & 86.2 ({\std 0.8}) & 65.8 ({\std 4.8}) & 30.0 ({\std 2.1}) & 41.4 ({\std 3.0}) & 90.7 ({\std 0.9}) & 76.2 ({\std 1.2}) & \secb 71.3 ({\std 3.7}) & 10.4 \\
            
            & PhenomNN & 49.8 ({\std 9.6}) & 56.4 ({\std 9.6}) & 76.1 ({\std 3.5}) & 60.8 ({\std 6.2}) & 88.1 ({\std 0.4}) \secb & 72.3 ({\std 4.1}) & 33.8 ({\std 2.0}) \secb & 44.1 ({\std 3.7}) & \best 95.9 ({\std 0.8}) & 74.0 ({\std 1.5}) & 70.4 ({\std 5.6}) & 3.9 \\
            \cmidrule{1-14}
            
            \multirow{7}{*}{\rotatebox[origin=c]{90}{\textbf{Self-supervised}}}

            & GraphMAE2 & 41.1 ({\std 10.0}) & 49.3 ({\std 8.3}) & 72.9 ({\std 4.2}) & 55.4 ({\std 8.4}) & 86.6 ({\std 0.6}) & 69.5 ({\std 4.4}) & 32.8 ({\std 1.9}) & 43.3 ({\std 2.7}) & 90.1 ({\std 0.7}) & 71.9 ({\std 1.3}) & 52.8 ({\std 3.5}) & 8.9 \\

            & MaskGAE & 49.6 ({\std 10.1}) & 57.1 ({\std 8.8}) & 72.8 ({\std 4.3}) & 57.8 ({\std 5.9}) & 86.3 ({\std 0.5}) & 74.8 ({\std 3.1}) & 33.7 ({\std 1.6}) & 44.5 ({\std 2.5}) & 90.0 ({\std 0.9}) & O.O.T. & 51.8 ({\std 3.3}) & 7.7 \\

            & TriCL & 51.7 ({\std 9.8}) \secb & 60.2 ({\std 7.9}) \secb & 76.2 ({\std 3.6}) & 64.3 ({\std 5.5}) \secb & 88.0 ({\std 0.4}) & 79.7 ({\std 2.9}) & 33.1 ({\std 2.2}) & \secb 46.9 ({\std 2.9}) & 90.3 ({\std 1.0}) & 77.2 ({\std 1.0}) & 69.7 ({\std 4.9}) & 3.4 \secb \\
            
            & HyperGCL & 47.0 ({\std 9.2}) & 60.3 ({\std 7.4}) & 76.8 ({\std 3.7}) \secb & 62.0 ({\std 5.1}) & 87.6 ({\std 0.5}) & 79.7 ({\std 3.8}) \secb & 33.2 ({\std 1.6}) & 43.9 ({\std 3.6}) & 91.2 ({\std 0.8}) & \best 77.8 ({\std 0.8}) & 69.2 ({\std 4.9}) & 3.5 \\
            
            & H-GD & 45.4 ({\std 9.9}) & 50.6 ({\std 8.2}) & 74.5 ({\std 3.5}) & 58.8 ({\std 6.2}) & 87.3 ({\std 0.5}) & 75.1 ({\std 3.6}) & 32.6 ({\std 2.2}) & 43.0 ({\std 3.3}) & 90.0 ({\std 1.0}) &  77.2 ({\std 1.0}) & 69.7 ({\std 5.1}) & 6.1 \\

            & HyperGRL & 42.3 ({\std 9.3}) & 49.1 ({\std 8.8}) & 73.0 ({\std 4.3}) & 55.8 ({\std 8.0}) & 86.7 ({\std 0.6}) & 70.8 ({\std 3.7}) & 33.0 ({\std 1.8}) & 43.1 ({\std 2.7}) & 90.1 ({\std 0.8}) & O.O.T. & 52.5 ({\std 3.3}) & 9.2 \\
            
            \cmidrule{2-14}
            
            & \method & 56.7 ({\std 9.8}) \best & 62.3 \best ({\std 7.7}) & 77.0 ({\std 3.4}) \best & 66.3 ({\std 4.6}) \best & 88.2 ({\std 0.4}) \best & 80.6 ({\std 2.3}) \best & 34.1 ({\std 2.2}) \best & 47.6 ({\std 2.5}) \best & 90.4 ({\std 0.9}) & \secb 77.6 ({\std 0.9}) &  70.4 ({\std 4.8}) & 1.7 \best \\
            \bottomrule
        \end{tabular}
        }
        \vspace{3mm}
\end{table}

\subsection{Efficacy as a pre-training technique (fine-tuned evaluation)}\label{sec:exp1}
\textbf{Setup.}
Following \citet{wei2022augmentations}, we randomly split the {nodes} into training/validation/test sets with the ratio of 1\%/1\%/98\%, respectively.\footnote{We ensure that a training set includes at least one node from each class.} 
For reliability, we assess each method on 20 data splits across 5 random model initializations, following~\citet{lee2023m}.
{We report the average (AVG) and standard deviation (STD) of test accuracy values on each dataset.}
Specifically, for each SSL strategy, including \method, we pre-train a backbone encoder with the corresponding SSL scheme and then fine-tune the encoder in a (semi-)supervised manner. 

\textbf{Results.}
As shown in Table~\ref{tab:mainexp}, \method~shows the best average ranking among all 18 methods.
Two points stand out. 
{First}, pre-training an HNN with \method~generally improves node classification.
Compared to the performance of UniGCNII (\method's backbone encoder),
\method~obtains performance gains up to 12.5 {points} in 10 out of 11 datasets.
{Second}, \method~outperforms all other SSL strategies.
Specifically, compared to the second-best method (TriCL), the accuracy gap is up to 5.0 points.
In addition, the suboptimal performance of the state-of-the-art generative SSL strategies for graphs (i.e. GraphMAE2 and MaskGAE) implies the importance of preserving higher-order interactions in learning hypergraph representations.
In summary, \method~serves as an effective SSL strategy to pre-train HNNs for node classification.

\vspace{-2mm}

\subsection{Efficacy as a general-purpose embedding technique (linear evaluation)}\label{sec:exp2}

\textbf{Setup.} 
We assess the generalizability of learned representations from~\method in two downstream tasks: node classification and hyperedge prediction. 
Considering this objective, we limit the baseline methods to SSL strategies, which yield embeddings independent of downstream tasks, and the original node features {(i.e., naive \textbf{X})}.
We use the linear evaluation protocol (i.e., the embeddings are used as \textbf{fixed} inputs to the classifiers for each task).
For node classification, we use the same settings described in Section~\ref{sec:exp1}.
For hyperedge prediction, we split hyperedges into training/validation/test sets by the ratio of 60\%/20\%/20\%.
For its evaluation, we obtain the same number of negative hyperedge samples as that of the ground-truth hyperedges.
We report the average (AVG) and standard deviation (STD) of test AUROC values on each dataset.
Further experimental details about hyperedge prediction, including negative hyperedge sampling, are provided in Appendix~\ref{sec:setdetail}.

\textbf{Results.} 
As shown in Table~\ref{tab:linnode}, \method~has the best average ranking in both node classification and hyperedge prediction.
Specifically, in node classification, compared to the second-best method (TriCL), the accuracy gap is up to 6.3 points.
This demonstrates that \method~is more effective in learning general-purpose hypergraph representations than the other SSL strategies on hypergraphs.

\begin{table}[t!]
\vspace{-3mm}
\caption{
Efficacy as general-purpose embedding techniques: AVG and STD of accuracy/AUROC values in {node classification and hyperedge prediction tasks} under the \textbf{linear evaluation protocol}.
In each downstream task, the \textcolor{darkgreen}{best} and \textcolor{yellow}{second-best} performances are colored \textcolor{darkgreen}{green} and \textcolor{yellow}{yellow}, respectively.
A.R. denotes the average ranking among all methods.
O.O.T. means that training is not completed within 24 hours.
\method~obtains the best average ranking in both downstream tasks.
} \label{tab:linnode}
    \small
    \centering
    \setlength{\tabcolsep}{3pt}
    \scalebox{0.75}{
    \renewcommand{\arraystretch}{1.0}
        \centering
        \begin{tabular}{c| c | c  c c c  c c c  c c c  c | c}
            \toprule
            & {Method} & {Citeseer} & {Cora} & {Pubmed} & {Cora-CA} & {DBLP-P} & {DBLP-A} & {AMiner}
            & {IMDB} & {MN-40} & {20News} & {House} & {A.R.}\\
            \midrule
            \midrule
            \multirow{7}{*}{\rotatebox[origin=c]{90}{\textbf{Node classification}}}
            
            & Naive $\mathbf{X}$ & 27.8 ({\std 7.0}) & 32.4 ({\std 4.6}) & 62.8 ({\std 2.8}) & 31.9 ({\std 5.5}) & 69.4 ({\std 0.7}) & {54.7 ({\std 4.7})} & 21.4 ({\std 1.2}) & 38.1 ({\std 1.9}) & \best 91.9 ({\std 1.1}) & 70.6 ({\std 1.9}) & \best {71.3 ({\std 5.4})} & 5.5 \\

            & GraphMAE2 & 29.2 ({\std 6.5}) & 37.5 ({\std 7.0}) & 55.5 ({\std 9.5}) & 38.2 ({\std 9.1}) & 75.6 ({\std 1.7}) & 57.5 ({\std 5.6}) & 27.3 ({\std 2.7}) & 36.6 ({\std 3.5}) & 89.1 ({\std 1.8}) & 62.3 ({\std 2.3}) & 51.7 ({\std 3.5}) & 6.3 \\

            & MaskGAE & 47.2 ({\std 11.1})  & 56.8 ({\std 9.3}) & 62.6 ({\std 5.5}) & 56.0 ({\std 4.8}) & 84.8 ({\std 0.7}) & 75.1 ({\std 3.5}) & 33.2 ({\std 2.0}) & 44.1 ({\std 3.9}) & \secb 90.5 ({\std 0.9}) & O.O.T. & 50.0 ({\std 2.8}) & 4.5 \\

            & TriCL & \secb 53.3 ({\std 10.0}) & \secb 62.1 ({\std 8.8}) & \secb 74.5 ({\std 4.1}) & \secb 63.6 ({\std 5.2}) & \secb 87.1 ({\std 0.7}) & \secb 80.9 ({\std 3.2}) & \best 35.0 ({\std 3.6}) & \secb 48.0 ({\std 3.2}) & 80.0 ({\std 5.1}) & 67.2 ({\std 4.0}) & 69.1 ({\std 5.5}) & 2.6 \secb \\

            & HyperGCL & 42.6 ({\std 8.6}) & 61.8 ({\std 8.3}) & 67.6 ({\std 8.0}) & 58.1 ({\std 6.3}) & 56.6 ({\std 5.2}) & 79.8 ({\std 3.8}) & 33.3 ({\std 2.2}) & 47.5 ({\std 2.8}) & 84.1 ({\std 2.8}) & \secb 71.2 ({\std 3.4}) & 67.1 ({\std 5.4}) & 4 \\

            & H-GD & 35.6 ({\std 7.8}) & 37.6 ({\std 6.8}) & 58.0 ({\std 8.2}) & 48.6 ({\std 7.4}) & 73.3 ({\std 1.3}) & 74.0 ({\std 3.3}) & 33.8 ({\std 5.0}) & 35.2 ({\std 2.9}) & 76.6 ({\std 4.4}) &  54.8 ({\std 7.4}) & 68.3 ({\std 5.7}) & 5.5 \\
            
            & HyperGRL & 35.3 ({\std 8.2}) & 35.4 ({\std 8.8}) & 50.2 ({\std 8.7}) & 39.4 ({\std 8.1}) & 78.7 ({\std 1.2}) & 62.7 ({\std 5.1}) & 28.0 ({\std 2.8}) & 34.8 ({\std 3.0}) &  89.4 ({\std 1.5}) & O.O.T. & 52.0 ({\std 3.7}) & 6.1 \\

            \cmidrule{2-14}
            & \method & \best 59.6 ({\std 9.9}) & \best 63.5 ({\std 9.4}) & \best 75.0 ({\std 3.4}) & \best 66.0 ({\std 4.6}) & \best 87.9 ({\std 0.5}) & \best 81.2 ({\std 2.7}) & \secb 34.3 ({\std 3.2}) & \best 48.8 ({\std 1.8}) &  89.2 ({\std 2.2}) & \best 75.7 ({\std 2.1}) & \secb 69.4 ({\std 5.4}) & 1.5 \best \\

            \midrule
            \midrule

            \multirow{7}{*}{\rotatebox[origin=c]{90}{\textbf{Hyperedge prediction}}}
            
            & Naive $\mathbf{X}$ & 63.3 ({\std 2.1}) & 75.5 ({\std 1.6}) & 88.3 ({\std 0.6}) & 55.0 ({\std 1.9}) & 90.0 ({\std 0.4}) & 72.1 ({\std 1.3}) & 80.0 ({\std 1.1}) & 39.5 ({\std 1.9}) & 99.5 ({\std 0.1}) & 97.7 ({\std 2.9}) & {54.8 ({\std 5.0})} & 6.4 \\

            & GraphMAE2 & 73.3 ({\std 2.7}) & 76.4 ({\std 1.7}) & 81.6 ({\std 1.1}) & 76.3 ({\std 3.1}) & 85.2 ({\std 0.4}) & 68.3 ({\std 1.8}) & 80.7 ({\std 0.9}) & 53.7 ({\std 2.6}) & 99.5 ({\std 0.1}) & 90.1 ({\std 5.7}) & 62.9 ({\std 3.8}) & 6.1 \\

            & MaskGAE & 86.1 ({\std 1.6})  & 88.5 ({\std 1.4}) & \best 92.9 ({\std 0.5}) & 81.8 ({\std 2.7}) & 93.2 ({\std 0.5}) & 79.3 ({\std 2.0}) & 84.6 ({\std 0.1}) & 58.1 ({\std 2.5}) & 99.3 ({\std 0.1}) & O.O.T. & 87.0 ({\std 3.4}) & 4.1 \\

            & TriCL & \secb 90.5 ({\std 1.2}) & \secb 90.7 ({\std 1.3}) &  91.9 ({\std 0.5}) & \secb 87.8 ({\std 1.5}) & \secb 94.8 ({\std 0.2}) & \best 87.9 ({\std 1.4}) & \best 90.4 ({\std 0.6}) & \secb 58.9 ({\std 2.1}) & \secb 99.6 ({\std 0.1}) & 9\secb 8.2 ({\std 3.0}) &  \best 90.0 ({\std 2.6})  & 1.8 \secb \\

            & HyperGCL & 73.9 ({\std 2.6}) & 85.4 ({\std 1.5}) & 89.6 ({\std 0.5}) & 81.1 ({\std 1.9}) & 83.6 ({\std 0.6}) & 83.5 ({\std 1.0}) & 82.1 ({\std 7.6}) & 53.8 ({\std 2.4}) & 99.4 ({\std 0.1}) & 96.7 ({\std 7.0}) & 76.3 ({\std 6.3}) & 5 \\

            & H-GD & 72.2 ({\std 5.0}) & 71.9 ({\std 3.1}) & 87.2 ({\std 0.7}) & 73.2 ({\std 4.0}) & 91.6 ({\std 1.0}) & 81.4 ({\std 1.9}) & 84.9 ({\std 2.1}) & 53.1 ({\std 1.8}) & 99.5 ({\std 0.1}) & 83.9 ({\std 2.1}) & 87.9 ({\std 3.1}) & 5.3 \\

            & HyperGRL & 83.2 ({\std 8.0}) & 84.0 ({\std 1.7}) & 78.3 ({\std 1.8}) & 80.0 ({\std 3.3}) & 88.2 ({\std 0.3}) & 80.8 ({\std 1.4}) & 81.5 ({\std 0.7}) & 54.7 ({\std 2.8}) & 98.8 ({\std 0.4}) & O.O.T. &\secb  88.2 ({\std 3.3}) & 5.5 \\
            
            \cmidrule{2-14}
            
            & \method & \best 91.1 ({\std 1.1}) & \best 91.9 ({\std 1.1}) & \best 95.1 ({\std 0.3}) & \best 88.1 ({\std 1.4}) & \best 95.5 ({\std 0.1}) & \secb 87.3 ({\std 1.3}) & \secb 89.8 ({\std 0.5}) & \best 59.4 ({\std 2.1}) & \best 99.7 ({\std 0.1}) & \best 99.0 ({\std 1.6}) & 87.0 ({\std 2.8}) & 1.5 \best \\
            \bottomrule
            
        \end{tabular}
        }
        \vspace{3mm}
\end{table}
\vspace{1mm}
\begin{table}[t!]
\vspace{-2.5mm}
\caption{The ablation study with four variants of \method on node classification under the fine-tuning protocol.
The \textcolor{darkgreen}{best} and \textcolor{yellow}{second-best} performances are colored \textcolor{darkgreen}{green} and \textcolor{yellow}{yellow}, respectively.
F.R., H.F., and P.H. denote Feature Reconstruction, Hyperedge Filling, and Projection Heads, respectively.
A.R. denotes the average ranking among all methods.
NA denotes no {pre-training}.
\method outperforms others in most datasets, justifying each of its components.
} \label{tab:mainabl}
    \small
    \centering
    \setlength{\tabcolsep}{2.5pt}
    \scalebox{0.74}{
    \renewcommand{\arraystretch}{1.3}
        \centering
        \begin{tabular}{c | c c  c| c  c c   c c c  c c c  c c | c}
            \toprule
            &{F. R.} & {H. F.} & {P. H.} 
            & {Citeseer} & {Cora}  & {Pubmed} & {Cora-CA} & {DBLP-P} & {DBLP-A} & {AMiner} & {IMDB} & {MN-40} & {20News} & {House} & {A.R.} \\
            \midrule
            \midrule
            \textbf{NA} & \no & \no & \no &
            44.2 ({\std 9.0}) & 48.5 ({\std 7.4}) & 74.1 ({\std 3.9}) & 54.8 ({\std 7.5}) & 87.4 ({\std 0.6}) & 65.8 ({\std 3.9}) & 32.5 ({\std 1.7}) & 42.5 ({\std 3.9}) & 90.8 ({\std 1.1}) & 70.9 ({\std 1.0}) & 50.8 ({\std 4.3}) & 5.5 \\
            
            \textbf{V1} & \no & \yes & \no & 51.6 ({\std 11.2}) & 60.7 ({\std 8.2}) & 76.2 ({\std 3.6}) & 63.5 ({\std 6.0}) & 88.1 ({\std 0.5}) & 78.5 ({\std 2.9}) & 33.5 ({\std 2.8}) & 46.8 ({\std 3.1}) & 90.0 ({\std 1.1}) & 77.4 ({\std 0.9}) & 68.5 ({\std 4.5}) & 4.3 \\
            
            \textbf{V2} &\no & \yes & \yes & 52.7 ({\std 9.6}) & 59.7 ({\std 9.2}) & 
            \secb
            76.7 ({\std 3.2}) & 63.5 ({\std 6.0}) & 
            \secb
            88.2 ({\std 0.5}) & 79.1 ({\std 2.5}) & 33.8 ({\std 2.2}) & 46.9 ({\std 3.3}) & 90.6 ({\std 1.0}) & 
            
             {77.0 ({\std 0.9})} & 69.6 ({\std 4.9}) & 3.3 \\
            
            \textbf{V3} & \yes & \no & \no & 52.0 ({\std 9.3}) & 58.9 ({\std 8.2}) & 74.1 ({\std 3.9}) & 61.2 ({\std 6.6}) & 87.8 ({\std 0.4}) & 79.9 ({\std 2.3}) & 33.9 ({\std 2.1}) & 46.3 ({\std 2.7}) & 
            \best
            91.4 ({\std 0.9}) & \secb
            77.5 ({\std 0.9}) & 
            \secb
            70.1 ({\std 4.8}) & 3.6\\
            
            \textbf{V4} & \yes & \yes & \no & \secb 
            56.0 ({\std 9.9}) &
            \secb
            61.8 ({\std 8.5}) & 76.5 ({\std 3.1}) & 
            \secb
            65.3 ({\std 4.3}) & 88.0 ({\std 0.4}) & 
            \secb
            80.3 ({\std 2.4}) & 
            \secb
            34.0 ({\std 2.0}) & 
            \secb
            47.5 ({\std 2.3}) & 
            \secb
            90.8 ({\std 1.0}) & 
            
             77.4 ({\std 1.0}) & 69.3 ({\std 5.0}) & \secb 2.5\\
            
            \midrule
            \textbf{Ours} &\yes & \yes & \yes & 56.7 ({\std 9.8}) \best & 62.3 \best ({\std 7.7}) & 77.0 ({\std 3.4}) \best & 66.3 ({\std 4.6}) \best & 88.2 ({\std 0.4}) \best & 80.6 ({\std 2.3}) \best & 34.1 ({\std 2.2}) \best & 47.6 ({\std 2.5}) \best & 90.4 ({\std 0.9}) & \best 77.6 ({\std 0.9}) &  \best 70.4 ({\std 4.8}) & \best 1.4\\
            \bottomrule
        \end{tabular}
        }
        \vspace{1mm}
\end{table}

\subsection{Ablation study}\label{sec:abl}
We analyze the necessity for each component of \method, specifically, (a) the hyperedge filling task (Section~\ref{sec:taskdef}), (b) projection heads (Section~\ref{sec:emb}), and (c) feature reconstruction warm-up (Section~\ref{sec:traindetail}).
To this end, we utilize four variants of \method: \textbf{(V1)} without feature reconstruction warm-up and projection heads, \textbf{(V2)} without feature reconstruction warm-up~\footnote{To mitigate an issue of over-relying on projection heads (Section~\ref{sec:traindetail}), we have trained an encoder without projection heads at the beginning, and after some epochs, we train the encoder with projection heads.}, \textbf{(V3)} without the hyperedge filling process, and \textbf{(V4)} without projection heads.
Here, projection heads are used only for methods with the hyperedge filling process.
Note that \textbf{V3} is a method that only utilizes the feature reconstruction process, which is described in Section~\ref{sec:traindetail}.

As shown in Table~\ref{tab:mainabl}, \method, equipped with all of its components, outperforms the others in most datasets, demonstrating the effectiveness of our design choices.
There are two other notable results. 
{First}, the necessity of projection heads is evidenced by the superior performance of \textbf{V2} (compared to \textbf{V1}) {and ours (compared to \textbf{V4}).}
{Second}, the advantage of the hyperedge filling task over feature reconstruction is manifested by the better average rank of \textbf{V2} compared to \textbf{V3}.
}
\vspaceUnderSectionName

\vspaceUnderSectionName
\section{Conclusion}\label{sec:conclusion}
{
    In this work, we conduct a comprehensive analysis of generative self-supervised learning on hypergraphs.
Our contribution is three-fold.
First, we formulate the hyperedge filling task, a generative self-supervised learning task on hypergraphs, and investigate the theoretical connection between the task and node classification (Section~\ref{sec:task}).
Second, we present a generative SSL method \method to solve the task (Section~\ref{sec:method}).
Third, we demonstrate the superiority of \method~over existing SSL methods on hypergraphs through extensive experiments (Section~\ref{sec:exp}). 
}
\vspaceUnderSectionName

\newpage

\section*{Acknowledgements}
This work was supported by Samsung Electronics Co., Ltd. and Institute of Information \& Communications Technology Planning \& Evaluation (IITP) grant funded by the Korea government (MSIT) (No. 2022-0-00871, Development of AI Autonomy and Knowledge Enhancement for AI Agent Collaboration) (No. 2019-0-00075, Artificial Intelligence Graduate School Program (KAIST)).

\bibliography{iclr2024_conference}
\bibliographystyle{iclr2024_conference}
\newpage
\appendix
\section{Appendix}{
    In this section, we provide a proof of each theorem.
In addition, we extend the results from Section~\ref{sec:theorymethod}, which assume $v_{i}\in C_{1}$ is assumed, to include cases where $v_{i}\in C_{0}$.

\subsection{Proof of Theorem~\ref{thm:mainthm}}\label{sec:beneficial}
\begin{proof}
    {We first analyze a node $v_{i}$ that belongs to $C_{1}$ (i.e., $v_{i} \in C_{1}$).}
    By definition, $\mathbb{E}_{\vecx}[\1_{{\calF(\vecx_{i}) = 1}}] = P_{\vecx}\left(f(\vecx_{i} ; \vecmu_{1}, \mathbf{I}) > f(\vecx_{i} ; \vecmu_{0}, \mathbf{I})\right)$ holds.
   Thus, $\mathbb{E}_{\vecx}[\1_{{\calF(\vecx_{i}) = 1}}]$ is determined by the distribution of $f(\vecx_{i} ; \vecmu_{1}, \mathbf{I}) > f(\vecx_{i} ; \vecmu_{0}, \mathbf{I})$, where a random variable is $\vecx_{i}$.
   We provide the exact formula for this inequality:
    \begin{align*}
        &f(\vecx_{i} ; \vecmu_{1}, \mathbf{I}) > f(\vecx_{i} ; \vecmu_{0}, \mathbf{I}), \nonumber \\
 \equiv &\exp(-(\vecx_{i} - \vecmu_{1})^{T}(\vecx_{i} - \vecmu_{1})) > \exp(-(\vecx_{i} - \vecmu_{0})^{T}(\vecx_{i} - \vecmu_{0})), \nonumber \\ 
        \equiv  &(\vecx_{i} - \vecmu_{1})^{T}(\vecx_{i} - \vecmu_{1}) < (\vecx_{i} - \vecmu_{0})^{T}(\vecx_{i} - \vecmu_{0}), \nonumber \\
        = &\vecx_{i}^{T}\vecx_{i} -2\vecmu_{1}^{T}\vecx_{i} + \vecmu_{1}^{T}\vecmu_{1} < 
        \vecx_{i}^{T}\vecx_{i} -2\vecmu_{0}^{T}\vecx_{i} + \vecmu_{0}^{T}\vecmu_{0}, \nonumber \\
        \equiv &(\vecmu_{1} - \vecmu_{0})^{T}\vecx_{i} + \vecmu_{0}^{T}\vecmu_{0} - \vecmu_{1}^{T}\vecmu_{1} > 0 \equiv \vec{\mathbf{1}}^{T}\vecx_{i} > 0, \quad \because \text{definition of $\vecmu_{1}$ and $\vecmu_{0}$ in Assumption~\ref{assump:featurelabel}.}\label{eq:lastlineofthesum}
    \end{align*}
    Thus, $\mathbb{E}_{\vecx}[\1_{{\calF(\vecx_{i}) = 1}}]$ is equivalent to $P_{\vecx}(\vec{\mathbf{1}}^{T}\vecx_{i} > 0)$.
    In a similar sense, $\mathbb{E}_{\vecx}[\1_{{\calF(\vecz_{i}) = 1}}]$ is equivalent to $P_{\vecx}(\vec{\mathbf{1}}^{T}\vecz_{i} > 0)$, since as mentioned in Section~\ref{sec:theorymethod}, $\vecz_{i}$ is a function of $\vecx_{k}, \forall v_{k} \in \calV$.
    
    Now, we formalize $\vecz_{i}$.
    Note that $\vecz_{i} = \vecx_{i} - \gamma \nabla_{\vecx_{i}}\mathcal{L}$ holds by~\ref{item:update}.
    To further elaborate on this equation, we first detail
    $\nabla_{\vecx_{i}}\mathcal{L}$:
    \begin{align}
        \nabla_{\vecx_{i}}\mathcal{L} &= \frac{\partial \left(-\log({p_{(\mathbf{X}, \calE, \Theta)}\left(v_{i} \;\middle|\; q_{ij}\right)})\right)}{\partial \vecx_{i}}
        = \frac{\partial \left({-\log \left(\frac{\exp(\vecx_{i}^{T}(\sum_{v_{k} \in q_{ij}}\vecx_{k}))}{\sum_{v_{t} \in \calV}\exp(\vecx_{t}^{T}(\sum_{v_{k} \in q_{ij}}\vecx_{k}))} \right)}\right)}{\partial \vecx_{i}}, \\
        &= -\frac{\partial \left(\vecx_{i}^{T}(\sum_{v_{k} \in q_{ij}}\vecx_{k})\right)}{\partial \vecx_{i}} + \frac{\partial\left(\log\left(\sum_{v_{t} \in \calV}\exp(\vecx_{t}^{T}(\sum_{v_{k} \in q_{ij}}\vecx_{k}))\right)\right)}{\partial \vecx_{i}}, \\
        &= -\left(\sum_{v_{k} \in q_{ij}} \vecx_{k}\right) + \frac{\exp(\vecx_{i}^{T}(\sum_{v_{k} \in q_{ij}}\vecx_{k}))}{\sum_{v_{t} \in \calV}\exp(\vecx_{t}^{T}(\sum_{v_{k} \in q_{ij}}\vecx_{k}))} \left(\sum_{v_{k} \in q_{ij}} \vecx_{k}\right), \\ 
        &= -\left(1 - \frac{\exp(\vecx_{i}^{T}(\sum_{v_{k} \in q_{ij}}\vecx_{k}))}{\sum_{v_{t} \in \calV}\exp(\vecx_{t}^{T}(\sum_{v_{k} \in q_{ij}}\vecx_{k}))} \right)\left(\sum_{v_{k} \in q_{ij}} \vecx_{k}\right).\label{eq:lastlineofgradient}
    \end{align}
    By Eq~(\ref{eq:lastlineofgradient}), $\vecz_{i}$ can be expressed as a function of $\vecx_{k}, \forall v_{k} \in \calV$, as follows: 
    \begin{equation}\label{eq:updateeqproof}
    \vecz_{i} = \vecx_{i} +\gamma \underbrace{\left(1 - \frac{\exp(\vecx_{i}^{T}(\sum_{v_{k} \in q}\vecx_{k}))}{\sum_{v_{t} \in \calV}\exp(\vecx_{t}^{T}(\sum_{v_{k} \in q}\vecx_{k}))}\right)}_{\text{(Term 1)}}\left(\sum_{v_{k} \in q}\vecx_{k}\right).    
    \end{equation}
    Let $\vecx'_{q}$ denotes $\sum_{v_{k} \in q_{ij}}\vecx_{k}$. 
    Furthermore, denote (Term 1) in Eq~(\ref{eq:updateeqproof}) as $f(\vecx) \in (0, 1)$.
    Then, we rewrite Eq~(\ref{eq:updateeqproof}) as $\vecz_{i} = \vecx_{i} + \gamma\left(1 - f(\vecx) \right)\vecx'_{q}$.
    We finally rewrite $\vec{\mathbf{1}}^{T}\vecz_{i}$ as follows:
    \begin{equation}\label{eq:summation}
        \vec{\mathbf{1}}^{T}\vecz_{i} = \vec{\mathbf{1}}^{T}\vecx_{i} + \gamma(1 - f(\vecx))\vec{\mathbf{1}}^{T}\vecx'_{q}.
    \end{equation}
    Let $\beta$ denotes $\vec{\mathbf{1}}^{T}\vecx'_{q}$. 
    Note that by the statement of Theorem~\ref{thm:mainthm}, $\vec{\mathbf{1}}^{T}\vecx'_{q} > 0 \equiv \beta > 0$ holds.
    Thus, our main interest, which is $P_{\vecx}(\vec{\mathbf{1}}^{T}\vecz_{i} > 0)$, can be rewritten as follows:
    \begin{equation}\label{eq:mainproof}
      P_{\vecx}\left(\vec{\mathbf{1}}^{T}\vecz_{i} > 0\right) = P_{\vecx}\left(\left(\vec{\mathbf{1}}^{T}\vecx_{i}  +  \gamma(1 - f(\vecx))\beta\right) > 0\right).
    \end{equation}
    Note that when $\vec{\mathbf{1}}^{T}\vecx_{i} > 0$ holds, $\vec{\mathbf{1}}^{T}\vecz_{i} > 0$ holds as well, since $\gamma(1 - f(\vecx))\beta > 0$ holds. 
    Thus, Eq~(\ref{eq:mainproof}) is split as follows:
    \begin{align}
        P_{\vecx}\left(\vec{\mathbf{1}}^{T}\vecz_{i} > 0\right) &= 
        P_{\vecx}\left(\vec{\mathbf{1}}^{T}\vecx_{i} > 0\right) + 
        P_{\vecx}\left(-\gamma(1 - f(\vecx))\beta < \vec{\mathbf{1}}^{T}\vecx_{i} < 0\right), \nonumber \\
        = \mathbb{E}_{\vecx}[\1_{\calF(\vecz_{i}) = 1}] &= \underbrace{\mathbb{E}_{\vecx}[\1_{\calF(\vecx_{i}) = 1}]}_{\text{(a) Expected accuracy of naive $\vecx_{i}$}} + 
        \underbrace{P_{\vecx}\left(-\gamma \beta < \frac{\vec{\mathbf{1}}^{T}\vecx_{i}}{(1 - f(\vecx))} < 0 \right).}_{\text{(b) Additional gain via hyperedge filling }}\label{eq:finalline}
    \end{align}
    Note that the gain term, which is the (b) term of Eq~(\ref{eq:finalline}), is always greater than zero.

    \textbf{Generalization to $v_{i} \in C_{0}$.}
    Now, we analyze a node $v_{i}$ that belongs to $C_{0}$ (i.e., $v_{i} \in C_{0}$).
    In this case, the previous condition $\vec{\mathbf{1}}^{T}\vecx'_{q} > 0 \equiv \beta > 0$ becomes $\vec{\mathbf{1}}^{T}\vecx'_{q} < 0 \equiv \beta < 0$. 
    In a similar sense, for the expected accuracy: $P_{\vecx}\left(\vec{\mathbf{1}}^{T}\vecz_{i} > 0\right)$ is changed as $P_{\vecx}\left(\vec{\mathbf{1}}^{T}\vecz_{i} < 0\right)$.
    In this setting, we can directly extend the result of Eq~(\ref{eq:mainproof}) as follows:
    \begin{equation}\label{eq:mainproof2}
        P_{\vecx}\left(\vec{\mathbf{1}}^{T}\vecz_{i} < 0\right) = P_{\vecx}\left(\left(\vec{\mathbf{1}}^{T}\vecx_{i}  +  \gamma(1 - f(\vecx))\beta\right) < 0\right).
    \end{equation}
    By employing the above proof, we can obtain the following result (note that $\beta < 0$ in this case):
    \begin{align}
        P_{\vecx}\left(\vec{\mathbf{1}}^{T}\vecz_{i} < 0\right) &= 
        P_{\vecx}\left(\vec{\mathbf{1}}^{T}\vecx_{i} < 0\right) + 
        P_{\vecx}\left(0 < \vec{\mathbf{1}}^{T}\vecx_{i} < -\gamma(1 - f(\vecx))\beta\right), \nonumber \\
        = \mathbb{E}_{\vecx}[\1_{\calF(\vecz_{i}) = 0}] &= \underbrace{\mathbb{E}_{\vecx}[\1_{\calF(\vecx_{i}) = 0}]}_{\text{(a) Expected accuracy of naive $\vecx_{i}$}} + 
        \underbrace{P_{\vecx}\left(-\gamma \beta < \frac{\vec{\mathbf{1}}^{T}\vecx_{i}}{(1 - f(\vecx))} < 0 \right).}_{\text{(b) Additional gain via hyperedge filling }}\label{eq:finalline2}
    \end{align}
    Thus, we can also derive the same result for $v_{i} \in C_{0}$.
\end{proof}

\subsection{Proof of Theorem~\ref{thm:mainthm2}.}\label{sec:homophilic}

\begin{remark}
We have denoted the {affinity parameter} in Assumption~\ref{assump:topology} as $\mathscr{P}$ to distinguish it from the hyperedge filling probability $p_{\left(\mathbf{X}, \calE, \Theta\right)}$. 
In this proof, by allowing a slight duplication in notation, we let $\mathscr{P} \coloneqq p$, since we do not use $p_{\left(\mathbf{X}, \calE, \Theta\right)}$ in this proof. 
In addition, we let $e_{j} \coloneqq e$ and $q_{ij} \coloneqq q$.
\end{remark}
\begin{proof}
    We first derive the functional form of $P_{\vecx, e}\left(\vec{\mathbf{1}}^{T}\left(\sum_{v_{k} \in q}\vecx_{k}\right) > 0\;\middle|\; p \right)$.
    By $v_{i} \in e_{j} \cap C_{1}$, which is the condition of the theorem, the following holds:
    \begin{equation}\label{eq:step1}
        P_{\vecx, e}\left(\vec{\mathbf{1}}^{T}\left(\sum_{v_{k} \in q}\vecx_{k}\right) > 0\;\middle|\; p \right) = \frac{P_{\vecx, e}\left(\vec{\mathbf{1}}^{T}\left(\sum_{v_{k} \in q}\vecx_{k}\right) > 0, v_{i} \in e\;\middle|\; p\right)}{P_{e}\left(v_{i} \in e\;\middle|\; p\right)}.
    \end{equation}
    It is important to note that if the number of nodes that belong to both $C_{1}$ and $e_{j}$ is decided, denoted by $s$ (i.e., $s \coloneqq \vert e \cap C_{1}\vert$), the distribution of $\vec{\mathbf{1}}^{T}\left(\sum_{v_{k} \in q}\vecx_{k}\right)$ is automatically decided, since $\vecx_{t}, \forall v_{t} \in \calV$ is independently generated from Gaussian distribution, whose mean vector is decided according to the class of $v_{t}$.
    This is because of $\sum_{v_{k} \in q}\vecx_{k} \sim \calN((2s - 1 - S)\vecmu_{1}, (S-1)\matI)$, where $S$ is a size of $e$ for a given $s$ (i.e., $\vert e \vert = S$).
    From this result, the following two results are induced:
    \begin{align}
    \vec{\mathbf{1}}^{T}\left(\sum_{v_{k} \in q}\vecx_{k}\right) &\sim \calN\left(\frac{(2s - 1 -S)d}{2}, (S-1)d\right), \\
    P_{\vecx}\left( \vec{\mathbf{1}}^{T}\left(\sum_{v_{k} \in q}\vecx_{k}\right) > 0 \;\middle|\; s\right) &= \Phi\left((2s-S-1)\sqrt{\frac{d}{4(S-1)}}\right).\label{eq:goalcdf}
    \end{align}
    where $\Phi(\cdot)$ is a C.D.F. of the standard Gaussian distribution.
    Then, we rewrite Eq~(\ref{eq:step1}) as follows:
    \begin{align}
        \equiv& \sum_{s = 0}^{S}\frac{P_{\vecx, e}\left(\vec{\mathbf{1}}^{T}\left(\sum_{v_{k} \in q}\vecx_{k}\right) > 0 , s, v_{i} \in e\right)}{P_{e}\left(v_{i} \in e\;\middle|\; p \right)}, \\
        =& \sum_{s=0}^{S}\frac{P_{\vecx}\left(\vec{\mathbf{1}}^{T}\left(\sum_{v_{k} \in q}\vecx_{k}\right) > 0 \;\middle|\; s\right) \times P_{e}(s, v_{i} \in e)}{P_{e}\left(v_{i} \in e\;\middle|\; p\right)}.\label{eq:midgoal}
    \end{align}
    
    By Assumption~\ref{assump:topology}, each $P_{e}(s, v_{i} \in e)$ is derived as follows~\footnote{The term \textit{prob}. indicates probability.}:
    \begin{equation}\label{eq:eachprob}
        P_{e}(s, v_{i} \in e~\vert~p) = {\binom{S}{s}\left(\underbrace{\frac{p^{s}(1-p)^{S-s}}{2}}_{\text{prob. of $s$ at $c = 1$}} + \underbrace{\frac{(1-p)^{s}p^{S-s}}{2}}_{\text{prob. of $s$ at $c = 0$}}\right)}\underbrace{\left(1 - \frac{\binom{N - 1}{s}}{\binom{N}{s}}\right)}_{\text{prob. of $v_{i} \in C_{1} \cap e$}}.
    \end{equation}
    
    By using Eq~(\ref{eq:eachprob}), we further detail $P_{e}(v_{i} \in e~\vert~p)$ as follows:
    \begin{align}
        P_{e}(v_{i} \in e ~\vert~ p) &= \sum_{s = 0}^{S}{\binom{S}{s}\left({\frac{p^{s}(1-p)^{S-s}}{2}}+ {\frac{(1-p)^{s}p^{S-s}}{2}}\right)}{\left(1 - \frac{\binom{N - 1}{s}}{\binom{N}{s}}\right)}, \\
        &= \sum_{s = 0}^{S}{\binom{S}{s}\left({\frac{p^{s}(1-p)^{S-s}}{2}} + {\frac{(1-p)^{s}p^{S-s}}{2}}\right)}\frac{s}{N}, \\
        &= \frac{1}{N}\sum_{s = 0}^{S}{\binom{S}{s}\left({\underbrace{\frac{sp^{s}(1-p)^{S-s}}{2}}_{s \sim B(S, p)}} + \underbrace{\frac{s(1-p)^{s}p^{S-s}}{2}}_{s \sim B(S, 1-p)}\right)}, \\
        & = \frac{1}{N}\left(\frac{Sp}{2} + \frac{S(1-p)}{2}\right),\because \text{expectation of Binomial distribution,} \\
        &= \frac{S}{2N}.\label{eq:denom}
    \end{align}
    With the result of Eq~(\ref{eq:denom}) and Eq~(\ref{eq:goalcdf}), we rewrite Eq~(\ref{eq:midgoal}) as follows:
    \begin{align}\label{eq:maininterest}
        &\equiv \frac{2N}{S}\sum_{s = 0}^{S}{\binom{S}{s}\left({\frac{p^{s}(1-p)^{S-s}}{2}} + {\frac{(1-p)^{s}p^{S-s}}{2}}\right)}\frac{s}{N} \Phi\left((2s-1-S)\sqrt{\frac{d}{4(S-1)}}\right), \\
        &= \frac{1}{S}\sum_{s = 0}^{S}{\binom{S}{s}s\left({{p^{s}(1-p)^{S-s}}} + {{(1-p)^{s}p^{S-s}}}\right)}\Phi\left((2s-1-S)\sqrt{\frac{d}{4(S-1))}}\right).\label{eq:objective}
    \end{align}
    Note that our main function, which is $P_{\vecx, e}\left(\vec{\mathbf{1}}^{T}\left(\sum_{v_{k} \in q}\vecx_{k}\right) > 0\;\middle|\; p\right)$, is equal to Eq~(\ref{eq:objective}).
    
    It is important to note that Eq~(\ref{eq:objective}) is symmetric w.r.t. 0.5 in $p\in [0,1]$.
    Thus, if we prove Eq~(\ref{eq:objective}) is strictly increasing on $p \in [0.5, 1]$, the proof of the statement that Eq~(\ref{eq:objective}) is strictly decreasing on $p \in [0, 0.5]$ is straight forward.
    Hence, we prove the second statement of the theorem by showing that Eq~(\ref{eq:objective}) is strictly increasing on $p \in [0.5, 1]$.
    
    Now, we show the first and second statements of the theorem. 
    Here, we first show the second statement, and from the result, we further prove the first statement.
    

    \textbf{Second statement.}
    We first rewrite Eq~(\ref{eq:objective}) by adequately mixing two binomial functions.
    For simplicity, we denote $\Phi((2s-1-S)\sqrt{\frac{d}{4(S-1)}})$ as $\phi(s)$, since the value that changes in $\Phi(\cdot)$ is only $s$ and other terms are fixed constants.
    From this, Eq~(\ref{eq:objective}) is rewritten as follows:
    \begin{align}
        =& \frac{1}{S}\sum_{s = 0}^{S}{\binom{S}{s}s\left({{p^{s}(1-p)^{S-s}}} + {{(1-p)^{s}p^{S-s}}}\right)}\phi(s), \\
        =& \frac{1}{S}\underbrace{\sum_{s = 0}^{S}{\binom{S}{s}s{{p^{s}(1-p)^{S-s}}}\phi(s)}}_{\text{(Term 1)}} + \frac{1}{S}\underbrace{\sum_{s = 0}^{S}{\binom{S}{s}s{{(1-p)^{s}(p)^{S-s}}}\phi(s)}}_{\text{(Term 2)}}.\label{eq:term2}
    \end{align}
    We define $s' \coloneqq (S-s)$. 
    Due to the symmetric characteristic of the binomial function and standard Gaussian distribution, we can rewrite (Term 2) in Eq~(\ref{eq:term2}) as follows:
    \begin{equation}\label{eq:newfinaleq}
        \equiv \frac{1}{S}\sum_{s'=0}^{S}\binom{S}{s'}p^{s'}(1-p)^{S-s'}(S-s')\Phi\left((S - 2s' - 1)\sqrt{\frac{d}{4(S-1)}}\right).
    \end{equation}
    Since we can regard $s$ and $s'$ as equivalent terms in our framework, 
    we add (Term 1) of Eq~(\ref{eq:term2}) and Eq~(\ref{eq:newfinaleq}) as follows:
    \begin{equation}
        \sum_{s = 0}^{S}\binom{S}{s}\frac{p^{s}(1-p)^{S-s}}{S}
        \underbrace{\big[(S-s)\Phi\left( (S-2s -1) \sqrt{\frac{d}{4(S-1)}}\right) + s\Phi\left( (2s-S-1) \sqrt{\frac{d}{4(S-1)}}\right)\big]}_{\text{Term (a)}}.\label{eq:forminterest}
    \end{equation}
    
    We aim to show that Eq~(\ref{eq:forminterest}) is a strictly increasing function w.r.t. $p \in [0.5, 1]$.
    For simplicity, we let $K \coloneqq \sqrt{d/(4(S-1))}$.
    We first discard a constant term $1/S$ from Eq~(\ref{eq:forminterest}) for simplicity.
    Note that without Eq~(\ref{eq:forminterest}) Term (a), Eq~(\ref{eq:forminterest}) is equivalent to 1, and this is a sum of binomial probability mass functions.
    
    That is, we can think of Eq~(\ref{eq:forminterest}) as a weighted average of the following values:
    \begin{equation}\label{eq:gaussianweight}
    (S-s)\Phi\left( (S-2s -1) K\right) + s\Phi\left( (2s-S-1) K\right), \forall s \in \{0, 1, \cdots, S\}.
    \end{equation}
    Specifically, we let $\binom{S}{s}p^{s}(1-p)^{S-s}$ as a weight of $\big[(S-s)\Phi\left( (S-2s -1) K\right) + s\Phi\left( (2s-S-1) K\right)\big]$.
    Note that values of Eq~(\ref{eq:gaussianweight}) and their weights are both symmetric about $s = S/2$: which means $s$ and $S-s$ have the same weight value.
    Thus, we can rewrite Eq~(\ref{eq:forminterest}) for an odd number $S$ as:
    \begin{align}\label{eq:rewrittenmain}
        \equiv &\sum_{s=\lfloor S/2 \rfloor}^{S} \binom{S}{s}(p^{s}(1-p)^{S-s} + (1-p)^{s}p^{S-s}) \nonumber \\
        &\times \big[(S-s)\Phi\left( (S-2s -1) K\right) + s\Phi\left( (2s-S-1) K\right)\big].
    \end{align}
    While this formulation is for an odd number $S$, we can extend further results to an even number $S$.
    
    We first show that Eq~(\ref{eq:gaussianweight}) is an increasing function w.r.t. $s \in \{\lfloor S/2\rfloor, \lfloor S/2\rfloor + 1, \cdots,  S\}$.
    For two cases where $s = k$ and $s = k+1$, their Eq~(\ref{eq:gaussianweight}) is expressed as follows:
    \begin{align}
    &\big[(S-k - 1)\Phi\left( S-2k-3 \right) + (k+1)\Phi\left(2k-S+1\right)\big]  \label{eq:gaussianinc1}\\
    >&\big[(S-k)\Phi\left(S-2k-1\right) + k\Phi\left(2k-S-1\right)\big], \label{eq:gaussianinc2}\\
    \equiv &k\left( \Phi\left(2k-S+1\right) - \Phi \left(2k-S-1\right) \right) + \Phi\left(2k-S+1\right) \\
    > &(S-k)\left(\Phi\left(S-2k-1\right) - \Phi\left(S-2k-3\right)\right) + \Phi\left( S-2k-3 \right),
    \end{align}
    By the condition of $s \geq \lfloor S/2\rfloor$, $k > S - k$ holds. 
    In addition, by the characteristic of the CDF of the standard normal distribution, $ \left( \Phi\left(2k-S+1\right) - \Phi \left(2k-S-1\right) \right) > \left(\Phi\left(S-2k-1\right) - \Phi\left(S-2k-3\right)\right)$ also holds.
    Moreover, since $s \geq \lfloor S/2\rfloor$, $\Phi\left(2k-S+1\right) > \Phi\left( S-2k-3 \right)$ also holds.
    In sum, Eq~(\ref{eq:gaussianinc1}) $>$ Eq~(\ref{eq:gaussianinc2}) holds, and this result implies that Eq~(\ref{eq:gaussianweight}) is an increasing function w.r.t. $s \in \{\lfloor S/2\rfloor, \lfloor S/2\rfloor + 1, \cdots,  S\}$.

    Since greater $s$ has a higher value of Eq~(\ref{eq:gaussianweight}), we can naturally conclude that Eq~(\ref{eq:rewrittenmain}), which is our goal function, is an increasing function w.r.t. $p$ if weights are more assigned to the terms related to the higher $s$ as $p$ increases.
    From this rationale, we show the following statement: $\exists s^{*}\in \{\lfloor S/2 \rfloor, \cdots, S\}$, s.t. $\partial w(p,s)/\partial p > 0, \forall s \geq s^{*}$ and $\partial w(p,s)/\partial p < 0, \forall s \leq s^{*}$, where $w(p,s) = \binom{S}{s} \left(p^{s}(1-p)^{S-s} + (1-p)^{s}p^{S-s}\right)$.
    We first derive the first derivative of $w(p, s)$:
    \begin{align}
        \frac{\partial\omega(p,s)}{\partial p}= \binom{S}{s}(&\underbrace{sp^{s-1}(1-p)^{S-s}}_{(a)} - \underbrace{(S-s)p^{s}(1-p)^{S-s-1}}_{(b)} \label{eq:num1} \\
        &- \underbrace{s(1-p)^{s-1}p^{S-s}}_{(c)} + \underbrace{(S-s)(1-p)^{s}p^{S-s-1}}_{(d)}). \label{eq:num2}
    \end{align}
    Then, we show that: $\exists s^{*}\in \{\lfloor S/2 \rfloor, \cdots, S\}$, s.t. $((a) - (b)) > ((c) - (d)), \forall s \geq s^{*}$ and $((a) - (b)) < ((c) - (d)), \forall s \leq s^{*}$.
    We elaborate on $(a) - (b)$ and $(c) - (d)$ as follows: 
    \begin{align}
        (a) - (b) &=p^{s-1}(1-p)^{S-s-1}(s(1-p) - (S-s)p), \\
        &= p^{s-1}(1-p)^{S-s-1}(s-sp -Sp+sp), \\ 
        (c) - (d) &= (1-p)^{s-1}p^{S-s-1}(-sp + (S-s)(1-p)), \\
        &= (1-p)^{s-1}p^{S-s-1}(-sp + (S-s) -Sp +sp).
    \end{align}
    \begin{remark}\label{rem:min}
    When $p = 1/2, ((a) - (b)) - ((c) - (d))= 0$ holds, which indicates that if Eq~(\ref{eq:forminterest}) turns out to be an increasing function w.r.t. $p$, Eq~(\ref{eq:forminterest}) has its minimum value at $p = 1/2$.
    Thus, we narrow down the range of $p$ as $p > 1/2$.
    \end{remark}
    
    We first formalize $(a)-(b) = (c)-(d)$ equation.
    To this end, we utilize the logarithm function to simplify the computation (note that $\binom{S}{s}$ is omitted).
    \begin{align}\label{eq:proofofinc}
        &\equiv p^{s-1}(1-p)^{S-s-1}(s - Sp) = (1-p)^{s-1}p^{S-s-1}(s - S+Sp),\\
        &\equiv (s-1)\log{p} + (S-s-1)\log{(1-p)} + \log{(s - Sp)}, \nonumber\\
        &= (s-1)\log{(1-p)} + (S-s-1)\log{p} + \log{(s - S+Sp)}, \nonumber \\ 
        &= (s-1)\log{(\frac{p}{1-p})} + (S-s-1)\log{\frac{1-p}{p}} + \log{\frac{s-Sp}{s-S(1-p)}}=0, \nonumber \\
        &= \underbrace{(2s-S)\log{\frac{p}{1-p}}}_{\text{Term (a)}} + \underbrace{\log{\frac{s-Sp}{s-S(1-p)}}}_{\text{Term (b)}} =0.\label{eq:lastequation}
    \end{align}
    To show the required conditions, it is enough to show that (A) Eq~(\ref{eq:lastequation}) $<0$ holds at $s = S/2$, and (B) Eq~(\ref{eq:lastequation}) $>0$ holds at $s = S$.
    This is because since Eq~(\ref{eq:lastequation}) is an increasing function w.r.t. $s$, the above two conditions imply that there exists $s^{*}$, which is our interest, between $S/2$ and $S$.
    When plugging $S/2$ to $s$, $2s-S$ gets zero.
    Since $p > 1/2$ is given (see the above \textbf{Remark}), $\log{\frac{s-Sp}{s-S(1-p)}}$ becomes negative. 
    Thus, (A) is proven.
    We now plug $S$ into $s$ and obtain the following result:
    \begin{align}
        &\equiv (2S-S)\log{\frac{p}{1-p}} + \log{\frac{S-Sp}{S-S(1-p)}},\\
        &\equiv S\log{\frac{p}{1-p}} + \log{\frac{S(1-p)}{Sp}} \equiv S\log{\frac{p}{1-p}} - \log{\frac{p}{1-p}},\\
        &\equiv (S-1)\log{\frac{p}{1-p}} > 0, \because p > 1/2.
    \end{align}
    Thus, we can conclude that Eq~(\ref{eq:forminterest}), which is our interest, is an increasing function w.r.t. $p > 1/2$, and has its minimum value at $p = 1/2$.

    Now, we show that the above proof can also be applied to an even number $S$.
    Note that for an even number $S$, Eq~(\ref{eq:forminterest}) is equal to the addition of Eq~(\ref{eq:rewrittenmain}) that starts with $s = S/2 + 1$ and Eq~(\ref{eq:addpart}),
    \begin{equation}\label{eq:addpart}
        w(p,s) = \binom{S}{\frac{S}{2}}\times  \frac{S}{2}\times p^{\frac{S+1}{2}}(1-p)^{\frac{S+1}{2}}.
    \end{equation}
    We derive $\partial w(p,s) / \partial p$ as follows:
    \begin{equation}
        \frac{\partial w(p,s)}{\partial p} = \left(\binom{S}{\frac{S}{2}}\times  \frac{S}{2} \right)\times \left(\frac{S}{2}p^{\frac{S-2}{2}}(1-p)^{\frac{S-2}{2}}\left(1 - 2p\right)\right) \leq 0, \forall p \in [0.5, 1].
    \end{equation}
    Since $\partial w(p,s) / \partial w(p,s) < 0$ holds for $s = (S+1)/2$, our proof is still valid for an even number $S$.
    We finalize the proof of \textbf{Second statement}.
    
    \textbf{First statement.} From the above proof, we now show the lower bound of Eq~(\ref{eq:maininterest}), which is the first statement. 
    Note that the value of Eq~(\ref{eq:maininterest}) (equivalent to Eq~(\ref{eq:forminterest})) is minimized at $p=0.5$.
    Thus, we prove the statement by finding the lower bound of Eq~(\ref{eq:objective}) at $p=0.5$. For simplicity, we again use the following notation $K \coloneqq \sqrt{\frac{d}{4(S-1)}}$.
    \begin{align}
        =& \frac{1}{S}\sum_{s = 0}^{S}{\binom{S}{s}s\left({{p^{s}(1-p)^{S-s}}} + {{(1-p)^{s}p^{S-s}}}\right)}\Phi\left((2s-1-S)\sqrt{\frac{d}{4(S-1))}}\right), \\
        =& \frac{1}{S}\sum_{s = 0}^{S}{\binom{S}{s}2s\left(\frac{1}{2}\right)^{S}}\Phi\left((2s-S-1)K\right). 
    \end{align}
    First, consider an even number $S$. 
    Then, for $\Phi((2s-S-1)K), \forall s \in \{1, \cdots , S\}$, consider the below relations:
    \begin{equation}\label{eq:minproof1}
        1\underbrace{\Phi(-S+1)}_{(a1)} + 2\underbrace{\Phi(-S+3)}_{(b1)} + \cdots + (S-1)\underbrace{\Phi(S-3)}_{(b2)} + S\underbrace{\Phi(S-1)}_{(a2)}.
    \end{equation}
    In Eq~(\ref{eq:minproof1}), $(a1) + (a2) = 1$ holds, and $(b1) + (b2) = 1$ holds also.
    In this manner, we can pair all terms in Eq~(\ref{eq:minproof1}).
    In a general sense, the following holds:
    $k_{11}k_{12} + k_{21}k_{22}$ such that $k_{11} + k_{21} = S, k_{12} + k_{22} = 1$, $k_{11} < k_{12} $, and $ k_{21} < k_{22}$ hold.
    Note that $k_{11}k_{12} + k_{21}k_{22}$ is lower-bounded by $\frac{k_{11} + k_{21}}{2}$ since distributing the weight of a larger value to a smaller value will decrease the overall value.
    From this result, we can obtain a lower bound of Eq~(\ref{eq:minproof1}) as follows:
    \begin{equation}\label{eq:minproof2}
        \frac{1}{S}\sum_{s = 0}^{S}{\binom{S}{s}2s\left(\frac{1}{2}\right)^{S}}\Phi\left((2s-S-1)K\right) \geq \frac{1}{S}\sum_{s = 0}^{S}{\binom{S}{s}2s\left(\frac{1}{2}\right)^{S}}\frac{1}{2} = 0.5.
    \end{equation}
    Now, we consider an odd number $S$. 
    We can extend the proof of an even number $S$ case, while additionally considering the following term: $s = \frac{S+1}{2} ; \Phi(0)$ that does not have any pair.
    On the other hand, since $\Phi(0) = 0.5$, we still ensure Eq~(\ref{eq:minproof2}) holds in this case as well.
    Thus, for both even- and odd-number $S$, the value of Eq~(\ref{eq:objective}) at $p=0.5$ is lower-bounded by $0.5$, which is a global lower bound of Eq~(\ref{eq:objective}).
    We finalize the proof of \textbf{First statement}.

    Moreover, as described, Eq~(\ref{eq:objective}) is symmetric w.r.t. $p = 0.5$ on $p \in [0, 1]$.
    In conclusion, we have verified that Eq~(\ref{eq:objective}) is strictly decreasing at $\mathscr{P} \in [0, 0.5]$, by extending the result at $\mathscr{P} \in [0.5, 1.0]$.

    \textbf{Generalization to $v_{i} \in C_{0}$.}
    Note that the required condition for the improvement in $v_{i} \in C_{0}$ is $\mathbf{1}^{T}\sum_{v_{k} \in q}\vecx_{k} < 0$.
    Thus, we further investigate $P_{\vecx, e}\left(\mathbf{1}^{T}\sum_{v_{k} \in q}\vecx_{k} < 0 \;\middle|\; \mathscr{P} \right)$, where $v_{i} \in e$ and $q = e \setminus \{v_{i}\}$ (refer to Remark).
    First, let $s$ denote the number of nodes that belong to both $e_{j}$ and $C_{0}$.
    Note that we aim to investigate the following probability:
    \begin{equation}\label{eq:newstep1}
        P_{\vecx, e}\left(\vec{\mathbf{1}}^{T}\left(\sum_{v_{k} \in q}\vecx_{k}\right) < 0\;\middle|\; p \right) = \frac{P_{\vecx, e}\left(\vec{\mathbf{1}}^{T}\left(\sum_{v_{k} \in q}\vecx_{k}\right) < 0, v_{i} \in e\;\middle|\; p\right)}{P_{e}\left(v_{i} \in e\;\middle|\; p\right)}.
    \end{equation}
    Here, the following holds:
    \begin{align}
    \sum_{v_{k} \in q}\vecx_{k} &\sim \calN((2s - 1 - S)\vecmu_{0}, (S-1)\matI), \\
    \vec{\mathbf{1}}^{T}\left(\sum_{v_{k} \in q}\vecx_{k}\right) &\sim \calN\left(-\frac{(2s - 1 -S)d}{2}, (S-1)d\right), \\
    P_{\vecx}\left( \vec{\mathbf{1}}^{T}\left(\sum_{v_{k} \in q}\vecx_{k}\right) < 0 \;\middle|\; s\right) &= \Phi\left((2s-S-1)\sqrt{\frac{d}{4(S-1)}}\right)\label{eq:gen1}.
    \end{align}
    Note that Eq.~(\ref{eq:gen1}) is equivalent to Eq.~(\ref{eq:goalcdf}).
    Moreover, the following also holds:
    \begin{equation}\label{eq:neweachprob}
        P_{e}(s, v_{i} \in e ~\vert~ p) = {\binom{S}{s}\left(\underbrace{\frac{p^{s}(1-p)^{S-s}}{2}}_{\text{prob. of $s$ at $c = 0$}} + \underbrace{\frac{(1-p)^{s}p^{S-s}}{2}}_{\text{prob. of $s$ at $c = 1$}}\right)}\underbrace{\left(1 - \frac{\binom{N - 1}{s}}{\binom{N}{s}}\right)}_{\text{prob. of $v_{i} \in C_{0} \cap e$}}.
    \end{equation}
    Here again, Eq.~(\ref{eq:eachprob}) is equivalent to Eq.~(\ref{eq:neweachprob}).
    Finally, we guarantee that Eq.~(\ref{eq:newstep1}) is expressed as follows:
    \begin{align}
    \equiv& \sum_{s=0}^{S}\frac{P_{\vecx}\left(\vec{\mathbf{1}}^{T}\left(\sum_{v_{k} \in q}\vecx_{k}\right) > 0 \;\middle|\; s\right) \times P_{e}(s, v_{i} \in e)}{P_{e}\left(v_{i} \in e\;\middle|\; p\right)}, \\
    \equiv& \frac{2N}{S}\sum_{s = 0}^{S}{\binom{S}{s}\left({\frac{p^{s}(1-p)^{S-s}}{2}} + {\frac{(1-p)^{s}p^{S-s}}{2}}\right)}\frac{s}{N} \Phi\left((2s-1-S)\sqrt{\frac{d}{4(S-1)}}\right), \\
        =& \frac{1}{S}\sum_{s = 0}^{S}{\binom{S}{s}s\left({{p^{s}(1-p)^{S-s}}} + {{(1-p)^{s}p^{S-s}}}\right)}\Phi\left((2s-1-S)\sqrt{\frac{d}{4(S-1))}}\right).\label{eq:newobjective}
    \end{align}
    Importantly, Eq.~(\ref{eq:newobjective}) is equal to Eq.~(\ref{eq:objective}).
    This result implies that Theorem~\ref{thm:mainthm2} and its proof is generalizable to the case of $v_{i} \in C_{0}$ also.

\end{proof}

\subsection{Empirical demonstration on the proof.}\label{sec:empproof}
To empirically validate the proof provided above, we conduct toy experiments.
We consider every combination of $S \in \{2,3,4,5,6,7,8\}$, which is a size of $e_{j}$ (i.e., $\vert e_{j}\vert$), and $d \in \{2,3,4,5,6,7,8\}$, which is a dimension of input node feature $\vecx_{i} \in \mathbb{R}^{d}, \forall v_{i} \in \calV$.
Then, we visualize how Eq~(\ref{eq:maininterest}) varies w.r.t. $S$, $d$, and $\mathscr{P} \in [0, 1]$.
Note that $X$-axis of each plot corresponds to $\mathscr{P}$.
As shown in Figure~\ref{fig:proof}, for all cases, we can verify that the value of $P_{\vecx, e}\left(\vec{\mathbf{1}}^{T}\left(\sum_{v_{k} \in q}\vecx_{k}\right) > 0\;\middle|\; \mathscr{P} \right)$ is strictly-increasing w.r.t. $\mathscr{P} \in [0.5, 1]$ and -decreasing w.r.t. $\mathscr{P} \in [0, 1]$ (statement 2), and the value is lower bounded by 0.5 (statement 1).
Thus, we can ensure that Theorem~\ref{thm:mainthm2} is valid through empirical verification.

\begin{figure*}[t]
    \centering
    \includegraphics[width=1.0\textwidth]{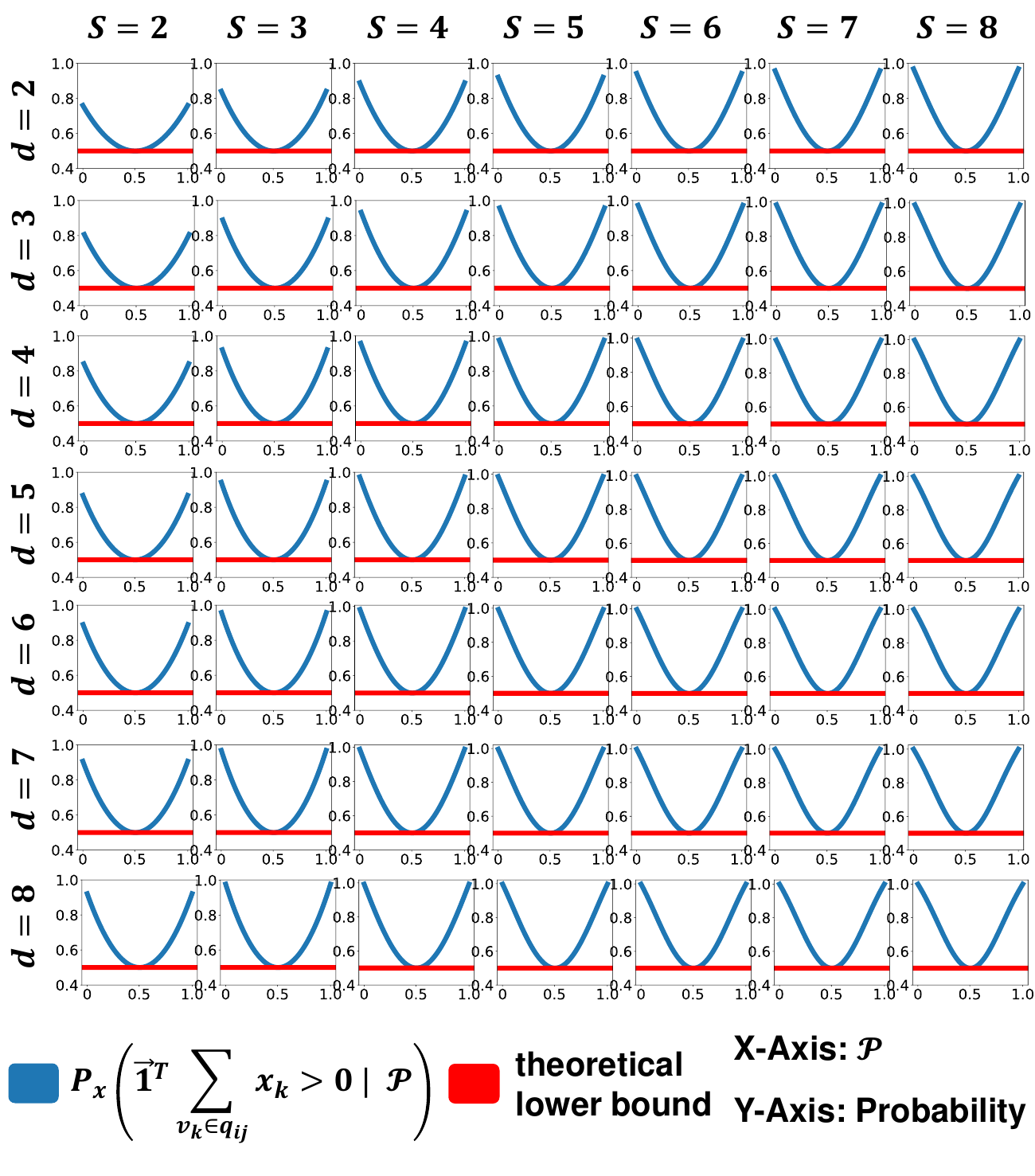}
    \caption{\label{fig:proof} 
    Empirical demonstration of Theorem~\ref{thm:mainthm2}.
    Note that $S$ denotes the size of a hyperedge, and $d$ denotes the dimension of features.
    As stated, $P_{\vecx, e}\left(\vec{\mathbf{1}}^{T}\left(\sum_{v_{k} \in q}\vecx_{k}\right) > 0\;\middle|\; \mathscr{P} \right)$ is strictly increasing in $\mathscr{P} \in [0.5, 1]$ (statement 2), and lower bounded by $0.5$ (statement 1).}
\end{figure*}
}

\section{Additional Theoretical Analysis}{
    \subsection{Existence of reasonable solutions in the hyperedge filling task}\label{sec:solvability}

In this section, we investigate the \textit{existence of a reasonable solution} in the proposed hyperedge filling task.
To this end, we first formalize the concept of \textit{the hyperedge filling task is solved}.
\begin{definition}[Solved hyperedge filing task]\label{def:solvabilitymain}
    For a hypergraph $G = (\calV, \calE)$, the hyperedge filling task is solved when the following holds for a probabilistic model $p_{(\mathbf{X}, \calE, \Theta)}(\cdot)$:
    \begin{gather*}
    p_{(\mathbf{X}, \calE, \Theta)}\left(v_{i} \;\middle|\; e_{j} \setminus \{v_{i}\}\right) > p_{(\mathbf{X}, \calE, \Theta)}\left(v_{k} \;\middle|\; e_{j} \setminus \{v_{i}\}\right), \\ 
    \forall v_{i} \in e_{j}, \forall v_{k} \in \{v_{s}: \{v_{s}\} \cap (e_{j} \setminus \{v_{i}\}) \not\in \calE,  v_{s}\in \calV \}, ,\forall e_{j} \in \calE.    
    \end{gather*}
\end{definition}
Roughly, Definition~\ref{def:solvabilitymain} indicates that for all possible hyperedge filling cases in a hypergraph $G$, a probabilistic model $p$ always has the greater probability of filling a correct node for a given query subset than that of filling a wrong node.
It is important to note that our goal is to identify whether there exists a \textit{reasonable solution} for $p(\cdot)$, not a trivial solution.
Thus, we formalize the concept of a reasonable solution as the following definition:
\begin{definition}[Reasonable solution]\label{def:solvability}
    Let $\calQ := \{e_{j} \setminus \{v_{i}\} : \forall v_{i} \in e_{j}, \forall e_{j} \in \calE\}$.
    For a hypergraph $G = (\calV, \calE)$, a reasonable solution of the hyperedge filling task is a pair of node representations $\mathbf{Z} \in \mathbb{R}^{\vert \calV \vert \times d}$ and query set representations $\mathbf{Q} \in  \mathbb{R}^{\vert \calQ\vert \times d}$  such that:
    \begin{gather*}
    \vecz_{i}^{T} \vecq_{j} > \vecz_{k}^{T}\vecq_{j}, \\ 
    \forall v_{i} \in S_{j}, \forall v_{k} \in \calV \setminus S_{j}, \forall q_{j} \in \calQ, \text{ where } S_{j} = \{v_{s} : (\{v_{s}\} \cap q_{j} \in \calE) \wedge (v_{s} \not\in q_{j}), v_{s} \in \calV\}.
    \end{gather*}
\end{definition}
Now, we analyze whether there exists a reasonable solution for any hypergraph $G = (\calV, \calE)$.
\begin{theorem}[Existence of a reasonable solution]\label{thm:sovability}
    For any hypergraph $G = (\calV, \calE)$, there exists a reasonable solution for some embedding dimension $d$. 
\end{theorem}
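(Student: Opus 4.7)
The plan is to give a direct constructive proof by choosing the embedding dimension large enough so that nodes can be embedded as mutually orthogonal vectors, which reduces the required inequality to a trivial indicator-vector comparison. Set $d = |\calV|$ and enumerate the nodes as $v_1, \ldots, v_{|\calV|}$; I will assign to each node its corresponding standard basis vector $\vecz_i := \vece_i \in \mathbb{R}^{d}$. With this choice, $\vecz_i^{T} \vecq_j$ simply reads off the $i$-th coordinate of $\vecq_j$, so designing suitable query-set representations reduces to designing a vector whose $i$-th coordinate distinguishes whether $v_i \in S_j$ or not.

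For each query subset $q_j \in \calQ$, I would then define
\begin{equation*}
\vecq_j \;:=\; \sum_{v_s \in S_j} \vece_s,
\end{equation*}
i.e., the indicator vector of $S_j$ inside $\mathbb{R}^{|\calV|}$. Plugging in, for any $v_i \in S_j$ we obtain $\vecz_i^{T}\vecq_j = 1$, and for any $v_k \in \calV \setminus S_j$ we obtain $\vecz_k^{T}\vecq_j = 0$. Since $1 > 0$, the defining inequality of a reasonable solution in Definition~\ref{def:solvability} is satisfied for every triple $(v_i, v_k, q_j)$ simultaneously. The matrices $\mathbf{Z} \in \mathbb{R}^{|\calV| \times d}$ and $\mathbf{Q} \in \mathbb{R}^{|\calQ| \times d}$ are thus well-defined and deliver a reasonable solution.

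The only subtlety worth spelling out is that the construction of $\vecq_j$ is non-degenerate: for each $q_j \in \calQ$ the set $S_j$ is non-empty, because by definition $q_j = e_{j'} \setminus \{v_{i'}\}$ for some hyperedge $e_{j'} \in \calE$ and some $v_{i'} \in e_{j'}$, so $\{v_{i'}\} \cup q_j = e_{j'} \in \calE$ and hence $v_{i'} \in S_j$. No further structural property of $G$ is needed, and strictness of the inequality holds uniformly. I do not anticipate any genuine obstacle: the main thing to be careful about is simply the bookkeeping around the (mild) notational conflict in the definition of $S_j$, and verifying that the construction works for every query subset simultaneously rather than one at a time, which is automatic because the node embeddings $\vecz_i$ are fixed and the $\vecq_j$ are defined independently per query.
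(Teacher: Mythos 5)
Your proof is correct and is essentially the same argument as the paper's: both realize the node-membership indicator matrix $\mathbf{B}_{ij}=\mathbf{1}[v_i\in S_j]$ exactly as an inner-product matrix $\mathbf{Z}\mathbf{Q}^{T}$, so that $\vecz_i^{T}\vecq_j = 1 > 0 = \vecz_k^{T}\vecq_j$. The only difference is the factorization used — the paper takes the SVD of $\mathbf{B}$ (yielding $d=\mathrm{rank}(\mathbf{B})$), while you use the trivial basis-vector factorization with $d=|\calV|$ — which is immaterial since the theorem only asks for \emph{some} dimension.
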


\begin{proof}
    Consider a binary matrix $\mathbf{B} \in \{1,0\}^{\vert V\vert \times \vert \calQ\vert}$, where each column $j \in \{1, \cdots , \vert \calQ \vert\}$ represents one-hot encoding of $S_{j}$ over $\calV$ (i.e., $\mathbf{B}_{ij} = \mathbf{1}[v_{i} \in S_{j}]$, where $\mathbf{1}[\cdot]$ is an indicator function).
    Here, we consider a matrix decomposition of $\mathbf{B}$, specifically, singular value decomposition of $\mathbf{B}$.
    Denote this singular value decomposition as $\mathbf{B} = \mathbf{U}\Sigma\mathbf{V}^{T}$, where $\mathbf{U} \in \mathbb{R}^{\vert \calV\vert \times \text{rank($\mathbf{B}$)}}$, $\mathbf{V} \in \mathbb{R}^{\text{rank($\mathbf{B}$)} \times \text{rank($\mathbf{B}$)}}$, and $\mathbf{V} \in \mathbb{R}^{\vert \calQ\vert \times \text{rank($\mathbf{B}$)}}$. 
    Let $\mathbf{Z} := \mathbf{U}\Sigma$ and $\mathbf{Q} := \mathbf{V}$.
    In such a case, the following holds:
    \begin{equation}
    \vecz_{i}^{T}\vecq_{j} = 
        \begin{cases}
			1 & \text{if }v_{i} \in S_{j},\\
            0 & \text{otherwise.}
		 \end{cases}
    \end{equation}
    Thus, $\vecz_{i}^{T}\vecq_{j} > \vecz_{k}^{T}\vecq_{j}, \forall v_{i} \in S_{j}, \forall v_{k} \in \calV \setminus S_{j}, \forall q_{j} \in \calQ$ always holds, which implies that $\mathbf{Z} := \mathbf{U}\Sigma$ and $\mathbf{Q} := \mathbf{V}$ are one of the reasonable solutions.
\end{proof}
From Theorem~\ref{thm:sovability}, we demonstrate that there exists a reasonable solution for any hypergraph $G$.

\subsection{Dimensional collapse of~\method~without projection heads.}\label{sec:dimcol}

In this subsection, we provide a theoretical analysis of why \method~without projection heads may suffer from dimensional collapse.
\citet{jing2021understanding} have suggested that one of the reasons for the dimensional collapse in contrastive learning lies in the gradient flow (spec., mechanism of how representations are updated via contrastive loss).
Specifically, consider a linear model that yields representations of a $n$ number of data by $\mathbf{Z} = \mathbf{X}\mathbf{W}$, where $\mathbf{X} \in \mathbb{R}^{n \times d}$ are node features and $\mathbf{W} \in \mathbb{R}^{d \times k}$ is a learnable parameter matrix.
When we update parameters $\mathbf{W}$ by using contrastive losses, $k$-th biggest singular value of weight matrices $\sigma_{k}$ evolve proportional to its own value (e.g., $\tilde{\sigma_{k}} \leftarrow \sigma_{k} \times c$, where $\tilde{\sigma_{k}}$ is a $k$-th biggest singular value of updated $\mathbf{W}$ and $c$ is a positive constant).
Due to this mechanism, small single values grow slowly or diminish, causing dimensional collapse.

In this analysis, motivated by the analysis of~\citet{jing2021understanding}, we track how singular values of node representations change over learning hyperedge filling task.
Following the setting of~\citet{jing2021understanding}, we consider a linear model: $\mathbf{Z} = \mathbf{X}\mathbf{W}$.
Note that $\mathbf{X}$ are given features of data points. 
Thus, the change of singular values of embeddings $\mathbf{Z}$ depends on $\mathbf{W}$.

We first define the loss of hyperedge filling as follows:
\begin{equation}\label{eq:lossonfilling}
    \mathcal{L} := -\sum_{e_{j} \in \calE}\sum_{v_{i} \in e_{j}}\log{\left(\frac{\exp\left(\vecz^{T}_{i} \left(\sum_{v_{s} \in q_{ij}} \vecz_{s}\right)\right)}{\sum_{v_{t} \in \calV} \exp\left(\vecz^{T}_{t} \left(\sum_{v_{s} \in q_{ij}} \vecz_{s}\right)\right)}\right)}.
\end{equation}

Now, we obtain the closed form of the following update equation, which is based on the gradient descent: $\mathbf{W} \leftarrow \mathbf{W} - \gamma \nabla_{\mathbf{W}} \mathcal{L},$ where $\gamma$ is a fixed learning rate.
We first derive $(\partial \mathcal{L} / \partial \mathbf{Z} )$, and utilize the chain rule: $(\partial \mathcal{L} / \partial \mathbf{Z} ) \times (\partial \mathbf{Z} / \partial \mathbf{W})$.

The derivative of numerators of Eq~(\ref{eq:lossonfilling}) can be written as follows:
\begin{gather*}
    -\frac{\partial\left(\sum_{e_{j} \in \calE}\sum_{v_{i} \in e_{j}} \vecz^{T}_{i} \left(\sum_{v_{s} \in q_{ij}} \vecz_{s}\right)\right)}{\partial\mathbf{Z}} = -\mathbf{EZ}, \\
    \text{ where } \mathbf{E}\in \mathbb{R}^{\vert \calV \vert \times \vert \calV \vert}, \text{ s.t. } \mathbf{E}_{ij} = 2 \times \vert \{e_{k} : \{v_{i},v_{j}\} \subseteq e_{k}, \forall e_{k} \in \calE\} \vert.
\end{gather*}

We now derive the derivative of the denominators of Eq~(\ref{eq:lossonfilling}):
\begin{equation}
    \frac{\partial \left(\sum_{e_{j} \in \calE}\sum_{v_{i} \in e_{j}} \log{\left(\sum_{v_{t} \in \calV} \exp\left(\vecz^{T}_{t} \left(\sum_{v_{s} \in q_{ij}} \vecz_{s}\right)\right)\right)} \right)}{\partial \mathbf{Z}}.
\end{equation}

We denote a multiset of every possible query subset as $\mathcal{Q}$ (i.e., $\mathcal{Q} = \{e_{j} \setminus \{v_{i}\} : v_{i} \in e_{j}, e_{j} \in \calE\}$).
The reason for multiset is that given subsets can be duplicated due to the nature of overlapping hyperedges (e.g., when we omit a node $v_{1}$ from $\{v_{1},v_{2},v_{3}\}$ and $v_{4}$ from $\{v_{4},v_{2},v_{3}\}$, remaining sets become identical).

We split the derivation into the node level.
In addition, we let a subset of $\mathcal{Q}$ such that includes a node $v_{k}$ as $\mathcal{Q}_{k}$.
Then, for a node $v_{k}$, its gradient is computed as follows:
\begin{align}
    &\sum_{q_{j} \in \mathcal{Q}} \frac{\exp\left(\vecz^{T}_{k} \left(\sum_{v_{s} \in q_{j}} \vecz_{s}\right)\right)}{\sum_{v_{t} \in \calV} \exp\left(\vecz^{T}_{t} \left(\sum_{v_{s} \in q_{j}} \vecz_{s}\right)\right)} \times \left(\sum_{v_{s} \in q_{j}} \vecz_{s}\right) \label{eq:derline1}\\ 
    + &\sum_{q_{l} \in \mathcal{Q}_{k}} \sum_{v_{i} \in \calV}  \frac{\exp\left(\vecz^{T}_{i} \left(\sum_{v_{s} \in q_{l}} \vecz_{s}\right)\right)}{\sum_{v_{t} \in \calV} \exp\left(\vecz^{T}_{t} \left(\sum_{v_{s} \in q_{l}} \vecz_{s}\right)\right)}\vecz_{i}.
    \label{eq:derline2}
\end{align}
We define two more matrices to express Eq.~(\ref{eq:derline1}) and (\ref{eq:derline2}) in a simple matrix form.
By assigning an index $k = \{1, \cdots, \vert \mathcal{Q} \vert\}$ to $q_{j} \in \mathcal{Q}$, we define a binary matrix $\mathbf{Q} \in \mathbb{R}^{\vert \mathcal{Q} \vert \times \vert \calV\vert}$ that represents $\mathcal{Q}$: 
\begin{equation*}
\mathbf{Q}_{ji} = \begin{cases}
1 &\text{if $v_{i} \in q_{j}$},\\
0 &\text{otherwise}.
\end{cases}
\end{equation*}

In addition, we denote a score matrix $\mathbf{A} \in \mathbb{R}^{\vert \mathcal{Q} \vert \times \vert \calV \vert}$ that models a probability of filling each node $v_{i} \in \calV$ to each query subset $q_{j} \in \mathcal{Q}$, which is defined as follows:
\begin{equation}
    \mathbf{A}_{ki} = \frac{\exp\left(\vecz^{T}_{i} \left(\sum_{v_{s} \in q_{j}} \vecz_{s}\right)\right)}{\sum_{v_{t} \in \calV} \exp\left(\vecz^{T}_{t} \left(\sum_{v_{s} \in q_{j}} \vecz_{s}\right)\right)}.
\end{equation}
By using $\mathbf{A}$ and $\mathbf{Q}$, we can express Eq~(\ref{eq:derline1}) and (\ref{eq:derline2}) for all nodes as follows:
\begin{equation}
    \left(\underbrace{\mathbf{Q}^{T}\mathbf{A}}_{\text{Eq~(\ref{eq:derline1}) of all nodes}} + \underbrace{\mathbf{A}^{T}\mathbf{Q}}_{\text{Eq~(\ref{eq:derline2}) of all nodes}}\right) \mathbf{Z}.
\end{equation}

Finally, we express $(\partial \mathcal{L} / \partial \mathbf{Z} )$ as follows:
\begin{equation}
    \frac{\partial \mathcal{L}}{\partial \mathbf{Z}} = \left(-\mathbf{E} + \mathbf{Q}^{T}\mathbf{A} + \mathbf{A}^{T}\mathbf{Q} \right) \mathbf{Z}.
\end{equation}

In addition, by using the chain rule and matrix derivative rules, $\partial \mathcal{L} / \partial \mathbf{W}$ is derived as follows:
\begin{equation}
  \frac{\partial \mathcal{L}}{\partial \mathbf{W}}  = \mathbf{X}^{T} \left(-\mathbf{E} + \mathbf{Q}^{T}\mathbf{A} + \mathbf{A}^{T}\mathbf{Q} \right) \mathbf{XW}.
\end{equation}

We denote updated $\mathbf{W}$ as $\Tilde{\mathbf{W}}$.
To sum up, update rule of $\Tilde{\mathbf{W}} := \mathbf{W} - \gamma \nabla_{\mathbf{W}} \mathcal{L}$ is expressed as follows:

\begin{equation}
  \Tilde{\mathbf{W}} :=  \left(\mathbf{I} + \gamma \mathbf{X}^{T} \left(\mathbf{E} - \mathbf{Q}^{T}\mathbf{A} + \mathbf{A}^{T}\mathbf{Q} \right) \mathbf{X}\right)\mathbf{W}.
\end{equation}

Here, we denote $k$-th biggest singular values of $\tilde{\mathbf{W}}$ and ${\mathbf{W}}$ as $\tilde{\sigma}_{k}$ and $\sigma_{k}$, respectively.

Note that $\left(\mathbf{I} + \gamma \mathbf{X}^{T} \left(\mathbf{E} - \mathbf{Q}^{T}\mathbf{A} + \mathbf{A}^{T}\mathbf{Q} \right) \mathbf{X}\right)$ is a function of $\mathbf{X}$, $\calE$, and $\mathbf{W}$. 
Thus, we denote this term as $g(\mathbf{X}, \calE, \mathbf{W}) \in \mathbb{R}^{d \times d}$.

If we take a closer look at $g(\mathbf{X}, \calE, \mathbf{W})\mathbf{W}$, $\Tilde{\mathbf{W}}$ is a near-linear transformation of $\mathbf{W}$ itself\footnote{It is not a perfect linear transformation, since a function $g(\cdot)$ is also a function of $\mathbf{W}$.}. 
In such a case, from the min-max characterization rule, the following inequality holds:
\begin{equation}\label{eq:dimlast}
    \tilde{\sigma}_{k} \leq \sigma'_{1} \times {\sigma}_{k}, \text{ where $\sigma'_{1}$ is the biggest singular value of a matrix } g(\mathbf{X}, \calE, \mathbf{W}).
\end{equation}
Eq~(\ref{eq:dimlast}) indicates that $\tilde{\sigma_{k}}$ is upper-bounded by the multiplication between the biggest singular value of $g(\mathbf{X}, \calE, \mathbf{W})$ and $\sigma_{k}$.
While it is not an exact equality, this result implies that the changed singular value is likely to be correlated with its previous singular value.
Hence, small singular values are likely to grow slowly, similar to \citet{jing2021understanding} study, which may cause dimensional collapse.
}

\section{Discussions}\label{sec:comparison}
{
    \subsection{Comparison With HyperGRL}\label{sec:hypergrl}

In this section, we compare our \method against HyperGRL~\citep{du2022self}.
Both methodologies utilize a strategy that leverages a node and a subset of a hyperedge for self-supervised learning.
Specifically, this involves using a pair that consists of a node $v_{i}$ that belongs to a hyperedge $e_{j}$, and the subset of $e_{j}$ excluding $v_{i}$ ($e_{j} \setminus \{v_{i}\}$), for training purposes.
However, \method significantly differs from HyperGRL in various aspects.

\textbf{Motivation.} The motivation of \method is markedly distinct from that of HyperGRL.
The hyperedge filling task is a task that aims to correctly fill the missing node for a given (query) subset. 
The underlying motivation is to train a machine learning model to generate hyperedges, thereby identifying entities that are forming group interactions. 
This may enable the model to understand the complex, higher-order relationships within a given hypergraph.
This objective aligns with the goals of question-answering approaches in NLP~\citep{devlin2018bert}.
Conversely, the key motivation of HyperGRL, which performs node-level pretext task introduced by~\citet{du2022self}, is described as follows: ``\textit{the node inside one specific hyperedge should be distinguished from nodes outside this hyperedge given the context of node}". 
Here, the motivations behind~\citet{du2022self} are akin to those of proximity-based approaches, including LINE~\citep{tang2015line}.

\textbf{Method design.} The design choices of \method and HyperGRL differ significantly.
Below, we outline these technical distinctions.
\begin{itemize}[leftmargin=*]
    \item Input: \method receives the original hypergraph structure, while HyperGRL receives a clique-expanded graph.
    \item Backbone encoder: \method utilizes hypergraph neural networks, while HyperGRL employs graph neural networks.
    \item Augmentation: \method augments node features and hyperedges, while HyperGRL does not utilize an augmentation strategy.
    \item Projection head: \method utilizes different projection head parameters for node-projection and query subset-projection, while HyperGRL utilizes the same parameters for both.
    \item Negative samples: \method utilizes the entire node as negative samples for each hyperedge filling case, while HyperGRL samples several nodes.
    \item SSL signal: \method utilizes all possible (node, query subset) pairs of a given hypergraph, while HyperGRL samples a single pair for each hyperedge.
    \item Feature reconstruction: \method utilizes node feature reconstruction warm-up, while HyperGRL utilizes hyperedge prediction loss as another SSL signal.
\end{itemize}

\textbf{Performance comparison.}
It is important to note that \method is superior to HyperGRL as an SSL strategy for hypergraphs.
We provide empirical evidence below:
\begin{itemize}[leftmargin=*]
    \item Under the fine-tuning protocol, \method outperforms HyperGRL in every dataset (refer to Section~\ref{sec:exp1}).
    \item Under the linear-evaluation protocol with the node classification task, \method outperforms HyperGRL in every dataset (refer to Section~\ref{sec:exp2}).
    \item Under the linear-evaluation protocol with he hyperedge prediction task, \method outperforms HyperGRL in 10 out of 11 datasets (refer to Section~\ref{sec:exp2}).
\end{itemize}

\subsection{Potential Extensions of \method}
While this work mainly assumes static and undirected hypergraphs, many real-world group interactions have directions and temporal dynamics.
Directed hypergraphs~\citep{kim2023reciprocity} and temporal hypergraphs~\citep{lee2023temporal} are utilized to include directional information and temporal information in hypergraphs, respectively.
\method can be extended to such hypergraphs by adequately considering their unique characteristics.
Moreover, while we assume a transductive hypergraph setting, there is a recent effort to perform representation learning on hypergraphs under an inductive case~\citep{behrouz2023cat}. 
Considering this, \method can also be extended to an inductive case, incorporating the idea of~\citet{behrouz2023cat}.
Lastly, \method can be enhanced to capture more complex characteristics of a hypergraph (e.g., hyperedge-dependency of nodes~\citep{choe2023classification}). 
}

\section{Datasets}\label{sec:datasets}
{
    \begin{table}[t!]
\caption{Statistics of 11 datasets we utilize in our experiments. 
} \label{tab:datastat}
    \small
    \centering
    \setlength{\tabcolsep}{2.5pt}
    \scalebox{0.95}{
    \renewcommand{\arraystretch}{1.2}
        \centering
        \begin{tabular}{c | c  c c c  c c c  c c   c c}
            \toprule
             & {Citeseer} & {Cora} & {Pubmed} & {Cora-CA} & {DBLP-P} & {DBLP-A} & {AMiner}
            & {IMDB} & {MN-40} & {20News} & {House} \\
            \midrule
            \midrule
            $\vert \calV \vert$ & 1,458 & 1,434 & 3,840 & 2,388 & 41,302 & 2,591 & 20,201 & 3,939 & 12,311  & 16,242 & 1,290 \\

            $\vert \calE \vert$ & 1,079 & 1,579 & 7,963 & 1,072 & 22,263 & 2,690 & 8,052 & 2,015 & 12,311 &  100 & 341 \\

            $\sum_{e \in \calE} \vert e\vert$ & 3,453 & 4,786 & 34,629 & 4,585 & 99,561 & 6,201 & 32,028 & 9,560 & 61,555 & 65,451 & 11,843 \\

            \# Classes & 6 & 7 & 3 & 7 & 6 & 4 & 12 & 3 & 40  & 4 & 2 \\

            \# Features & 3,703 & 1,433 & 500 & 1,433 & 1,425 & 334 & 500 & 3066 & 100  & 100 & 2 \\
            \bottomrule
        \end{tabular}
        }
\end{table}

In this section, we provide details of the used datasets. 
In our experiments, we utilize 11 hypergraph benchmark datasets. 
Following~\citet{lee2023m}, we remove nodes that do not belong to any hyperedge.
Data statistics of each dataset are reported in Table~\ref{tab:datastat}.

\textbf{Source of each dataset.}
The sources of each dataset used in this work are as follows:
\begin{itemize}[leftmargin = 8pt]
    \item The Cora, Citeseer, Pubmed, Cora-CA, and DBLP-P datasets are from the work of~\citet{yadati2019hypergcn}.
    \item The DBLP-A and IMDB datasets are from the work of~\citet{wang2019heterogeneous}.
    \item The AMiner dataset is from the work of~\citet{zhang2019heterogeneous}.
    \item The  Mondelnet-40 (MN-40) dataset is from the work of~\citet{wu20153d}.
    \item The  20Newsgroups (20News) dataset is from the work of~\citet{dua2017uci}.
    \item The House dataset is from the work of~\citet{chien2021you}.
\end{itemize}

\textbf{Co-citation datasets.}
We utilize three co-citation datasets: Cora, Citeseer, and Pubmed. 
In these datasets, each node represents a publication and each hyperedge represents a set of publications co-cited by a publication. 
For example, if a publication has cited three publications that correspond to $v_{i},v_{j},$ and $v_{k}$, respectively, these publications (nodes) are grouped as a hyperedge $\{v_{i}, v_{j}, v_{k}\} \in \calE$.
Node features are bag-of-words of the corresponding publication (node). 
Node class indicates the academic category of the corresponding publication. 

\textbf{Co-authorship datasets.}
We utilize four co-authorship datasets: Cora-CA, DBLP-P, DBLP-A, and AMiner. 
In Cora-CA, DBLP-P, and AMiner, each node represents a publication, and a set of publications that are written by the same author is grouped as a hyperedge.
Node features are bag-of-words of the corresponding publication (node). 
Node class indicates the academic category of the corresponding publication. 
Conversely in DBLP-A, each node represents an author, and co-authors of a publication are grouped as a hyperedge.
Node features are bag-of-words regarding the research keywords of the corresponding author~\footnote{Please do not be confused with the academic corresponding author.}. 
Node class indicates the primary research area of the corresponding author.

\textbf{Computer graphic datasets.}
We utilize one computer graphical dataset: Modelnet-40 (MN-40).
In this dataset, each node represents a visual object, and hyperedges are synthetic hyperedges that have been created with a k-NN graph constructed according to the features of each data point, following~\citet{feng2019hypergraph} and \citet{chien2021you}.
Node features are embeddings (obtained via GVCNN~\citep{feng2018gvcnn} and MVCNN~\citep{su2015multi}) of the corresponding visual object. 
Node class indicates the category of the corresponding visual object.

\textbf{Movie-Actor dataset.}
We utilize one movie-actor relational dataset: IMDB.
In this dataset, each node indicates a movie, and the filmography of an actor is grouped as a hyperedge. 
Node features are bag-of-words of the plot of the corresponding movie.
Node class indicates the genre of the movie.

\textbf{News dataset.}
We utilize one news dataset: 20NewsGroups (20News).
In this dataset, each node represents a document of 20 newsgroups and a set of documents containing a particular word is grouped as a hyperedge.
Node features are TF-IDF representations of news messages of the corresponding news document.
Node class indicates the category of the corresponding news document.

\textbf{Political membership dataset.}
We utilize one political membership dataset: House.
In this dataset, each node represents a member of the US House of Representatives, and members belonging to the same committee are grouped as a hyperedge.
Node features are generated by adding noise to the one-hot encoded label vector, following~\citet{chien2021you}.
Node class indicates the political party of the corresponding member.
}

\section{Experimental Details}\label{sec:expdetails}
{
    
In this section, we provide our experimental details.
We first describe the machines we use in our experiments. 
Then, we provide details of \method and baseline methods, including their implementation and hyperparameter settings.

\subsection{Machines}
All experiments are conducted on a machine with NVIDIA RTX 8000 D6 GPUs
(48GB memory) and two Intel Xeon Silver 4214R processors.

\subsection{Details of baseline methods}\label{sec:basedetail}

\textbf{Neural networks.} We utilize 10 (semi-)supervised baseline models: MLP, HGNN~\citep{feng2019hypergraph}, HyperGCN~\citep{yadati2019hypergcn}, HNHN~\citep{dong2020hnhn}, three unified hypergraph networks, (UniGCN, UniGIN, UniGCNII)~\citep{huang2021unignn}, AllSet~\citep{chien2021you}, ED-HNN~\citep{wang2022equivariant}, and PhenomNN~\citep{wang2023hypergraph}.
For all the methods, we fix their hidden dimension and model layers as 128 and 2, respectively.

\textbf{Graph SSL.}
To utilize two graph-generative graph SSL methods, which are GraphMAE2~\citep{hou2023graphmae2} and MaskGAE~\citep{li2023maskgae}, and one hypergraph-generative graph SSL method, which is HyperGRL~\citep{du2022self}, we transform the input hypergraph into a graph by using the clique expansion, which converts each hyperedge into a clique of a graph~\citep{dong2020hnhn}.
Moreover, we utilize a 2-layer GCN~\citep{kipf2016semi} with hidden dimension 128 as a backbone encoder of these two SSL methods.

\textbf{H-GD.} One of our baseline methods H-GD directly extends the group discrimination approach, an SSL technique designed on ordinary graphs~\citep{zheng2022rethinking}.
We strictly follow the overall structure suggested by~\citet{zheng2022rethinking}, while the feature augmentation and topology augmentation have been replaced into $\bm{\tau}_{x}$ and $\bm{\tau}_{e}$ that are described in Section~\ref{sec:aug}, respectively.

\subsection{Details of settings}\label{sec:setdetail}

\textbf{Hyperedge prediction setting.}
Note that we need to obtain negative hyperedge samples to evaluate a model on the hyperedge prediction task.
In our experiments, we utilize negative hyperedge samples that are generated by the Size Negative Sample strategy~\citep{patil2020negative}.
Specifically,  we first sample the size of the current hyperedge from the real-world hyperedge size distribution of the corresponding dataset.
Then, we sample nodes uniformly at random and fill the corresponding hyperedge.
By using the strategy, we obtain negative hyperedges with the same number of ground-truth hyperedges and utilize them for training/validation/test.

\textbf{Hyperedge embedding.}
To perform hyperedge prediction, we require a representation of each hyperedge.
On the other hand, since the output of utilized HNNs and GNNs are node embeddings, we need an additional technique to acquire hyperedge representation from node embeddings.
To this end, we utilize a max-min pooling aggregation function~\citep{yadati2020nhp}.
For example, for a hyperedge $e_{i} = \{v_{j}, v_{k}, v_{\ell}\}$, whose constituent node embeddings are $\vecz_{j}, \vecz_{k}, \vecz_{\ell} \in \mathbb{R}^{d}$, respectively, its hyperedge embedding $\vech^{(e)}_{i}$ is obtained as follows:
\begin{equation}
    \vech_{i} = (\max_{t \in \{j,k,\ell\}}{z_{ts}} - \min_{t \in \{j,k,\ell\}}{z_{ts}})^{d}_{s=1}.
\end{equation}
Finally, we utilize $\vech_{i}$ as a representation of $e_{i}$.

\subsection{Details of implementation}\label{sec:impdetail}

\textbf{Feature reconstruction warm-up of \method.}
As described in Section~\ref{sec:traindetail}, before training an encoder HNN with~\method, we perform a feature reconstruction process, which directly extends GraphMAE~\citep{hou2022graphmae} to hypergraph structure.
This process mitigates the encoder's over-reliance on the projection heads, improving the performance in downstream tasks (Section~\ref{sec:abl}).
For the decoder HNN $g_{\psi}$, we utilize a 2-layer UniGCNII~\citep{huang2021unignn}, which has the same architecture as the encoder HNN.
We mask $50\%$ of input nodes and set hyperedge augmentation ratio $p_{e}$ as 0.2.


\textbf{Projection heads of \method.}
We utilize a two-layer MLP model with a ReLU~\citep{nair2010rectified} activation function without dropout- and normalization-layers, for both node projection head $f'_{\phi}$ and set projection head $f''_{\rho}$ (Section~\ref{sec:emb}).
Note that these projection heads are updated via the hyperedge filling task together with the encoder HNN.

\begin{table}[t!]
\caption{Hyperparameter combination of~\method that shows the best validation accuracy on each dataset. $p_{v} \in [0, 1]$ indicates the magnitude of feature augmentation, $p_{e}$ indicates the magnitude of topological augmentation, and $SSL epochs$ indicates the training epoch of hyperedge filling task.
} \label{tab:hyperparam}
    \small
    \centering
    \setlength{\tabcolsep}{2.5pt}
    \scalebox{0.95}{
    \renewcommand{\arraystretch}{1.2}
        \centering
        \begin{tabular}{c | c  c c c  c c c  c c   c c}
            \toprule
             & {Citeseer} & {Cora} & {Pubmed} & {Cora-CA} & {DBLP-P} & {DBLP-A} & {AMiner}
            & {IMDB} & {MN-40} & {20News} & {House} \\
            \midrule
            \midrule
            $p_{v}$ & 0.2 & 0.4 & 0.0 & 0.4 & 0.0 & 0.1 & 0.2 & 0.3 & 0.4 & 0.2 & 0.4 \\

            $p_{e}$ & 0.9 & 0.9 & 0.5 & 0.8 & 0.6 & 0.9 & 0.6 & 0.9 & 0.5 & 0.5 & 0.8 \\

            SSL epochs & 120 & 200 & 180 & 200 & 180 & 120 & 100 & 180 & 180 & 80 & 140 \\
            \bottomrule
        \end{tabular}
        }
\end{table}

\textbf{Training details and hyperparameters.}
We train all models using the Adam optimizer~\citep{kingma2014adam} with a fixed weight decay rate of $10^{-6}$.
We fix the hidden dimension and dropout rate of all models as 128 and 0.5, respectively.
When training any neural network for downstream tasks, we train a model for 200 epochs, and for every 10 epochs, we evaluate the validation accuracy of the model.
Then, we utilize the model's checkpoint that yields the best validation accuracy to perform the final evaluation of the model on the test dataset. 

For a linear evaluation protocol of node classification, we utilize a logistic classifier, with a learning rate $0.001$.
For a linear evaluation protocol of hyperedge classification, we utilize a two-layer MLP, with a learning rate of $0.001$ and a dropout rate of $0.5$.

Regarding hyperparameter tuning, for each model and dataset, we find the hyperparameter combination that gives the best validation performance.
Then, with the selected hyperparameters, we train the model on a training dataset and evaluate the trained model on a test dataset.

For all the supervised models, we tune the learning rate as a hyperparameter within $\{0.05, 0.01, 0.005, 0.001, 0.0005, 0.0001\}$.
Moreover, especially for PhenomNN, which is a state-of-the-art hypergraph neural network, we additionally tune their other hyperparameters.
Specifically, along with the learning rate, we additionally tune $\lambda_{0}$ and $\lambda_{1}$ within $\{0, 1, 10\}$, which are weights of different message-passing functions, 
and $\alpha$ within $\{0, 1, 10\}$, which is the weight of a node's own feature.

For all the SSL baseline methods, we set broader search spaces, which are as follows:
\begin{itemize}[leftmargin=*]
    \item TriCL~\citep{lee2023m}: 
    We tune their feature augmentation magnitude within $\{0.2, 0.3, 0.4\}$, hyperedge augmentation magnitude within $\{0.2, 0.3, 0.4\}$, and learning rate of its all components within $\{0.01, 0.001, 0.0001\}$.
    \item HyperGCL~\citep{wei2022augmentations}: We tune learning rate of an encoder within $\{0.01, 0.001, 0.0001\}$, that of a view generator within $\{0.01, 0.001, 0.0001\}$, and the weight of contrastive loss $\beta$ within $\{0.5, 1, 2\}$.
    \item HGD, a variant of GD~\citep{zheng2022rethinking}: 
    We tune their feature augmentation magnitude within $\{0.2, 0.3, 0.4\}$, hyperedge augmentation magnitude within $\{0.2, 0.3, 0.4\}$, and learning rate of its all components within $\{0.01, 0.001, 0.0001\}$.
    \item GraphMAE2~\citep{hou2023graphmae2}: 
    We tune mask ratio within $\{0.25, 0.5, 0.75\}$ and learning rate of its all components within $\{0.01, 0.001, 0.0001\}$.
    \item MaskGAE~\citep{li2023maskgae}
    We tune degree loss weight $\alpha$ within $\{0.001, 0.002, 0.003\}$ and learning rate of its all components within $\{0.01, 0.001, 0.0001\}$.
    \item HyperGRL~\citep{du2022self}
    We tune negative-sampling adjacency matrix order $k$ within $\{1, 2, 3\}$ and learning rate of its all components within $\{0.01, 0.001, 0.0001\}$.
\end{itemize}
In addition, we set self-supervised learning epochs of the above methods as hyperparameters, searching within $\{50, 100, 150, \cdots, 450, 500\}$.

For \method, we tune feature augmentation magnitude $p_{x}$ within $\{0.0, 0.1, 0.2, 0.3, 0.4\}$ and hyperedge augmentaiton magnitude $p_{e}$ within $\{0.5, 0.6, 0.7, 0.8, 0.9\}$.
We fix the learning rate and training epochs of the feature reconstruction warm-up as $0.001$ and $300$, respectively.

Regarding the overall training process of \method, we first train an encoder HNN with a feature reconstruction process for 300 epochs.
Then, we train the encoder HNN with the hyperedge filling task.
Specifically, the hyperedge filling training epoch is a hyperparameter, which we search within $\{20, 40,\cdots , 180, 200\}$.
We report a hyperparameter combination of each dataset that shows the best validation accuracy in our fine-tuning protocol experiment (Section~\ref{sec:exp1}) in Table~\ref{tab:hyperparam}.

}

\section{Additional Experimental Results}\label{sec:addexp}
{
    \begin{table}[t!]
\caption{Comparison between \method (\method~(denoted by w/ Aug.)) and a variant of \method that does not use augmentation strategy (denoted by w/o Aug.) in the node classification task under two protocols.
The best performance is colored as a \textcolor{darkgreen}{green}.
In most of the settings, \method~outperforms its variant.
} \label{tab:augexp}
    \small
    \centering
    \setlength{\tabcolsep}{2.5pt}
    \scalebox{0.77}{
    \renewcommand{\arraystretch}{1.2}
        \centering
        \begin{tabular}{c | c | c  c c c  c c  c c c  c c}
            \toprule
            Protocol & Method & {Citeseer} & {Cora}  & {Pubmed} & {Cora-CA} & {DBLP-P} & {DBLP-A} & {AMiner} & {IMDB} & {MN-40} & {20News} & {House} \\
            \midrule
            \midrule
            Fine & w/o Aug. & 53.5 ({\std 8.7}) & 58.6 ({\std 8.3}) & 75.5 ({\std 4.0}) & 62.9 ({\std 6.0}) & 87.8 ({\std 0.4}) & 79.7 ({\std 2.3}) & 33.9 ({\std 2.1}) & 47.2 ({\std 2.5}) & \best 90.7 ({\std 0.7}) & 77.5 ({\std 0.9}) & 69.9 ({\std 4.9}) \\
            
            \cmidrule{2-13}
            
            tuning & w/ Aug. & 56.7 ({\std 9.8}) \best & 62.3 \best ({\std 7.7}) & 77.0 ({\std 3.4}) \best & 66.3 ({\std 4.6}) \best & 88.2 ({\std 0.4}) \best & 80.6 ({\std 2.3}) \best & 34.1 ({\std 2.2}) \best & 47.6 ({\std 2.5}) \best & 90.4 ({\std 0.9}) &  \best 77.6 ({\std 0.9}) &  \best 70.4 ({\std 4.8}) \\

            \midrule

            Linear & w/o Aug. & 55.8 ({\std 9.0}) & 60.6 ({\std 7.3}) & 72.7 ({\std 3.4}) & 63.3 ({\std 4.6}) & 87.6 ({\std 0.5}) & \best 81.4 ({\std 2.5}) & 34.2 ({\std 2.7}) & 47.8 ({\std 2.0}) & \best 89.7 ({\std 1.9}) &  75.1 ({\std 1.6}) & 67.5 ({\std 1.6}) \\

            \cmidrule{2-13}

            evaluation & w/ Aug. & \best 59.6 ({\std 9.9}) & \best 63.5 ({\std 9.4}) & \best 75.0 ({\std 3.4}) & \best 66.0 ({\std 4.6}) & \best 87.9 ({\std 0.5}) &  81.2 ({\std 2.7}) & \best 34.3 ({\std 3.2}) & \best 48.8 ({\std 1.8}) & 89.2 ({\std 2.2}) & \best 75.7 ({\std 2.1}) & \best 69.4 ({\std 5.4})\\
            \bottomrule
            
            \bottomrule
        \end{tabular}
        }
\end{table}

In this section, we present additional experimental results that were excluded from the main paper due to space constraints.

\subsection{Comparison against \method~without augmentation}\label{sec:noaugexp}

As described in Section~\ref{sec:aug}, augmentation (spec., masking) has played a key role in various generative SSL methods for obtaining effective representations.
Motivated by these findings, we incorporate augmentation process $\bm{\tau}_{x}$ and $\bm{\tau}_{e}$ into \method.
To assess the impact of augmentation, we analyze a variant of \method that omits the augmentation step.
In this variant, the input to \method's encoder HNN consists of ground-truth features and topology, $(\mathbf{X}, \calE)$.
We then compare the performance of this variant against our proposed \method in node classification tasks, under both fine-tuning and linear evaluation protocols.
As shown in Table~\ref{tab:augexp}, \method consistently outperforms the variant across most of the settings, demonstrating the necessity of augmentation in the model's performance.

\begin{figure*}[t]
    \centering
    \includegraphics[width=1.0\textwidth]{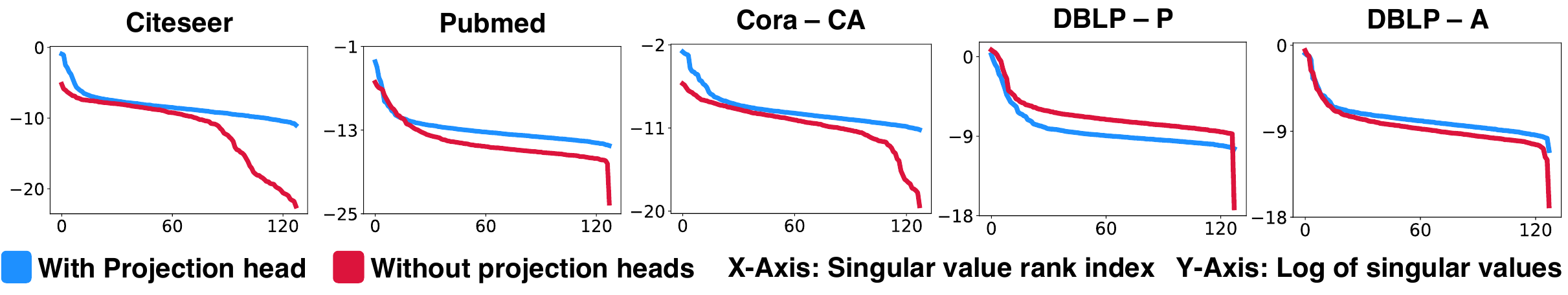}
    \caption{\label{fig:alldimcol} 
    Analyzing dimensional collapse of \method with/without projection heads on five benchmark datasets.
    While~\method does not suffer from dimensional collapse, its variant that does not utilize projection heads, suffers from this issue.}
\end{figure*}

\subsection{Dimensional collapse analysis}\label{sec:alldimcolanalysis}
In Section~\ref{sec:emb}, we discuss the role of projection heads in terms of dimensional collapse: projection heads encourage an encoder HNN to avoid dimensional collapse.
We further analyze this phenomenon in five benchmark hypergraph datasets: Citeseer, Pubmed, Cora-CA, DBLP-P, and DBLP-A.
As shown in Figure~\ref{fig:alldimcol}, across all these datasets, we verify that certain singular values of embeddings achieved via \method without projection heads drop to zero. 
This empirically demonstrates that dimensional collapse has occurred in such cases (refer to \textcolor{crimson}{red} lines).
Notably, such issues are not observed in \method (refer to \textcolor{dodgerblue}{blue} lines).

\begin{table}[t!]
\vspace{-3mm}
\caption{Performance of SSL methods in various HNNs under fine-tuning protocol.
The \textcolor{darkgreen}{best} and \textcolor{darkyellow}{second-best} performances are colored \textcolor{darkgreen}{green} and \textcolor{darkyellow}{yellow}, respectively.
NA indicates the corresponding encoder without pre-training.
\method~outperforms other hypergraph SSL methods in most of the settings. 
} \label{tab:encoderexp}
    \small
    \centering
    \scalebox{1.0}{
        \centering
        \begin{tabular}{c | c | c  c c c  c }
            \toprule
            {} & {Method} & {Citeseer} & {Cora} & {Pubmed} & {Cora-CA} & {DBLP-P} \\
            \midrule
            \midrule
            \multirow{4}{*}{HGNN}
            & NA & 41.9 ({\std 7.8}) & 50.0 ({\std 7.2}) & 72.9 ({\std 5.0}) & 50.2 ({\std 5.7}) & 85.3 ({\std 0.8}) \\
            
            & TriCL & 49.3 ({\std 10.3}) \secb & 56.9 ({\std 10.0}) & 74.3 ({\std 4.1}) & 57.2 ({\std 6.1})\secb  & 87.4 ({\std 5.0}) \secb \\
            
            & HyperGCL & 47.2 ({\std 9.3}) & 59.5 ({\std 7.6}) \secb & 76.3 ({\std 4.3})\secb  & 53.3 ({\std 6.9}) & 85.8 ({\std 0.4}) \\

            \cmidrule{2-7}
            
            & \method & 52.1 ({\std 9.1}) \best & 61.1 ({\std 9.8}) \best & 76.8 ({\std 4.3}) \best & 60.0 ({\std 6.0}) \best & 87.4 ({\std 0.5}) \best \\
            \midrule 
            \midrule 
            \multirow{4}{*}{MeanPoolingConv}
            & NA & 39.9 ({\std 10.0}) & 48.5 ({\std 8.9}) & 72.4 ({\std 4.0}) & 50.6 ({\std 7.5}) & 85.7 ({\std 0.7}) \\
            
            & TriCL & 46.6 ({\std 8.3}) \secb & 59.8 ({\std 8.9}) \secb & 74.0 ({\std 3.8}) & 57.3 ({\std 4.8}) \secb & 87.0 ({\std 5.0}) \best \\
            
            & HyperGCL & 43.2 ({\std 9.4}) & 59.6 ({\std 7.6}) & 74.2 ({\std 4.3}) \secb & 55.0 ({\std 6.7}) & 85.7 ({\std 0.9}) \\

            \cmidrule{2-7}
            
            & \method & 54.5 ({\std 8.3}) \best & 61.2 ({\std 7.8}) \best & 77.3 ({\std 3.4}) \best& 63.0 ({\std 4.4}) \best & 86.8 ({\std 0.5}) \secb \\
            \midrule 
            \midrule 
            \multirow{4}{*}{SetGNN}
            & NA & 43.6 ({\std 6.3}) & 48.9 ({\std 9.2}) & 69.8 ({\std 4.5}) & 50.7 ({\std 6.7}) & 82.8 ({\std 1.0}) \\
            
            & TriCL & 48.9 ({\std 7.6}) \secb & \secb 57.7 ({\std 8.4}) & 72.0 ({\std 4.5}) & 57.8 ({\std 4.9}) \secb & 85.1 ({\std 0.5}) \best \\
            
            & HyperGCL & 46.8 ({\std 8.5}) & 54.6 ({\std 7.5}) & \secb 73.1 ({\std 3.9}) & 56.9 ({\std 5.3}) & 83.6 ({\std 0.7}) \\

            \cmidrule{2-7}
            
            & \method & 53.8 ({\std 9.1}) \best & 62.3 ({\std 6.4}) \best & 73.4 ({\std 3.4}) \best & 61.0 ({\std 3.9}) \best & 84.2 ({\std 0.7}) \secb  \\
            \bottomrule
        \end{tabular}
        }
        \vspace{3mm}
\end{table}

\begin{figure*}[t!]
    \centering
    \includegraphics[width=1.0\textwidth]{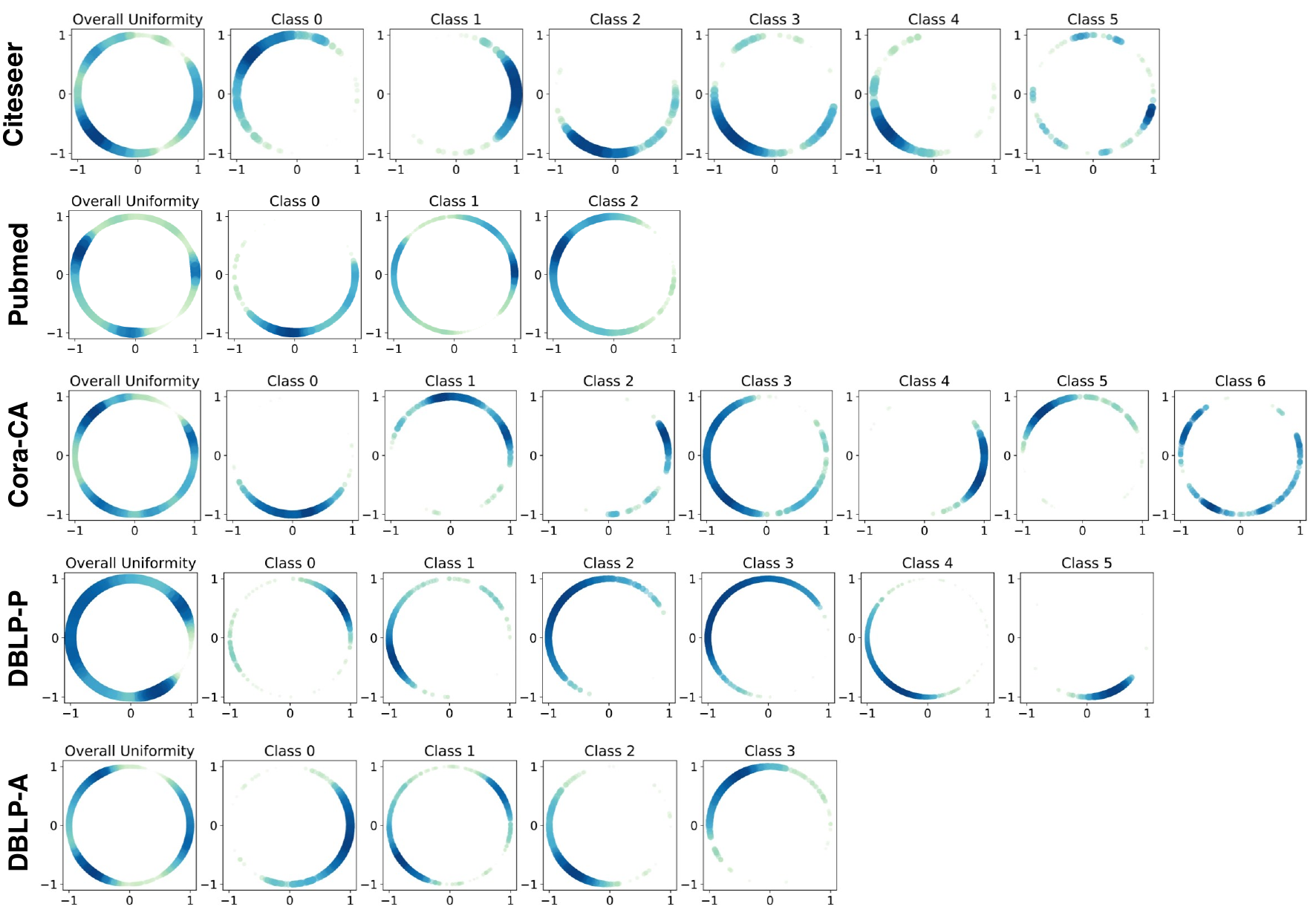}
    \caption{\label{fig:allalignunif} 
    Analyzing alignment and uniformity of representations obtained via \method.
    In most cases, representations obtained by \method achieve both alignment and uniformity.}
\end{figure*}

\subsection{Alignment and uniformity analysis}\label{sec:alignunif}
In Section~\ref{sec:loss}, we discuss the role of our $p_{(\mathbf{X}, \calE, \Theta)} (\cdot)$ design choice in promoting alignment and uniformity of node representations. 
This characteristic is further analyzed across five datasets: Citeseer, Pubmed, Cora-CA, DBLP-P, and DBLP-A.
As shown in Figure~\ref{fig:allalignunif}, node representations obtained from \method achieve both uniformity (first column) and class alignment (other columns) in most of the cases.

\subsection{Encoder analysis}\label{sec:encoderanalysis}
In this section, we experimentally verify that the efficacy of \method extends beyond UniGCNII~\citep{huang2021unignn}.
For this purpose, we employ three HNNs: HGNN~\citep{feng2019hypergraph}, MeanPoolingConv~\citep{lee2023m}, and SetGNN~\citep{chien2021you}.
Notably, the latter two encoders, MeanPoolingConv and SetGNN, serve as backbone encoders in TriCL~\citep{lee2023m} and HyperGCL~\citep{wei2022augmentations}, respectively.
The experimental setting mirrors that of the fine-tuning experiments detailed in Section~\ref{sec:exp1}.
As shown in Table~\ref{tab:encoderexp}, \method outperforms other hypergraph SSL methods in most of the settings, validating its broad effectiveness across a diverse range of HNNs.

\begin{table}[t!]
\vspace{-3mm}
\caption{Efficacy of pretext tasks under fine-tuning protocol. 
The \textcolor{darkgreen}{best} performances are colored as \textcolor{darkgreen}{green}.
Among the three tasks, the hyperedge filling task shows superior performance.} 
    \label{tab:taskexp}
    \small
    \centering
    \scalebox{1.0}{
        \centering
        \begin{tabular}{c | c  c c c  c }
            \toprule
            {Method} & {Citeseer} & {Cora} & {Pubmed} & {Cora-CA} & {DBLP-P} \\
            \midrule
            \midrule
            Naive contrastive learning & 59.2 ({\std 6.7}) & 44.5 ({\std 10.2}) & 75.2 ({\std 4.0}) & 62.1 ({\std 5.3}) & 86.7 ({\std 0.5}) \\
            Naive feature reconstruction & 58.6 ({\std 8.0}) & 51.5 ({\std 9.2}) & 74.7 ({\std 5.1}) & 61.9 ({\std 7.3}) & 87.3 ({\std 0.6}) \\
            \midrule
            Naive hyperedge filling & \best 60.7 ({\std 8.2}) & \best 51.6 ({\std 11.8}) & \best 76.2 ({\std 3.6}) &\best  63.5 ({\std 6.0}) & \best 88.1 ({\std 0.5}) \\
            \bottomrule
        \end{tabular}
        }
        \vspace{3mm}
\end{table}

\subsection{Pretext task analysis}\label{sec:taskanalysis}
In this section, we evaluate the efficacy of different SSL pretext tasks.
Specifically, we assess our hyperedge filling task by comparing it with two other SSL tasks: contrastive learning and feature reconstruction. 
To ensure a fair comparison and minimize the impact of the underlying SSL method design choices, we streamline each method as described below:
\begin{itemize}[leftmargin=*]
    \item Naive contrastive learning: This method is a variant of TriCL~\citep{lee2023m} that does not use projection head parameters. 
    Other settings are all the same as that of TriCL (e.g., contrastive losses and view generation process).
    \item Naive feature reconstruction: This method is a variant of the hypergraph feature reconstruction method (Section~\ref{sec:traindetail}), that does not use a decoder HNN. 
    \item Naive hyperedge filling: This method is a variant of \method that does not use feature reconstruction warm-up and projection heads. 
    Specifically, this method is equivalent to \textbf{V1}, which is described in Section~\ref{sec:abl}.
\end{itemize}
As shown in Table~\ref{tab:taskexp}, the naive hyperedge filling method outperforms the other two methods across all settings, highlighting the superior effectiveness of the hyperedge filling task.

\begin{table}[t!]
\vspace{-3mm}
\caption{Comparing \method against other topological SSL methods under fine-tuning protocol.
The \textcolor{darkgreen}{best} performances are colored as \textcolor{darkgreen}{green}.
Among all the four methods, \method shows the best performance.} 
    \label{tab:topoexp}
    \small
    \centering
    \scalebox{0.95}{
        \centering
        \begin{tabular}{c | c | c  c c c  c }
            \toprule
            Data & {Method} & {Citeseer} & {Cora} & {Pubmed} & {Cora-CA} & {DBLP-P} \\
            \midrule
            \midrule
            Graph & Edge filling & 44.1 ({\std 11.3}) & 55.7 ({\std 8.2}) & 73.6 ({\std 3.8}) & 55.5 ({\std 8.2}) & 86.9 ({\std 0.5}) \\
            Graph & Hyperedgeedge filling & 42.8 ({\std 9.2}) & 56.4 ({\std 6.0}) & 73.6 ({\std 4.0}) & 55.0 ({\std 8.6}) & 86.7 ({\std 0.5}) \\
            Hypergraph & Hyperedge prediction & 43.9 ({\std 7.6}) & 56.9 ({\std 9.7}) & 73.0 ({\std 4.9}) & 59.1 ({\std 6.5}) & 85.8 ({\std 0.6}) \\
            Hypergraph & Hypereedge filling (\method) & 56.7 ({\std 9.8}) \best & 62.3 \best ({\std 7.7}) & 77.0 ({\std 3.4}) \best & 66.3 ({\std 4.6}) \best & 88.2 ({\std 0.4}) \best \\
            \bottomrule
        \end{tabular}
        }
        \vspace{3mm}
\end{table}

\subsection{Topological generative pretext task analysis}\label{sec:topobase}
In this section, we compare \method with other topology generative SSL methods.
To this end, we adopt the following three baseline methods: 
\begin{itemize}[leftmargin=*]
    \item Graph edge filling: Utilizes edge filling on a clique-expanded graph (Appendix~\ref{sec:basedetail}), employing the same method \method used to fill size-2 hyperedges. 
    \item Graph hyperedge filling: Applies hyperedge filling on a clique-expanded graph (Appendix~\ref{sec:basedetail}), using the same approach as \method for filling hyperedges. 
    \item Hyperedge prediction: Undertakes the hyperedge prediction task as an SSL task, leveraging size negative samples~\citep{patil2020negative}.
\end{itemize}
As shown in Table~\ref{tab:topoexp}, \method outperforms the other three topological generative SSL methods in all settings, highlighting the effectiveness of the hyperedge filling task.

\subsection{Homophily analysis}\label{sec:homo}
In this section, we demonstrate that \method exhibits prominent node classification performance across various homophilic scenarios, including both homophilic and non-homophilic hypergraphs.
To this end, we analyze both real-world hypergraphs and semi-synthetic hypergraphs.

\textbf{Real-world hypergraphs.} 
We analyze the homophily characteristic of three hypergraph benchmark datasets: Cora-Cite, AMiner, and IMDB, which are utilized in this work. 
Note that \method shows the best performance among all the methods in these datasets (refer to Table~\ref{tab:mainexp}).

\begin{figure*}[t!]
    \centering
    \includegraphics[width=1.0\textwidth]{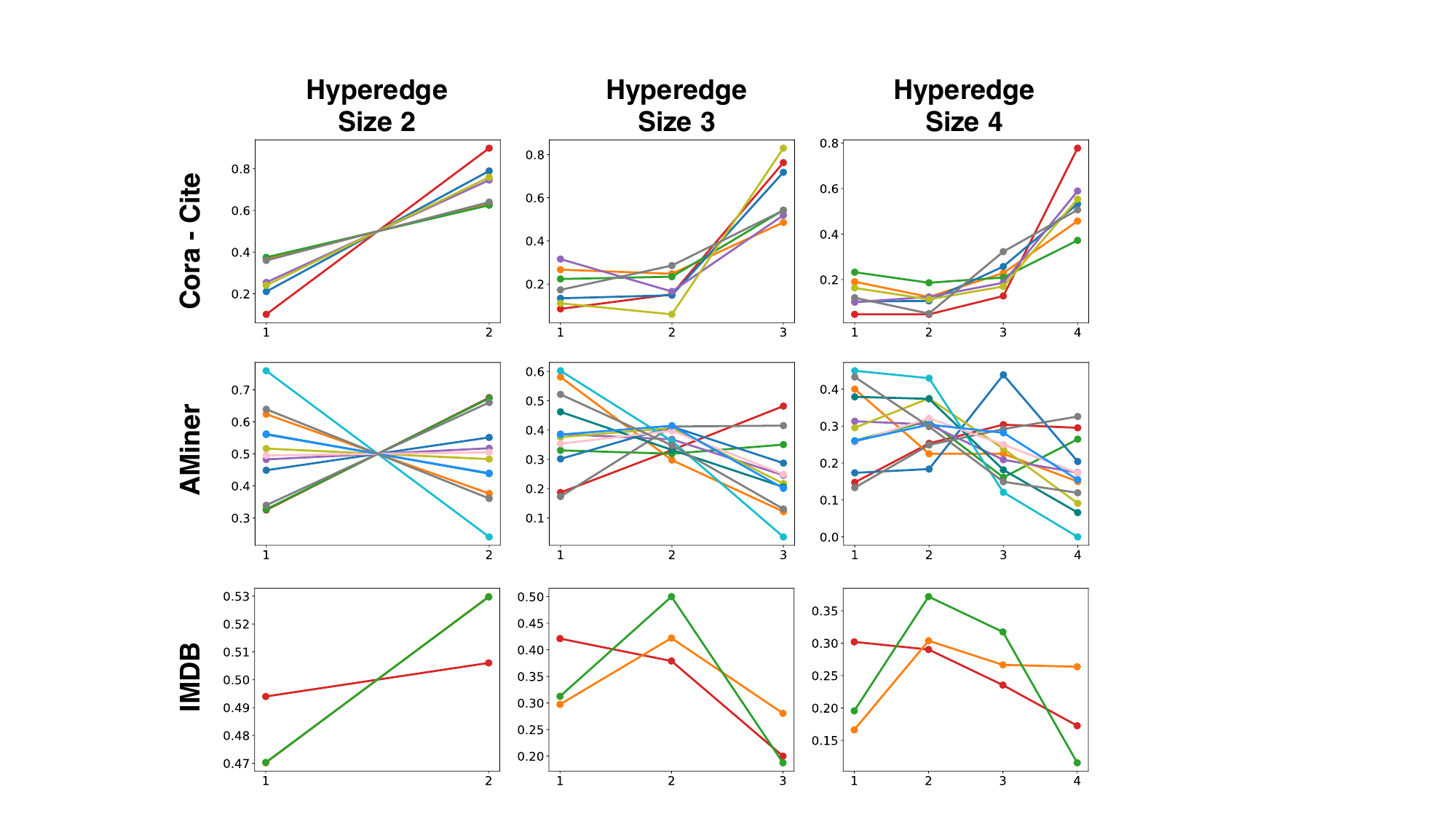}
    \caption{\label{fig:homophily} 
    Homophily analysis of three benchmark datasets: Cora-Cite, AMiner, and IMDB. 
    Each color represents the class (label) of nodes. 
    An upward trend in a class indicates homophily, whereas a downward trend indicates non-homophily.
    Cora-Cite demonstrates homophily, IMDB shows non-homophily, and AMiner falls in between, with some classes exhibiting homophily and others not.}
\end{figure*}

For the homophily analysis, we utilize the method suggested by~\citet{veldt2023combinatorial}, which is a visualization-based approach.
The analysis should be conducted for each hyperedge size, with the results for each hyperedge size presented in separate plots.
In these plots, the line color represents the node class. 
A brief interpretation of the resulting plots is as follows:
\begin{itemize}[leftmargin =*]
    \item An upward trend line indicates that nodes of the corresponding class demonstrate higher-order homophilic characteristics within hyperedges of the same size.
    \item A downward trend line indicates that nodes of the corresponding class display higher-order non-homophilic characteristics within hyperedges of the same size.
\end{itemize}
As shown in Figure~\ref{fig:homophily}, the three datasets (Cora-cite, AMiner, and IMDB) exhibit different homophilic patterns.
Specifically, the Cora-Cite dataset demonstrates homophilic tendencies, 
the IMDB dataset shows a non-homophilic pattern, 
and the AMiner dataset falls in between, with some classes being homophilic and others not. 
Given that \method outperforms other methods in these datasets, we can conclude that its efficacy is valid for both homophilic and non-homophilic hypergraphs.

\begin{table}[t!]
\vspace{-3mm}
\caption{Analysis of performance changes under variations in homophilic extent.
The \textcolor{darkgreen}{smallest} drop from the original performance is highlighted in \textcolor{darkgreen}{green}.
NA indicates UniGCNII encoder without pre-training.
As shown, \method is the most robust method against homophily violation among the used hypergraph SSL methods. 
} \label{tab:homoexp}
    \small
    \centering
    \scalebox{1.0}{
        \centering
        \begin{tabular}{c | c | c | c c c  c }
            \toprule
            {Dataset} & {Method} & {Original} & {$T = 50$} & {$T = 100$} & {$T = 150$} & {$T = 200$} \\
            \midrule
            \midrule
            \multirow{4}{*}{Cora}
            & NA & 48.5 (-) & 46.7 (-1.8) & 44.7 (-3.8) & 43.5 (-5.0) & 40.9 (-7.6) \\
            
            & TriCL & 60.2 (-) & 59.1 (-1.1) & 56.1 (-4.1) & 55.7 (-4.5) & 53.6 (-6.5) \\
            
            & HyperGCL & 60.3 (-) & 57.4 (-2.9) & 53.7 (-6.6) & 52.0 (-8.3) & 50.0 (-10.3) \\

            \cmidrule{2-7}
            
            & \method & 62.3 (-) & \best 61.7 (-0.6) \best & 59.2 (-3.1) \best  & \best 58.0 (-4.3) & \best 56.7 (-5.6)\\
            \midrule 
            \midrule

            \multirow{4}{*}{Citeseer}
            & NA & 44.2 (-) & 41.4 (-2.8) & 39.0 (-5.2) & 35.9 (-8.3) & 33.8 (-10.4) \\
            
            & TriCL & 51.7 (-)  & 51.1 (-0.6) & 47.5 (-4.2) & 45.2 (-6.5) & 41.8 (-9.9)\\
            
            & HyperGCL & 47.0 (-) & 45.8 (-1.2) & 43.3 (-3.7)& 40.9 (-6.1) & 37.7 (-9.3)\\

            \cmidrule{2-7}
            
            & \method & 56.7 (-) & \best 56.5 (-0.2) & \best 54.2 (-2.5) & \best 50.8 (-5.9) & \best 48.6 (-8.1)\\
            \bottomrule
        \end{tabular}
        }
        \vspace{3mm}
\end{table}

\textbf{Semi-synthetic hypergraphs.} 
We analyze how the performance of \method varies as the homophilic extent of hypergraphs varies.
To this end, we corrupt the hypergraph topology of two homophilic hypergraph datasets: Cora and Citeseer.
Specifically, we utilize the node-swapping method. 
Details are provided in Algorithm~\ref{alg:nodeshuffle}.
Note that the more we swap nodes in a hypergraph, the more the hypergraph gets non-homophilic.

As shown in Table~\ref{tab:homoexp}, \method is the most robust among the hypergraph SSL methods evaluated, exhibiting the smallest performance drop in scenarios where homophily is violated.

\begin{algorithm*}[t!]
\caption{Node swapping algorithm}\label{alg:nodeshuffle}
\hspace*{\algorithmicindent} \textbf{Input:} A set of hyperedges $\calE$, Number of iterations $T$\\
\hspace*{\algorithmicindent} \textbf{Output:} Shuffled hyperedges $\calE'$
\begin{algorithmic}[1]
\State $\calE'\leftarrow \calE$
\For{$k \gets 1$ to $T$}
    \State Sample two hyperedges $e'_{i}$ and $e'_{j}$ from $\calE'$ uniformly at random
    \State Sample a node $v'_{i}$ from $e'_{i}$ and a node $v'_{j}$ from $e'_{j}$ uniformly at random
    \State $e''_{i} \leftarrow (e'_{i} \setminus \{v'_{i}\}) \cup \{v'_{j}\}$
    \State $e''_{j} \leftarrow (e'_{j} \setminus \{v'_{j}\}) \cup \{v'_{i}\}$
    \State $\calE' \leftarrow (\calE'\setminus \{e'_{i}\} \setminus \{e'_{j}\})  \cup \{e''_{i}\} \cup \{e''_{j}\}$
\EndFor
\nonumber \textbf{Return:} Swapped hyperedges $\calE'$
\end{algorithmic}
\end{algorithm*}

\subsection{Additional data splits}\label{sec:split}
In this section, we investigate the efficacy of \method in various node-label-split settings.
To this end, we sample a certain number of nodes for each class as training nodes and validation nodes.
Specifically, we utilize the following two settings: 
\begin{itemize}[leftmargin=*]
    \item For each node class, use 5 nodes/5 nodes/the rest as train/validation/test nodes, respectively.
    \item For each node class, use 10 nodes/10 nodes/the rest as train/validation/test nodes, respectively.
\end{itemize}
Other experimental settings are the same as that of the fine-tuning experiments (Section~\ref{sec:exp1}).
As shown in Table~\ref{tab:splitexp}, \method outperforms other hypergraph SSL methods in all the settings. 
Thus, we demonstrate that the efficacy of \method is not restricted to a particular label split setting.

\begin{table}[t!]
\vspace{-3mm}
\caption{Results in {the} various label-split settings under {the} \textbf{fine-tuning protocol}.
The \textcolor{darkgreen}{best} and \textcolor{darkyellow}{second-best} performances are colored \textcolor{darkgreen}{green} and \textcolor{darkyellow}{yellow}, respectively.
NA indicates UniGCNII encoder without pre-training.
As shown, \method~outperforms other hypergraph SSL methods in all the settings. 
} \label{tab:splitexp}
    \small
    \centering
    \scalebox{1.0}{
        \centering
        \begin{tabular}{c | c | c  c c c  c }
            \toprule
            {} & {Method} & {Citeseer} & {Cora} & {Pubmed} & {Cora-CA} & {DBLP-P} \\
            \midrule
            \midrule
            \multirow{4}{*}{5 nodes}
            & NA & 53.7 ({\std 6.8}) & 63.7 ({\std 3.8}) & 71.6 ({\std 4.6}) & 60.7 ({\std 4.9}) & 77.8 ({\std 4.7}) \\
            
            & TriCL & 60.1 ({\std 5.2}) \secb & \secb  71.0 ({\std 3.1}) & 71.9 ({\std 3.8}) & 67.6 ({\std 2.9})\secb  & 80.5 ({\std 2.5}) \secb \\
            
            & HyperGCL & 57.0 ({\std 5.6}) & 70.3 ({\std 2.9}) & 72.2 ({\std 4.4})\secb  & 61.5 ({\std 4.4}) & 79.5 ({\std 3.7}) \\

            \cmidrule{2-7}
            
            & \method & \best 63.1 ({\std 4.6}) & \best 72.0 ({\std 2.4}) \best & 72.9 ({\std 3.2})\best  & \best 67.8 ({\std 2.4}) & \best 81.5 ({\std 2.4})\\
            \midrule 
            \midrule

            \multirow{4}{*}{10 nodes}
            & NA & 62.7 ({\std 3.8}) & 72.8 ({\std 2.2}) & 73.7 ({\std 3.0}) & 67.7 ({\std 3.0}) & 82.4 ({\std 1.9}) \\
            
            & TriCL & 66.3 ({\std 2.6}) \secb & 75.7 ({\std 2.3}) \secb & \secb 
 74.5 ({\std 3.1}) & 69.7 ({\std 2.2}) \secb & 83.6 ({\std 1.9})  \\
            
            & HyperGCL & 64.4 ({\std 3.9}) & 74.9 ({\std 2.1}) & 74.3 ({\std 3.1}) & 69.0 ({\std 3.4}) & \secb 83.6 ({\std 1.6}) \\

            \cmidrule{2-7}
            
            & \method & 67.7 ({\std 2.6}) \best & 75.8 ({\std 7.8}) \best & 74.9 ({\std 2.7}) \best& 70.9 ({\std 1.7}) \best & 84.6 ({\std 1.5}) \best \\
            \bottomrule
        \end{tabular}
        }
        \vspace{3mm}
\end{table}

\subsection{Heterogenous hypergraph experiment}\label{sec:hetero}
In this section, we investigate the efficacy of \method as a pre-training strategy on the heterogeneous hypergraph dataset, which contains hyperedges of various semantics.
To this end, we utilize the ACM dataset, where nodes correspond to publications.
There are two types of hyperedges: authorship and subject relations.
\begin{itemize}[leftmargin=*]
    \item Authorship: Publications authored by the same person are represented as a hyperedge.
    \item Subject relations: Publications sharing a subject are represented as a hyperedge.
\end{itemize}
Other experimental settings are the same as that of the fine-tuning experiments (Section~\ref{sec:exp1}).
As shown in Table~\ref{tab:heterog}, \method outperforms other hypergraph SSL methods in the ACM dataset, demonstrating its efficacy beyond homogeneous hypergraphs.
\begin{table}[t!]
\vspace{-3mm}
\caption{Results in {the} ACM dataset under {the} \textbf{fine-tuning protocol}. 
The \textcolor{darkgreen}{best} performances are {colored} \textcolor{darkgreen}{green}.
Among the four methods, \method shows the best performance.} 
    \label{tab:heterog}
    \small
    \centering
    \scalebox{1.0}{
        \centering
        \begin{tabular}{c | c  c c | c}
            \toprule
            {Method} & {UniGCNII} & {TriCL} & {HyperGCL} & {\method}\\
            \midrule
            \midrule
            Accuracy & 40.0 ({\std 3.1}) & 40.3 ({\std 2.4}) & 40.8 ({\std 3.4}) & 41.7 ({\std 2.6}) \best \\
            \bottomrule
        \end{tabular}
        }
        \vspace{3mm}
\end{table}

\color{black}
}

\section{Potential Applications of Hyperedge Filling}\label{sec:potapp}
{
    
In this section, we provide two potential applications of the hyperedge filling task.
\begin{itemize}[leftmargin=*]
    \item \textbf{Email recipient recommendation}:
    Given recipients of an email, recommend a user likely to be added as a recipient of the email.
    Nodes in a hypergraph indicate {users,} and each hyperedge indicates an email, consisting of a sender, recipients, and CCs.
    \item \textbf{Item recommendation}:
    Given {items in a shopping cart}
    recommend an item to be co-purchased with {the} items.
    Nodes in a hypergraph indicate {items,} and each hyperedge contains a set of co-purchased items.
\end{itemize}
}

\end{document}